\theoremstyle{definition}
\newtheorem{definition}{Definition}
\theoremstyle{remark}
\newtheorem{remark}{Remark}
\theoremstyle{plain}
\newtheorem{theorem}{Theorem}
\newtheorem{lemma}{Lemma}
\newtheorem{corollary}{Corollary}
\newtheorem{assumption}{Assumption}
\newtheorem{proposition}{Proposition}
\newcommand{\N}{\mathbb{N}}
\newcommand{\Prob}{\mathbb{P}}
\newcommand{\E}{\mathbb{E}}
\title{On the Probability of First Success in Differential Evolution: Hazard Identities and Tail Bounds}
\date{June, 2025}
\author{%
  \href{https://orcid.org/0009-0001-7212-9167}{\includegraphics[scale=0.06]{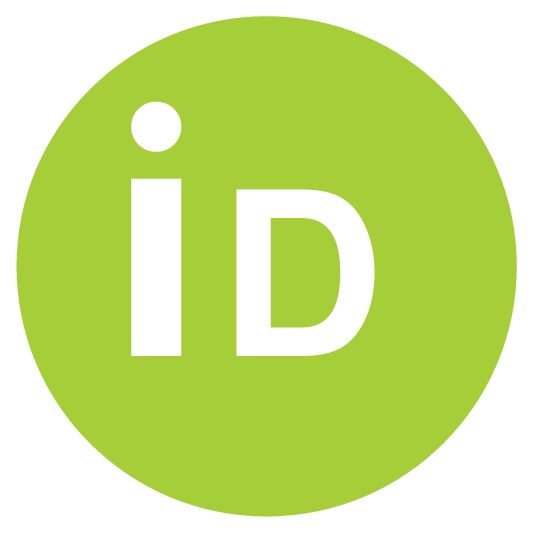}\hspace{1mm}Dimitar Nedanovski}\\
  Faculty of Mathematics and Informatics\\
  Sofia University St. Kliment Ohridski\\
  James Baucher Blvd., 1164 Sofia, Bulgaria\\
  \texttt{dnedanovski@gmail.com}
  \And
  \href{https://orcid.org/0000-0002-1444-8522}{\includegraphics[scale=0.06]{orcid.eps}\hspace{1mm}Svetoslav Nenov}\\
  Department of Mathematics\\
  University of Chemical Technology and Metallurgy\\
  Sofia, Bulgaria\\
  \texttt{nenov@uctm.edu}
  \And
  Dimitar Pilev\\
  Department of Informatics\\
  University of Chemical Technology and Metallurgy\\
  Sofia, Bulgaria\\
  \texttt{pilev@uctm.edu}
}
\begin{document}
\maketitle

\begin{abstract}
We study first-hitting times in Differential Evolution (DE) through a conditional hazard framework.
Instead of analyzing convergence via Markov-chain transition kernels or drift arguments, we express
the survival probability of a measurable target set $A$ as a product of conditional first-hit
probabilities (hazards) $p_t=\Prob(E_t\mid\mathcal F_{t-1})$.
This yields distribution-free identities for survival and explicit tail bounds whenever
deterministic lower bounds on the hazard hold on the survival event.

For the L-SHADE algorithm with current-to-$p$best/1 mutation, we construct a checkable
algorithmic witness event $\mathcal L_t$ under which the conditional hazard admits an explicit
lower bound depending only on sampling rules, population size, and crossover statistics.
This separates theoretical constants from empirical event frequencies and explains why
worst-case constant-hazard bounds are typically conservative.

We complement the theory with a Kaplan--Meier survival analysis on the CEC2017 benchmark suite.
Across functions and budgets, we identify three distinct empirical regimes:
(i) strongly clustered success, where hitting times concentrate in short bursts;
(ii) approximately geometric tails, where a constant-hazard model is accurate; and
(iii) intractable cases with no observed hits within the evaluation horizon.
The results show that while constant-hazard bounds provide valid tail envelopes, the practical
behavior of L-SHADE is governed by burst-like transitions rather than homogeneous per-generation
success probabilities.
\end{abstract}

\keywords{Differential Evolution, First-Hitting Time, Conditional Hazard, Success Probability, Tail Bounds, Plateau Functions, Stochastic Optimization}

%\tableofcontents

\section{Introduction}

Differential Evolution (DE) is a widely used population-based algorithm for continuous optimization. Adaptive variants such as L-SHADE have demonstrated strong empirical performance across diverse problem classes, yet theoretical understanding of their convergence behavior remains limited. A key difficulty is that these algorithms are inherently non-stationary: the population size, parameter distributions, and archive contents all evolve dynamically during optimization, making standard analytical tools difficult to apply.

In this paper we study L-SHADE in terms of {\it first-hitting times}.  Let
$$
A_\varepsilon = \{x\in [l,u]^d: f(x)\le f^\star + \varepsilon\},\qquad l<u,\ \varepsilon>0
$$
denote $\varepsilon$-sublevel set of the global minimum $f^\star$.

Let $\tau_A$ denote the first generation at which the population contains a point in $A$.
When runs are terminated by a budget of $B$ function evaluations (NFEVs), we also consider the evaluation-indexed
hitting time $\tau_A^{\mathrm{eval}}$ and the induced maximal generation $n(B)$ determined by the (possibly
time-varying) population size.

Our starting point is an exact identity for the survival probability
$\mathbb{P}(\tau_A>n)$ in terms of \emph{conditional first-hit hazards}.
Let $E_t$ denote the event that the first hit of $A$ occurs at generation $t$, and let
$\mathcal{F}_{t-1}$ be the post-generation filtration encoding the full algorithmic history up to time $t-1$.
Define the conditional first-hit probability (hazard)
$$
p_t = \mathbb{P}(E_t \mid \mathcal{F}_{t-1}), \qquad t\ge 1,
$$
and the survival event $H_{t-1}=\{\tau_A>t-1\}$.
Without any independence or stationarity assumptions, the survival function admits a product representation
in terms of the survival-only hazards $h_t=\mathbb{P}(E_t\mid H_{t-1})$.
This yields distribution-free tail bounds whenever one can lower bound $h_t$ on survival.

To turn this viewpoint into a usable bound for L-SHADE, we construct a measurable \emph{witness event} $L_t$
(depending only on the state at time $t-1$) that certifies a favorable configuration for
current-to-$p$best/1 with binomial crossover.  On $H_{t-1}\cap L_t$ we show that the conditional hazard admits a
deterministic lower bound of the form
$$
p_t \ge a_t > 0,
$$
where $a_t$ is explicit and depends only on (i) discrete sampling rules (index choices and memory slot selection),
(ii) population/archive sizes, and (iii) crossover tail probabilities.
Crucially, this separates the one-step success probability into:
\begin{enumerate}
\item an \emph{explicit theoretical factor} $a_t>0$ (typically small and conservative), and
\item an \emph{empirical frequency factor} describing how often the witness configuration occurs along the run.
\end{enumerate}
Writing
$$
\gamma_t = \mathbb{P}(L_t \mid H_{t-1}),
$$
we obtain a survival-only hazard lower bound of the form
$$
h_t \;=\; \mathbb{P}(E_t \mid H_{t-1})
\;\ge\; a_t\,\gamma_t,
$$
and consequently an explicit survival envelope
$$
\mathbb{P}(\tau_A>n)
\;\le\; \mathbb{P}(E_0^c)\prod_{t=1}^n(1-a_t\gamma_t)
\;\le\; \mathbb{P}(E_0^c)\exp\!\Bigl(-\sum_{t=1}^n a_t\gamma_t\Bigr).
$$
This ``constant $\times$ frequency'' decomposition makes precise why worst-case constant-hazard bounds can be
valid yet pessimistic: even if $a_t$ is tiny, the cumulative intensity $\sum_{t\le n} a_t\gamma_t$ can become large
when $L_t$ occurs frequently (often in short clusters).

\paragraph{A three-step scheme for understanding success within a finite budget.}
The paper is organized around the following scheme, which directly targets success within a finite evaluation budget:
\begin{enumerate}
\item \textbf{Empirical success-by-budget (good functions).}
For a fixed budget $B$ NFEVs we study the success probability
$\mathbb{P}(\tau_{A_\varepsilon}^{\mathrm{eval}}\le B)$.
Functions for which this probability is close to $1$ are ``good'' at budget $B$ in the practical sense that
a finite NFEV budget suffices with high probability.

\item \textbf{Witness frequency on survival.}
For each generation $t$ we quantify (and, in principle, estimate from repeated runs) the survival-conditional
frequency $\gamma_t=\mathbb{P}(L_t\mid H_{t-1})$, i.e., how often L-SHADE enters a configuration where the
one-step hazard admits an explicit certified lower bound.

\item \textbf{Certification via the hazard product.}
Combining $h_t\ge a_t\gamma_t$ with the exact survival product yields a certified tail bound for
$\mathbb{P}(\tau_{A_\varepsilon}>n)$ and, via $n=n(B)$, for $\mathbb{P}(\tau_{A_\varepsilon}^{\mathrm{eval}}>B)$.
Thus, for ``good'' functions, high success probability within a finite budget is explained by the build-up of
the cumulative witness intensity $\sum_{t\le n(B)} a_t\gamma_t$.
\end{enumerate}

\paragraph{Empirical survival regimes.}
To connect the theory with practice we perform a Kaplan--Meier survival analysis on the CEC2017 benchmark suite
in dimension $d=10$ under standard L-SHADE parameters and two evaluation budgets.
The survival curves reveal three distinct empirical regimes: (i) strongly clustered success, where the survival
function collapses in short bursts; (ii) approximately geometric tails, where a constant-hazard model is accurate;
and (iii) intractable cases with no observed hits within the evaluation horizon.
These results support the central message: mathematically valid constant-hazard envelopes are often conservative
because L-SHADE's practical success is governed by configuration-dependent transitions rather than homogeneous
per-generation progress.

\paragraph{Contributions.}
Our contributions are: (i) a hazard-based formulation of first-hitting times for DE yielding exact survival identities
and distribution-free tail bounds; (ii) a concrete witness event $L_t$ for L-SHADE/current-to-$p$best/1 that yields an
explicit hazard floor on survival; and (iii) an empirical survival study (Kaplan--Meier and geometric envelopes) that
characterizes clustered versus near-geometric success regimes and explains the observed conservatism of worst-case
constant-hazard bounds.

\paragraph{Supplementary Material.}
Supplementary Material is publically available at
\url{https://github.com/snenovgmailcom/lshade\_hazard\_project}
online and contains additional empirical analyses, including cluster geometry at hitting times and Kaplan--Meier validation of hazard bounds.

\section{Success Events and Probability Space}\label{sec:probspace}

The constructions in this section introduce the notations. One may find complete standard constructions of this type for example in \cite[Section~2.1.4, Section~4.2]{Durrett2019}, \cite{Shalizi2006IonescuTulcea}, \cite[Section~13.1]{Dudley1989}, \cite{Bayer2011AdvancedProb}, \cite[Section 33]{Billingsley1986}.

Let $[l,u]^d$ be the search cuboid, and let
$\left\{P^{(t)}\right\}_{t\ge 0}$ denote the population process produced by the DE variant under study,
where
$$
P^{(t)}=\left\{{\boldsymbol x}^{(t)}_1,\dots,{\boldsymbol x}^{(t)}_{\kappa_t}\right\}\subset [l,u]^{\kappa_t}
$$
and $\kappa_t$ is the (possibly time-varying) population size.

We model the algorithm on a filtered probability space
$\left(\Omega,\mathcal{F},\left\{\mathcal{F}_t\right\}_{t\ge 0},\Prob\right)$, where $\mathcal{F}_t$ represents the
information available \emph{after completing generation $t$} (including all random draws used
in that generation and all derived state variables such as the population, archive, and parameter memories).
A concrete probability product-space construction supporting full adaptivity is given in
Appendix~\ref{app:probspace}.

Let $S_t$ denote the full algorithmic state after generation $t$ (e.g.\ population, objective values,
archive, parameter memories). We take the natural (post-generation) filtration
$$
\mathcal{F}_t=\sigma\left(S_0,\dots,S_t\right),\qquad t\ge 0.
$$
Fix a measurable target set $A\subset [l,u]^d$ with positive Lebesgue measure (usually $\varepsilon$ neighborhood of the global minimum set of objective function $f$ or corresponding level definition).

Define the disjoint family of {\it first-hit events}
\begin{align*}
E_0 &= \left\{(\exists i)\,x^{(0)}_i\in A\right\}\in\mathcal{F}_0,\\
E_t &= \left\{(\forall s\le t-1)(\forall i)\,x^{(s)}_i\notin A\ \text{and}\ (\exists i)\,x^{(t)}_i\in A\right\}\in\mathcal{F}_t,\qquad t\ge 1,
\end{align*}
and the {\it first-hitting time}
$$
\tau_A = \inf\{t\ge 0:\ E_t\ \text{occurs}\}
        = \inf\{t\ge 0:\ (\exists i)\,x^{(t)}_i\in A\}.
$$
Let the survival events be
$$
H_t = \{\tau_A>t\} = \bigcap_{s=0}^t E_s^c \in \mathcal{F}_t,\qquad t\ge 0.
$$

We define the {\it conditional first-hit probabilities (hazards)} by
$$
p_0=\Prob(E_0), \qquad p_t = \Prob(E_t\mid\mathcal{F}_{t-1}),\quad t\ge 1
$$
and {\it survival-only hazard} $h_t = \Prob(E_t \mid H_{t-1})$, $t\geq 1$. Obviously $H_{t-1}\in \mathcal{F}_{t-1}$, and
$$h_t = \E[p_t\mid H_{t-1}]= \E[\Prob(E_t\mid\mathcal{F}_{t-1})\mid H_{t-1}] = \E[p_t\mid H_{t-1}].$$

Also since $E_t = H_{t-1}\cap\left\{(\exists i)\,x^{(t)}_i\in A\right\}$, then
\begin{equation}\label{eq:hazard-on-survival}
p_t = \mathbf{1}_{H_{t-1}}\,
\Prob\left((\exists i)\,x^{(t)}_i\in A \,\big|\, \mathcal{F}_{t-1}\right),
\qquad t\ge 1,
\end{equation}
so $p_t=0$ automatically off the survival event.

\section{Conditional Success}\label{sec:conditional-success}

All identities and bounds in current section hold for any arbitrary measurable target set $A$.
%Obviously $\{E_t\}$ are pairwise disjoint family of sets. Moreover we have the following result.

\begin{lemma}{(see \cite{CoxOakes1984})}\label{lem:failure-identity}
For every $n \ge 0$,
\begin{gather}
\mathbf{1}_{\{\tau_A > n\}} = \prod_{t=0}^n \mathbf{1}_{E_t^c}, \quad
\mathbb{P}(\tau_A > n) = \mathbb{E}\left[\prod_{t=0}^n \mathbf{1}_{E_t^c}\right].
\end{gather}
Moreover, for every $n\ge 1$
\begin{align}
\mathbb{P}(\tau_A > n \mid \mathcal{F}_{n-1}) =& \mathbf{1}_{H_{n-1}}(1 - p_n),\label{eq:one-step}\\
\label{eq:recursion}
\mathbb{P}(\tau_A > n) =& \mathbb{E}\left[\mathbf{1}_{H_{n-1}}(1 - p_n)\right], \quad n\geq 1.
\end{align}
\end{lemma}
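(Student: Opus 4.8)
The plan is to prove the three claims in sequence, starting from the definition of the first-hitting time and working through the conditioning structure.

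First I would establish the indicator identity $\mathbf{1}_{\{\tau_A > n\}} = \prod_{t=0}^n \mathbf{1}_{E_t^c}$. Since the events $E_0,\dots,E_t,\dots$ are disjoint (the first hit happens at exactly one generation), and $\{\tau_A > n\}$ is precisely the event that no hit has occurred by time $n$, we have $\{\tau_A > n\} = \bigcap_{t=0}^n E_t^c = H_n$. The indicator of an intersection is the product of indicators, which gives the first identity immediately. Taking expectations and using $\mathbb{P}(X) = \mathbb{E}[\mathbf{1}_X]$ yields the second identity $\mathbb{P}(\tau_A > n) = \mathbb{E}\bigl[\prod_{t=0}^n \mathbf{1}_{E_t^c}\bigr]$.

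Next I would prove the one-step conditional identity \eqref{eq:one-step}. The key observation is the nesting $\{\tau_A > n\} = H_{n-1} \cap E_n^c$, since surviving past $n$ means surviving past $n-1$ and also not hitting at generation $n$. Because $H_{n-1} \in \mathcal{F}_{n-1}$, I can pull it out of the conditional expectation: $\mathbb{P}(\tau_A > n \mid \mathcal{F}_{n-1}) = \mathbb{E}[\mathbf{1}_{H_{n-1}}\mathbf{1}_{E_n^c} \mid \mathcal{F}_{n-1}] = \mathbf{1}_{H_{n-1}}\,\mathbb{E}[\mathbf{1}_{E_n^c}\mid\mathcal{F}_{n-1}]$ by the pull-out (measurability) property of conditional expectation. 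Then $\mathbb{E}[\mathbf{1}_{E_n^c}\mid\mathcal{F}_{n-1}] = 1 - \mathbb{P}(E_n\mid\mathcal{F}_{n-1}) = 1 - p_n$, which closes the identity. A subtle point worth flagging: the product $\mathbf{1}_{H_{n-1}}(1-p_n)$ is consistent whether or not we multiply the second factor by $\mathbf{1}_{H_{n-1}}$, because by \eqref{eq:hazard-on-survival} we already have $p_n = 0$ off $H_{n-1}$, so $(1-p_n) = 1$ there and no ambiguity arises.

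Finally, identity \eqref{eq:recursion} follows from \eqref{eq:one-step} by the tower property: $\mathbb{P}(\tau_A > n) = \mathbb{E}\bigl[\mathbb{P}(\tau_A > n \mid \mathcal{F}_{n-1})\bigr] = \mathbb{E}\bigl[\mathbf{1}_{H_{n-1}}(1-p_n)\bigr]$. I do not anticipate a genuine obstacle here; the whole statement is a bookkeeping exercise in conditional expectation, and the only place demanding care is the measurability bookkeeping — confirming $H_{n-1} \in \mathcal{F}_{n-1}$ (already noted in the excerpt) and that $E_n^c$ conditions correctly. The crux is recognizing the decomposition $\{\tau_A > n\} = H_{n-1}\cap E_n^c$ and applying the pull-out property cleanly.
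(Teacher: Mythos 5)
Your proposal is correct and follows essentially the same route as the paper's own proof: the decomposition $\{\tau_A>n\}=\bigcap_{t=0}^n E_t^c$, the identity $\{\tau_A>n\}=H_{n-1}\cap E_n^c$ with the pull-out property for the $\mathcal{F}_{n-1}$-measurable factor $\mathbf{1}_{H_{n-1}}$, and taking expectations to get \eqref{eq:recursion}. Your additional remark that $p_n=0$ off $H_{n-1}$ (via \eqref{eq:hazard-on-survival}) is a harmless refinement the paper states separately rather than inside the proof.
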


\begin{lemma}(see \cite{CoxOakes1984})\label{lem:upper-product}
%Recall Assume $p_0=\mathbf 1_{E_0}$ and $p_t=\Prob(E_t\mid\mathcal F_{t-1})$ for $t\ge1$.
For every $n\ge0$,
$$
\Prob(\tau_A>n)\le \E\Big[\prod_{t=0}^n (1-p_t)\Big].
$$
\end{lemma}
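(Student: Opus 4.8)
The plan is to avoid any pointwise comparison between $\mathbf{1}_{H_n}=\prod_{t=0}^n\mathbf{1}_{E_t^c}$ and $\prod_{t=0}^n(1-p_t)$: these run in opposite directions on $H_n$ versus $H_n^c$ (on survival $\prod_t(1-p_t)\in[0,1]=\mathbf{1}_{H_n}$, so the product is smaller, whereas on $H_n^c$ the product is nonnegative while $\mathbf{1}_{H_n}=0$), so the assertion is genuinely an expectation-level statement. Instead I would collapse both quantities to sums of one-step hazard masses and close the gap with an elementary product inequality.

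First I would use the disjointness of the first-hit family $\{E_t\}_{t\ge0}$. Since $H_n=\bigcap_{t=0}^nE_t^c$ and the $E_t$ are pairwise disjoint,
$$
\Prob(\tau_A>n)=1-\Prob\Bigl(\bigcup_{t=0}^nE_t\Bigr)=1-\sum_{t=0}^n\Prob(E_t).
$$
Next I would identify each mass with an expected hazard. For $t\ge1$ the tower property applied to $p_t=\Prob(E_t\mid\mathcal F_{t-1})$ gives $\E[p_t]=\E\bigl[\E[\mathbf 1_{E_t}\mid\mathcal F_{t-1}]\bigr]=\Prob(E_t)$, and for $t=0$ the definition $p_0=\Prob(E_0)$ gives $\E[p_0]=\Prob(E_0)$. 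Hence
$$
\Prob(\tau_A>n)=1-\sum_{t=0}^n\E[p_t].
$$

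The remaining ingredient is the Weierstrass product inequality: for reals $x_0,\dots,x_n\in[0,1]$ one has $\prod_{t=0}^n(1-x_t)\ge1-\sum_{t=0}^nx_t$, which I would prove in one line by induction on $n$. Since each $p_t$ is a (conditional) probability, $p_t\in[0,1]$ almost surely, so applying this pointwise with $x_t=p_t(\omega)$ yields $\prod_{t=0}^n(1-p_t)\ge1-\sum_{t=0}^np_t$ a.s. Taking expectations and substituting the identity above gives
$$
\E\Bigl[\prod_{t=0}^n(1-p_t)\Bigr]\ge1-\sum_{t=0}^n\E[p_t]=\Prob(\tau_A>n),
$$
which is exactly the claim.

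I do not expect a serious obstacle here; the only conceptual point worth flagging is why equality fails. The gap $\E\bigl[\prod_t(1-p_t)\bigr]-\Prob(\tau_A>n)$ equals the expected Weierstrass slack $\E\bigl[\prod_t(1-p_t)-(1-\sum_tp_t)\bigr]\ge0$, the nonnegative higher-order inclusion–exclusion remainder, which is the probabilistic shadow of the failure of the naive survival-product identity under adaptivity. An alternative, more hands-on route interpolates via the hybrid products $\Phi_k=\bigl(\prod_{t\le k}(1-p_t)\bigr)\bigl(\prod_{t>k}\mathbf 1_{E_t^c}\bigr)$ and shows $\E[\Phi_{k-1}]\le\E[\Phi_k]$ stepwise, using that $\mathbf 1_{E_k}$ forces every later first-hit indicator to $1$; but the sum-based argument is cleaner and I would present it as the main proof.
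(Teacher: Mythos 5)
Your proof is correct, and it takes a genuinely different route from the paper's. The paper proves the lemma by a recursive, supermartingale-style comparison: it sets $W_n=\prod_{t=0}^n\mathbf 1_{E_t^c}$, $Q_n=\prod_{t=0}^n(1-p_t)$, $D_n=Q_n-W_n$, shows $\E[D_n\mid\mathcal F_{n-1}]=D_{n-1}(1-p_n)$, and then establishes the pointwise inequality $D_{n-1}(1-p_n)\ge D_{n-1}$ by a case split on the survival event --- crucially using that $p_n=0$ a.s.\ off $H_{n-1}$ (the identity \eqref{eq:hazard-on-survival}, which follows from $E_n\subset H_{n-1}$) --- and telescopes from $\E[D_0]=0$. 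Your argument bypasses the entire conditional recursion: pairwise disjointness of the first-hit events gives $\Prob(\tau_A>n)=1-\sum_{t=0}^n\Prob(E_t)$ exactly, the tower property converts each mass to $\E[p_t]$ (and $p_0=\Prob(E_0)$ by definition in Section~\ref{sec:probspace}, so the $t=0$ term is also fine), and the deterministic Weierstrass inequality $\prod_{t=0}^n(1-x_t)\ge 1-\sum_{t=0}^n x_t$ for $x_t\in[0,1]$ closes the gap after one application of monotonicity of expectation. Notably, your proof needs strictly less structure: it uses neither the $\mathcal F_{t-1}$-measurability of the factors beyond a single total expectation per term, nor the vanishing of $p_t$ off survival, so it would remain valid for any $[0,1]$-valued random variables $p_t$ satisfying $\E[p_t]=\Prob(E_t)$ with $\{E_t\}$ pairwise disjoint. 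It also yields the cleanest description of the slack, namely the expected inclusion--exclusion remainder $\E[\prod_t(1-p_t)-(1-\sum_t p_t)]\ge0$, which accurately explains why the naive product identity fails under adaptivity. What the paper's recursive approach buys in exchange is the one-step inequality $\Prob(\tau_A>t)\le(1-a_t)\Prob(\tau_A>t-1)$-type machinery that is reused directly in Theorem~\ref{thm:hazard-bounds} and Proposition~\ref{prop:tail-via-L}, where bounds must be applied conditionally generation by generation; your global argument proves this lemma more economically but does not by itself produce that reusable recursion.
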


The exponential envelope in Theorem~\ref{thm:hazard-bounds} is a classical
discrete-time survival bound: if conditional hazards $\gamma_t$ are given,
then $\Prob(\tau>n)\le\prod_{t=0}^{n-1}(1-\gamma_t)\le
\exp(-\sum_{t=0}^{n-1}\gamma_t)$ follows from standard survival-analysis
arguments.
\begin{theorem}\label{thm:hazard-bounds}
(i) Assume there exists a sequence of real numbers $\{a_t\}_{t\ge 1}\subset[0,1]$ such that for all $t\ge 1$,
$$
p_t \ge a_t \quad \text{on } H_{t-1}\ \text{almost surely}.
$$
Then for every $n\ge 1$,
$$
\Prob(\tau_A>n)\le \Prob(E_0^c)\prod_{t=1}^n (1-a_t).
$$
In particular, if $\sum_{t=1}^\infty a_t=\infty$, then $\Prob(\tau_A<\infty)=1$.
\smallskip

(ii) Assume there exists a sequence of real numbers $\{b_t\}_{t\ge 1}\subset[0,1]$ such that for all $t\ge 1$,
$$
p_t \le b_t \quad \text{on } H_{t-1}\ \text{almost surely}.
$$
Then for every $n\ge 1$,
$$
\Prob(\tau_A>n)\ge \Prob(E_0^c)\prod_{t=1}^n (1-b_t).
$$
Moreover, if $b_t\in[0,1)$ for all $t$ and $\sum_{t=1}^\infty b_t<\infty$, then
$$
\Prob(\tau_A=\infty)
=\lim_{n\to\infty}\Prob(\tau_A>n)
\;\ge\;
\Prob(E_0^c)\prod_{t=1}^\infty (1-b_t)
\;>\;0.
$$
\end{theorem}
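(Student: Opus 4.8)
The plan is to prove both parts by a single induction on $n$, driven entirely by the one-step recursion \eqref{eq:recursion} of Lemma~\ref{lem:failure-identity}, namely $\Prob(\tau_A>n)=\E[\mathbf 1_{H_{n-1}}(1-p_n)]$. The governing observation is that the hypotheses $p_t\ge a_t$ and $p_t\le b_t$ are imposed only \emph{on} the survival event $H_{t-1}$, and this is exactly where $p_t$ is ever felt: by \eqref{eq:hazard-on-survival} the hazard vanishes off $H_{t-1}$, so at every step the relevant random variable is the product $\mathbf 1_{H_{n-1}}(1-p_n)$, on which the deterministic bound may be substituted. I would take the base case $n=0$, where $H_0=E_0^c$ gives $\Prob(\tau_A>0)=\Prob(E_0^c)$ and the empty product equals $1$, so the claimed inequality holds as an equality; the inductive step then delivers the statement for all $n\ge 1$.

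For part (i), on $H_{n-1}$ the hypothesis $p_n\ge a_n$ yields $1-p_n\le 1-a_n$ almost surely, hence the pointwise inequality $\mathbf 1_{H_{n-1}}(1-p_n)\le (1-a_n)\mathbf 1_{H_{n-1}}$. Taking expectations and pulling out the deterministic factor gives $\Prob(\tau_A>n)\le (1-a_n)\,\Prob(\tau_A>n-1)$, and the induction hypothesis closes the bound to $\Prob(E_0^c)\prod_{t=1}^n(1-a_t)$. The almost-sure-hitting corollary then follows from the elementary envelope $1-a_t\le e^{-a_t}$ (already recorded before the theorem): if $\sum_t a_t=\infty$ the product tends to $0$, so $\Prob(\tau_A>n)\to 0$; since $\{\tau_A>n\}$ decreases to $\{\tau_A=\infty\}$, continuity from above of $\Prob$ gives $\Prob(\tau_A=\infty)=0$, i.e.\ $\Prob(\tau_A<\infty)=1$.

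Part (ii) is the mirror image. On $H_{n-1}$, $p_n\le b_n$ gives $1-p_n\ge 1-b_n\ge 0$, and because the factor $1-b_n$ is nonnegative (here $b_t\in[0,1]$ is used), the inequality direction is preserved when I chain against the induction hypothesis, producing $\Prob(\tau_A>n)\ge \Prob(E_0^c)\prod_{t=1}^n(1-b_t)$. For the positivity statement I again pass to the limit via continuity from above, $\Prob(\tau_A=\infty)=\lim_n\Prob(\tau_A>n)$, and invoke the standard infinite-product criterion: for $b_t\in[0,1)$ with $\sum_t b_t<\infty$ the product $\prod_{t\ge 1}(1-b_t)$ converges to a strictly positive limit (e.g.\ via $-\log(1-b_t)\le b_t/(1-b_t)$, whose tail sum is finite once $b_t\to 0$).

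I expect no genuine obstacle: the whole argument is one induction plus a short asymptotic lemma for each part. The one point demanding care is the survival-event bookkeeping — since the bounds are postulated only on $H_{t-1}$, I must keep $p_t$ attached to $\mathbf 1_{H_{n-1}}$ throughout and never invoke them on the complement (where $p_t=0$ but $a_t$ may be positive). A secondary caveat worth flagging is that the final strict inequality $\prod(1-b_t)>0$ in part (ii) silently requires $\Prob(E_0^c)>0$; if $A$ is already hit at initialization with probability one, the right-hand side collapses and only the trivial $\ge 0$ survives.
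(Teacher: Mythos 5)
Your proof is correct and follows essentially the same route as the paper: both arguments iterate the one-step identity $\Prob(\tau_A>n)=\E[\mathbf 1_{H_{n-1}}(1-p_n)]$ from Lemma~\ref{lem:failure-identity}, substitute the deterministic bound for $p_n$ only on the survival event, and settle the infinite-product positivity in (ii) by a standard logarithmic tail estimate (the paper uses $\log(1-x)\ge -2x$ for $x\in[0,\tfrac12]$ after discarding finitely many terms, while you use $-\log(1-b_t)\le b_t/(1-b_t)$ --- the same argument in effect). Your closing caveat is also accurate: the strict inequality in (ii) tacitly requires $\Prob(E_0^c)>0$, a hypothesis the paper's own proof likewise uses without stating.
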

\begin{comment}
\begin{proof}
By Lemma~\ref{lem:failure-identity}, equality \eqref{eq:one-step}, for every $t\ge 1$,
$$
\Prob(\tau_A>t)=\E\!\left[\mathbf 1_{H_{t-1}}(1-p_t)\right].
$$

(i) On $H_{t-1}$ we have $1-p_t\le 1-a_t$, hence
$$
\Prob(\tau_A>t)
=\E\!\left[\mathbf 1_{H_{t-1}}(1-p_t)\right]
\le (1-a_t)\E[\mathbf 1_{H_{t-1}}]
=(1-a_t)\Prob(\tau_A>t-1).
$$
Iterating from $t=1$ to $t=n$ yields
$$
\Prob(\tau_A>n)\le \Prob(\tau_A>0)\prod_{t=1}^n(1-a_t)
=\Prob(E_0^c)\prod_{t=1}^n(1-a_t).
$$
If $\sum_{t\ge 1}a_t=\infty$, then $\prod_{t=1}^n(1-a_t)\to 0$, so
$\Prob(\tau_A=\infty)=\lim_{n\to\infty}\Prob(\tau_A>n)=0$.
\smallskip

(ii) On $H_{t-1}$ we have $1-p_t\ge 1-b_t$, hence
$$
\Prob(\tau_A>t)
=\E\!\left[\mathbf 1_{H_{t-1}}(1-p_t)\right]
\ge (1-b_t)\E[\mathbf 1_{H_{t-1}}]
=(1-b_t)\Prob(\tau_A>t-1).
$$
Iterating gives the lower product bound.
If $b_t\in[0,1)$ and $\sum_{t\ge 1}b_t<\infty$, then there exists $T$ such that $b_t\le \tfrac12$ for all $t\ge T$,
and using $\log(1-x)\ge -2x$ for $x\in[0,\tfrac12]$,
$$
\prod_{t=1}^\infty (1-b_t)
=\Big(\prod_{t=1}^{T-1}(1-b_t)\Big) 
\exp\Big(\sum_{t=T}^\infty \log(1-b_t)\Big)
\ge
\Big(\prod_{t=1}^{T-1}(1-b_t)\Big)  \exp\Big(-2\sum_{t=T}^\infty b_t\Big)
>0.
$$
Taking $n\to\infty$ in the lower product bound yields $\Prob(\tau_A=\infty)>0$.
\end{proof}
\end{comment}
\begin{corollary}\label{cor:explicit-tails}
If $a_t\equiv a\in(0,1]$, then
$$
\Prob(\tau_A>n)\le \Prob(E_0^c)(1-a)^n,
\qquad
\E[\tau_A]\le \sum_{n=0}^\infty \Prob(\tau_A>n)\le \frac{\Prob(E_0^c)}{a}.
$$
In particular, $\E[\tau_A]\le 1/a$ since $\Prob(E_0^c)\le 1$.
\smallskip

If $a_t = C t^{-\alpha}$ with $C$ and $\alpha \in (0,1)$, then for all $n\ge 1$,
$$
\Prob(\tau_A>n)
\le \Prob(E_0^c)\exp\!\left(-\frac{C}{1-\alpha}\bigl((n+1)^{1-\alpha}-1\bigr)\right)
= \Prob(E_0^c)\exp\!\left(\frac{C}{1-\alpha}\right)\exp\!\left(-\frac{C}{1-\alpha}(n+1)^{1-\alpha}\right).
$$

If $a_t = C t^{-1}$ with $C > 0$, then for all $n\ge 1$,
$$
\Prob(\tau_A>n)\le \Prob(E_0^c)(n+1)^{-C}.
$$
\end{corollary}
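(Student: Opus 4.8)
The plan is to read off all three claims from the product envelope of Theorem~\ref{thm:hazard-bounds}(i), namely $\Prob(\tau_A>n)\le\Prob(E_0^c)\prod_{t=1}^n(1-a_t)$, and then reduce each case to an elementary estimate of the finite product $\prod_{t=1}^n(1-a_t)$. No further probabilistic argument is required; the work is entirely in bounding products and sums of reals.

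For the constant-hazard case $a_t\equiv a$ the product is exactly $(1-a)^n$, which yields the geometric tail $\Prob(\tau_A>n)\le\Prob(E_0^c)(1-a)^n$ (valid also at $n=0$, where $\Prob(\tau_A>0)=\Prob(E_0^c)$). For the mean I would use the tail-sum identity $\E[\tau_A]=\sum_{n=0}^\infty\Prob(\tau_A>n)$, valid for any nonnegative integer-valued random variable, insert the geometric bound, and sum the convergent series $\sum_{n\ge0}(1-a)^n=1/a$ (finite since $a\in(0,1]$), giving $\E[\tau_A]\le\Prob(E_0^c)/a\le 1/a$.

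For the two decaying-hazard cases I would first convert the product to an exponential via $1-x\le e^{-x}$, so that $\prod_{t=1}^n(1-a_t)\le\exp(-\sum_{t=1}^n a_t)$, reducing everything to a \emph{lower} bound on the partial sums. Since $t\mapsto t^{-\alpha}$ and $t\mapsto t^{-1}$ are decreasing, the comparison $f(t)\ge\int_t^{t+1}f(x)\,dx$ summed over $t=1,\dots,n$ gives
$$
\sum_{t=1}^n t^{-\alpha}\ge\int_1^{n+1}x^{-\alpha}\,dx=\frac{(n+1)^{1-\alpha}-1}{1-\alpha},\qquad
\sum_{t=1}^n t^{-1}\ge\int_1^{n+1}\frac{dx}{x}=\ln(n+1).
$$
Multiplying by $C$ and substituting into the exponential produces the envelopes $\exp(-\tfrac{C}{1-\alpha}((n+1)^{1-\alpha}-1))$ and $\exp(-C\ln(n+1))=(n+1)^{-C}$; the factored form stated for $\alpha\in(0,1)$ is then just the identity $-\tfrac{C}{1-\alpha}((n+1)^{1-\alpha}-1)=\tfrac{C}{1-\alpha}-\tfrac{C}{1-\alpha}(n+1)^{1-\alpha}$.

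I do not expect a genuine obstacle: once Theorem~\ref{thm:hazard-bounds}(i) is available the corollary is mechanical. The only points needing attention are (a) orienting the integral comparison correctly, so that a decreasing $a_t$ yields a lower bound on $\sum a_t$ and hence an upper bound on survival, and (b) the standing hypothesis $a_t\in[0,1]$ of the theorem: this holds automatically in the power-law case when $C\in(0,1)$ (then $a_t=Ct^{-\alpha}\le C<1$), and in the harmonic case when $C\le 1$; for $C>1$ the constraint $p_t\ge Ct^{-1}>1$ can hold only on a $\Prob$-null survival event, so that $\Prob(\tau_A>n)=0$ and the bound $(n+1)^{-C}$ holds trivially.
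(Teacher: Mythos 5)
Your proposal is correct and follows exactly the route the paper intends: the corollary is stated without proof as an immediate consequence of Theorem~\ref{thm:hazard-bounds}(i), and your reduction---the exact geometric product plus the tail-sum formula in the constant case, and $1-x\le e^{-x}$ combined with the integral comparison $\sum_{t=1}^n t^{-\alpha}\ge\int_1^{n+1}x^{-\alpha}\,dx$ in the decaying cases---is the intended mechanical computation. Your closing remark on the standing hypothesis $a_t\in[0,1]$ (automatic for $C\in(0,1)$, and trivially dispatched for $C>1$ in the harmonic case since $p_t\le 1$ forces the survival event to be null) is a point the paper glosses over, and handling it is a correct refinement rather than a deviation.
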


\begin{lemma}\label{lem:conditional-witness}
Let $\{L_t\}_{t\ge1}$ be events with $L_t\in\mathcal F_{t-1}$.
Assume that for all $t\ge1$,
$$
p_t \ge a_t \quad \text{on } H_{t-1}\cap L_t,
$$
where $\{a_t\}$ is deterministic sequence of numbers.
Then
$$
h_t = \Prob(E_t\mid H_{t-1}) \ge a_t \Prob(L_t\mid H_{t-1}).
$$
\end{lemma}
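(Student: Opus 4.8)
The plan is to reduce everything to the representation $h_t = \E[p_t \mid H_{t-1}]$ already recorded in Section~\ref{sec:probspace}, and then to exploit monotonicity of conditional expectation against a single pointwise inequality. The only ingredients I need are that $p_t$ is itself a conditional probability (so $0\le p_t\le 1$ almost surely, and in particular $p_t\ge 0$), that $p_t$ vanishes off $H_{t-1}$ by \eqref{eq:hazard-on-survival}, and that $L_t\in\mathcal F_{t-1}$ so that $\mathbf 1_{L_t}$ is measurable with respect to the conditioning that defines $p_t$.

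First I would establish the almost-sure pointwise bound
$$
\mathbf 1_{H_{t-1}}\, p_t \;\ge\; a_t\,\mathbf 1_{H_{t-1}}\,\mathbf 1_{L_t}.
$$
This follows by splitting on $L_t$. On $H_{t-1}\cap L_t$ the hypothesis gives $p_t\ge a_t$, hence $\mathbf 1_{H_{t-1}}\mathbf 1_{L_t}\,p_t\ge a_t\,\mathbf 1_{H_{t-1}}\mathbf 1_{L_t}$. On $H_{t-1}\cap L_t^c$ the right-hand side is zero while $p_t\ge 0$, so the inequality holds trivially. Combining the two cases yields the displayed bound on all of $\Omega$.

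Next I would take conditional expectation given the event $H_{t-1}$ (equivalently, integrate over $H_{t-1}$ and divide by $\Prob(H_{t-1})$, assuming $\Prob(H_{t-1})>0$; if $\Prob(H_{t-1})=0$ both conditional quantities are undefined and the statement is vacuous). By monotonicity of conditional expectation, the stated identity $h_t=\E[p_t\mid H_{t-1}]$, and the fact that the deterministic constant $a_t$ factors out,
$$
h_t = \E[p_t \mid H_{t-1}] \;\ge\; \E\!\left[a_t\,\mathbf 1_{L_t} \mid H_{t-1}\right] \;=\; a_t\,\Prob(L_t \mid H_{t-1}),
$$
which is exactly the claim.

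I expect no substantive obstacle: once the pointwise bound is in place the conclusion is a one-line application of monotonicity. The one point requiring genuine care is the use of $p_t\ge 0$ to discard the contribution of $H_{t-1}\cap L_t^c$; without nonnegativity the pointwise inequality could fail on the part of the survival event where the witness configuration is absent, and the bound would not follow. A secondary, purely bookkeeping, matter is the degenerate case $\Prob(H_{t-1})=0$, which I would dispose of by observing that the entire conclusion is conditional on $H_{t-1}$ and hence carries no content there.
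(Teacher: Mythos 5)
Your proposal is correct and follows essentially the same route as the paper's proof: both reduce to $h_t=\E[p_t\mid H_{t-1}]$, split on $L_t$ versus $L_t^c$ using the hypothesis on $H_{t-1}\cap L_t$ and the trivial bound $p_t\ge 0$ elsewhere, and conclude by monotonicity of conditional expectation with the deterministic $a_t$ factored out. Your extra remarks on the necessity of $p_t\ge 0$ and the vacuous case $\Prob(H_{t-1})=0$ are sound but add nothing beyond the paper's argument.
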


%\begin{comment}
\begin{proof}
Since $H_{t-1}\in\mathcal F_{t-1}$, we have
$$
h_t
=
\Prob(E_t\mid H_{t-1})
=
\E\!\left[p_t \mid H_{t-1}\right].
$$
On the event $H_{t-1}\cap L_t$, the assumption gives $p_t\ge a_t$, while on
$H_{t-1}\cap L_t^{c}$ we only use the trivial bound $p_t\ge 0$.
Therefore,
$$
\E\!\left[p_t \mid H_{t-1}\right]
\ge
\E\!\left[a_t\,\mathbf 1_{L_t} \mid H_{t-1}\right]
=
a_t\,\Prob(L_t\mid H_{t-1}),
$$
which proves the claim.
\end{proof}
%\end{comment}
This decomposition separates algorithmic constants $\{a_t\}$ from empirical frequencies
$\Prob(L_t|H_{t-1})$. The remainder of the paper constructs concrete $L_t$ for L-SHADE.
%--------------------------------------------------------
\section{An L-SHADE Event for Current-to-$\boldsymbol p\,$best/1}\label{sec:lshade-event}

Throughout this section we focus on L-SHADE with \emph{current-to-$p$best/1} mutation 
and binomial crossover, targeting the sublevel set
$$
A_\varepsilon = \{x \in [l,u]^d : f(x) \le f^\star + \varepsilon\}.
$$
We work at a fixed generation $t \ge 1$. To simplify notation, we suppress 
time indices and write $x_i$, $N$, $P$, $\mathcal{A}$, etc.\ instead of 
$x_i^{(t-1)}$, $N^{(t-1)}$, $P^{(t-1)}$, $\mathcal{A}^{(t-1)}$ for the state 
at the \emph{start} of generation $t$ (equivalently, after generation $t-1$).
Similarly, $v_i$, $u_i$ denote the mutant and trial vectors $v_i^{(t)}$, $u_i^{(t)}$ 
produced during generation $t$.
The survival event is $H_{t-1} = \{\tau_{A_\varepsilon} > t-1\}$.

%--------------------------------------------------------
\subsection{Notation and Setup}\label{sec:notation}

\paragraph{Population and archive.}
Let $N$ denote the current population size.
The population is
$$
P = \{x_1, \ldots, x_N\} \subset [l,u]^d,
$$
with index set $\mathcal{P} = \{1, \ldots, N\}$.
Let $\mathcal{A}$ denote the \emph{archive} index set, disjoint from $\mathcal{P}$, 
with corresponding vectors $\{x_j : j \in \mathcal{A}\}$.
The archive stores parent vectors that were replaced by better offspring 
in previous generations; its maximum size is $\lfloor \mathrm{arc\_rate} \times N_{\mathrm{init}} \rfloor$
(default $\mathrm{arc\_rate} = 2.6$).
See \cite{Tanabe2014LSHADE} for the full L-SHADE specification.

\paragraph{The $p$-best set.}
Let $\mathcal{B} \subseteq \mathcal{P}$ denote the indices of the 
$m = \max\{1, \lceil p   N \rceil\}$ individuals with smallest objective values, 
where $p \in (0,1]$ is a fixed parameter (default $p = 0.11$).

\paragraph{Donor pools.}
For a target index $i \in \mathcal{P}$ and a chosen $p$-best index $b \in \mathcal{B}$, 
the admissible donor sets are:
\begin{align*}
S_i^{(1)} &= \mathcal{P} \setminus \{i, b\}, 
    &s_i^{(1)} &= |S_i^{(1)}| = N - 2, \\
S_i^{(2)} &= (\mathcal{P} \cup \mathcal{A}) \setminus \{i, b\}, 
    &s_i^{(2)} &= |S_i^{(2)}| = N + |\mathcal{A}| - 2.
\end{align*}
The first donor $r_1$ is drawn from $S_i^{(1)}$ (population only), 
while the second donor $r_2$ is drawn from $S_i^{(2)}$ (population $\cup$ archive).

\paragraph{Historical memory.}
L-SHADE maintains $H$ memory slots (default $H = 6$), storing 
location parameters $(M_F[k], M_{CR}[k])$ for $k \in \{1, \ldots, H\}$, 
all initialized to $0.5$.
When generating parameters for individual $i$, a memory slot $K_i$ is 
sampled uniformly from $\{1, \ldots, H\}$, and then:
\begin{itemize}
\item $F_i \sim \mathrm{Cauchy}(M_F[K_i], 0.1)$, regenerated if $\le 0$, truncated to $1$ if $> 1$;
\item $CR_i \sim \mathcal{N}(M_{CR}[K_i], 0.1)$, clipped to $[0,1]$.
\end{itemize}

\paragraph{Mutation (current-to-$p$best/1).}
Given target index $i$, $p$-best index $b$, donor indices $(r_1, r_2)$, 
and scaling factor $F \in (0,1]$, the mutant vector before boundary handling is
\begin{equation}\label{eq:mutant-def}
\tilde{v}_i(F; b, r_1, r_2) = x_i + F   \Delta_i(b, r_1, r_2),
\end{equation}
where the \emph{difference vector} is
$$
\Delta_i(b, r_1, r_2) = (x_b - x_i) + (x_{r_1} - x_{r_2}).
$$
Thus, for fixed indices, the mutant traces an affine ray $F \mapsto x_i + F   \Delta_i$ 
emanating from the parent $x_i$.

\paragraph{Boundary handling (midpoint repair).}
Let $\mathsf{BH}: \mathbb{R}^d \times [l,u]^d \to [l,u]^d$ denote the midpoint repair operator.
For a candidate $\tilde{v}$ and parent $x$, each coordinate is repaired as:
\begin{gather}\label{eq:boundary_handling}
\mathsf{BH}(\tilde{v}, x)_j =
\begin{cases}
(l_j + x_j)/2, & \text{if } \tilde{v}_j < l_j, \\
(u_j + x_j)/2, & \text{if } \tilde{v}_j > u_j, \\
\tilde{v}_j, & \text{otherwise}.
\end{cases}
\end{gather}
The \emph{repaired mutant} is
\begin{equation}\label{eq:repaired-mutant}
v_i(F; b, r_1, r_2) = \mathsf{BH}\bigl(\tilde{v}_i(F; b, r_1, r_2),\, x_i\bigr) \in [l,u]^d.
\end{equation}
The operator $\mathsf{BH}$ is Borel measurable (it is piecewise affine in each coordinate).

\paragraph{Binomial crossover.}
Given the repaired mutant $v_i$, parent $x_i$, crossover rate $CR_i \in [0,1]$, 
and a forced index $J_i \sim \mathrm{Unif}\{1,\ldots,d\}$, 
the \emph{trial vector} $u_i$ is defined coordinatewise by:
$$
(u_i)_j =
\begin{cases}
(v_i)_j, & \text{if } U_j \le CR_i \text{ or } j = J_i, \\
(x_i)_j, & \text{otherwise},
\end{cases}
$$
where $U_1, \ldots, U_d \stackrel{\mathrm{iid}}{\sim} \mathrm{Unif}(0,1)$.
The forced index $J_i$ ensures that at least one coordinate comes from the mutant.

\paragraph{Selection.}
If $f(u_i) \le f(x_i)$, the trial replaces the parent in the next generation; 
otherwise the parent survives. 
When a trial strictly improves ($f(u_i) < f(x_i)$), the replaced parent is 
added to the archive $\mathcal{A}$.

\paragraph{Population size reduction (LPSR).}
After selection, the population size is reduced according to
$$
N_{\mathrm{new}} = \mathrm{round}\!\left(
N_{\mathrm{init}} + (N_{\min} - N_{\mathrm{init}})   \frac{\mathrm{NFE}}{\mathrm{MaxNFE}}
\right),
$$
where $N_{\min} = 4$. The worst individuals are removed if $N_{\mathrm{new}} < N$.

\begin{lemma}\label{lem:trials-sublevel}
On the survival event $H_{t-1}$, we have
\begin{equation}\label{eq:Et-via-trials-new}
E_t=H_{t-1}\cap\left\{\exists i\in\{1,\dots,N_{t-1}\}:\ u^{(t)}_i\in A_\varepsilon\right\}.
\end{equation}
\end{lemma}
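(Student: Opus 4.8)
The plan is to reduce the claimed event identity to a purely fitness-theoretic statement about one generation of selection and population reduction, and then verify it by two inclusions. The starting point is the definitional decomposition already recorded in Section~\ref{sec:probspace}, namely $E_t = H_{t-1}\cap\{(\exists j)\,x^{(t)}_j\in A_\varepsilon\}$. Since the right-hand side of \eqref{eq:Et-via-trials-new} also carries the factor $H_{t-1}$, it suffices to show that, in restriction to $H_{t-1}$, the population-based success event $\{(\exists j)\,x^{(t)}_j\in A_\varepsilon\}$ coincides with the trial-based event $\{(\exists i)\,u^{(t)}_i\in A_\varepsilon\}$. I would establish this set equality by proving the two inclusions separately.

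For the forward inclusion ($E_t$ into the trial event), I would use that every member of $P^{(t)}$ is a \emph{survivor} of generation $t$: after the selection loop, coordinate $i$ holds $u^{(t)}_i$ when the trial is accepted and the parent $x^{(t-1)}_i$ otherwise, and LPSR merely retains a size-$N_t$ subset of this post-selection pool. Hence each $x^{(t)}_j$ equals either some parent $x^{(t-1)}_i$ or some trial $u^{(t)}_i$. On $H_{t-1}$ no parent lies in $A_\varepsilon$ (this is exactly the meaning of survival through generation $t-1$), so any time-$t$ population member lying in $A_\varepsilon$ must be a trial, which gives $(\exists i)\,u^{(t)}_i\in A_\varepsilon$.

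For the reverse inclusion I would argue via elitism and the sublevel structure of $A_\varepsilon$. Suppose some $u^{(t)}_i\in A_\varepsilon$, so $f(u^{(t)}_i)\le f^\star+\varepsilon$. On $H_{t-1}$ the parent satisfies $f(x^{(t-1)}_i)>f^\star+\varepsilon\ge f(u^{(t)}_i)$, so the trial strictly improves on its parent and is accepted in selection. The only thing that can still go wrong is that LPSR discards index $i$, and this is the point that requires care. I would resolve it by observing that truncation removes only the worst individuals and keeps at least $N_t\ge N_{\min}\ge 1$ of them, so the fitness-minimal element of the post-selection pool is always retained; since that pool contains $u^{(t)}_i$ with value $\le f^\star+\varepsilon$, its minimizer also has value $\le f^\star+\varepsilon$ and therefore lies in $A_\varepsilon$, yielding $(\exists j)\,x^{(t)}_j\in A_\varepsilon$.

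The cleanest way to package both directions, and the way I would actually write it, is through the generation-$t$ population minimum. Because selection replaces each parent by the fitter of parent and trial, and LPSR removes only the worst, both operations are fitness-monotone, so $\min_j f(x^{(t)}_j)=\min\bigl(\min_i f(x^{(t-1)}_i),\,\min_i f(u^{(t)}_i)\bigr)$, and the event $\{(\exists j)\,x^{(t)}_j\in A_\varepsilon\}$ is exactly $\{\min_j f(x^{(t)}_j)\le f^\star+\varepsilon\}$. On $H_{t-1}$ the parent term exceeds $f^\star+\varepsilon$, so this minimum drops into $A_\varepsilon$ precisely when some trial does. The main obstacle is therefore not the fitness comparison itself but the bookkeeping that LPSR never evicts the relevant near-optimal point; I expect that to be the one step needing an explicit appeal to the ``remove-worst'' rule together with $N_t\ge N_{\min}$.
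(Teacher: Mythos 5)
Your proposal is correct and follows essentially the same route as the paper: the key step in both is that on $H_{t-1}$ every parent satisfies $f(x_i^{(t-1)})>f^\star+\varepsilon$, so any trial $u_i^{(t)}\in A_\varepsilon$ strictly improves on its parent and is forcibly accepted by selection, while the converse holds because time-$t$ individuals are either parents (excluded on survival) or trials. Your explicit handling of LPSR via the retained population minimum reproduces, in slightly sharper form, exactly the argument the paper places in the remark following the lemma (remove-worst truncation cannot evict the best post-selection individual, which lies in $A_\varepsilon$ whenever some trial does).
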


\begin{proof}
By definition of survival, for every parent in the current population
we have $x_i^{(t-1)}\notin A_\varepsilon$, hence $f(x_i^{(t-1)})>f^\star+\varepsilon$ for all $i$.
Now fix any index $i$ and suppose the trial lands in the success set: $u_{t,i}\in A_\varepsilon$.
Then $f(u_{t,i})\le f^\star+\varepsilon < f(x_i^{(t-1)})$, so selection \emph{must} accept the trial,
i.e.\ $x_i^{(t)}=u_{t,i}\in A_\varepsilon$.

Consequently, on $H_{t-1}$ the event that some trial enters $A_\varepsilon$
is equivalent to the event that the next population contains an individual in $A_\varepsilon$,
which is exactly the first-hit event $E_t$ (since $H_{t-1}$ implies there were no earlier hits).
This proves \eqref{eq:Et-via-trials-new}.
\end{proof}

\begin{remark}%[Population reduction]
If the implementation applies a {\it population reduction} step after selection (as in L-SHADE's LPSR),
then \eqref{eq:Et-via-trials-new} still holds provided the reduction removes only the \emph{worst}
individuals (largest objective values). Indeed, any accepted $u_{t,i}\in A_\varepsilon$ has
$f(u_{t,i})\le f^\star+\varepsilon$, hence is among the best and cannot be removed by remove worst.
\end{remark}

The preceding description specifies the L-SHADE operators deterministically 
given the random inputs. We now state the probabilistic sampling rules as formal assumptions.
\begin{assumption}[Index sampling]\label{ass:index-sampling}
Fix a target index $i \in \mathcal{P}$. Conditioned on $\mathcal{F}_{t-1}$:
\begin{enumerate}
\item The $p$-best index $b$ is sampled uniformly from $\mathcal{B}$, so 
    $\Prob(b = b^\star \mid \mathcal{F}_{t-1}) = 1/m$ for each $b^\star \in \mathcal{B}$.
    
\item The donor indices $(r_1, r_2)$ are sampled via \emph{two-pool selection}:
    $$
    r_1 \sim \mathrm{Unif}(S_i^{(1)}), \qquad 
    r_2 \mid r_1 \sim \mathrm{Unif}(S_i^{(2)} \setminus \{r_1\}).
    $$
    Since $S_i^{(1)} \subseteq S_i^{(2)}$, for any admissible pair $(r_1^\star, r_2^\star)$ 
    with $r_1^\star \in S_i^{(1)}$ and $r_2^\star \in S_i^{(2)} \setminus \{r_1^\star\}$,
    $$
    \Prob\bigl((r_1, r_2) = (r_1^\star, r_2^\star) \mid \mathcal{F}_{t-1}\bigr) 
    = \frac{1}{s_i^{(1)}(s_i^{(2)} - 1)}.
    $$
\end{enumerate}
\end{assumption}
Let us mark: The probability in Assumption \ref{ass:index-sampling}(2) is very small (e.g., $\approx 10^{-6}$ for $d=10$ with full archive).
In Section \ref{sec:morse-bounds} we show that the population concentration near the optimum yields a much tighter bound for Morse functions.

%--------------------------------------------------------
\subsection{An L-SHADE event and hazard lower bound}

We now define a measurable event (depending only on the state at time $t-1$) under which
we will find  a deterministic lower bound on the conditional first-hit probability
$p_t=\Prob(E_t\mid\mathcal F_{t-1})$.

Fix once and for all
$$
0<F^{-}<F^{+}\le 1,\qquad
\Delta_F>0,\qquad
c_{\mathrm{cr}}\in(0,1),\qquad
g^{-}>0,\qquad
q^{-}\in(0,1].
$$
%The interval $[F^{-},F^{+}]$ is a ``working range'' for $F$ where we demand a density lower bound; $\Delta_F$ is a minimal ``window length'' in $F$ that leads to a good mutant; $c_{\mathrm{cr}}$ is a crossover-rate threshold (i.e. we bound the crossover in $[c_{\mathrm{cr}},1]$); and $g^{-},q^{-}$ are minimal sampling masses.
Here $[F^{-}, F^{+}] \subset (0,1]$ is the working range for $F$ on which we 
require a density lower bound;
$\Delta_F > 0$ is the minimal Lebesgue measure of the success-$F$ window;
$c_{\mathrm{cr}} \in (0,1)$ is a crossover rate threshold (we restrict attention to non-degenerate crossover events with $CR_i \in [c_{\mathrm{cr}},1]$);
$g^{-} > 0$ is a lower bound on the $F$-density; and
$q^{-} \in (0,1]$ is a uniform lower bound on the probability $\Prob(CR_i \ge c_{\mathrm{cr}})$.

For each $(i,b,r_1,r_2)$ define the set of \emph{successful} scaling factors
\begin{equation}\label{eq:success-F-window}
\mathcal I_t(i,b,r_1,r_2) \;=\; \Big\{F\in[F^{-},F^{+}]:\ v^{(t)}_i(F;b,r_1,r_2)\in A_{\varepsilon/2}\Big\}.
\end{equation}
We emphasize that $\mathcal I_t(i,b,r_1,r_2)$ is a subset of $[F^{-},F^{+}]$ and can be a union of
intervals; $\lambda(\mathcal I_t)$ denotes its 1D Lebesgue measure (total length).

%In L-SHADE, the pair $(F_i, CR_i)$ is generated by first drawing a memory slot $K_i \in \{1, \ldots, H\}$ (typically uniformly), then sampling $F_i$ and $CR_i$ from slot-dependent distributions. This captures the fact that $F$ and $CR$ share the same memory index, so their \emph{unconditional} draws may be dependent.

We encode this in the following assumptions.

\begin{assumption}[Parameter sampling]\label{ass:slot-sampling}
Fix a target index $i \in \mathcal{P}$. Conditioned on $\mathcal{F}_{t-1}$:
\begin{enumerate}
\item The memory slot $K_i$ is sampled uniformly from $\{1, \ldots, H\}$, so
    $\Prob(K_i = k \mid \mathcal{F}_{t-1}) = 1/H$ for each $k$.
\item Conditional on $(\mathcal{F}_{t-1}, K_i = k)$, the random variables $F_i$ and $CR_i$
    are independent, with $F_i$ admitting a density $g^F_k( )$ on $(0,1]$.
\end{enumerate}
\end{assumption}

\begin{remark}\label{rem:F_CR_independence}
Assumption~\ref{ass:slot-sampling} matches the L-SHADE implementation: 
pick memory index $k$, then draw $F$ from a truncated Cauchy centered at $M_F[k]$ 
and $CR$ from a clipped normal centered at $M_{CR}[k]$. The numbers $F$ and $CR$ are conditionally independent given the index $k$.
\end{remark}

\begin{assumption}[Binomial crossover]\label{ass:crossover}
Conditioned on $(\mathcal{F}_{t-1}, CR_i)$, the trial $u_i$ is produced by:
\begin{enumerate}
\item sample $J_i$ uniformly from $\{1, \ldots, d\}$;
\item for each $j \neq J_i$, inherit coordinate $j$ from the mutant $v_i$ with probability $CR_i$
    (independently across coordinates);
\item inherit coordinate $J_i$ from the mutant deterministically.
\end{enumerate}
\end{assumption}

\begin{remark}
In L-SHADE with binomial crossover, it may occur that the trial vector
$u_{i}$ coincides with the mutant vector $v_{i}$ except for the
forced coordinate $j_{\mathrm{rand}}$, so that the sampled crossover
rate $CR_{i}$ does not effectively influence the offspring.
In such cases, the original L-SHADE algorithm records
$CR_{i}=\perp$ and excludes this value from the update of the
crossover-rate memory.
Throughout this paper, whenever we condition on $CR_{i}$ or on events
involving crossover, this conditioning is understood to be restricted
to the event of non-degenerate crossover, on which $CR_{i}$ is
well-defined.
\end{remark}

We can now define the event that there exists a \emph{witness configuration} in the current state.

\begin{definition}[L-SHADE witness event $\mathcal{L}_t$]
\label{def:Lshade-event}
Let $\mathcal{L}_t \in \mathcal{F}_{t-1}$ be the event that there exist:
a target index $i \in \mathcal{P}$,
a $p$-best index $b \in \mathcal{B}$,
donor indices $r_1 \in S_i^{(1)}$ and $r_2 \in S_i^{(2)} \setminus \{r_1\}$,
and a memory slot $k \in \{1, \ldots, H\}$ such that:
\begin{align}
\lambda\!\bigl(\mathcal{I}(i, b, r_1, r_2)\bigr) &\ge \Delta_F,
\label{eq:L1}\\
\inf_{F \in [F^{-}, F^{+}]} g^F_k(F) &\ge g^{-},
\label{eq:L2}\\
\Prob(CR_i \ge c_{\mathrm{cr}} \mid \mathcal{F}_{t-1}, K_i = k) &\ge q^{-}.
\label{eq:L3}
\end{align}
\end{definition}

\begin{remark}
By construction, $\mathcal{L}_t$ depends only on the state at time $t-1$:
the population and archive determine the index sets and the difference vector $\Delta_i(b, r_1, r_2)$,
while the L-SHADE memories determine the density $g^F_k$ and the distribution of $CR$.
Thus $\mathcal{L}_t \in \mathcal{F}_{t-1}$.

However, condition \eqref{eq:L1} involves the set of $F$ values for which
$f(v_i(F)) \le f^\star + \varepsilon/2$, which depends on the objective geometry.
In general black-box settings this is \emph{not} directly observable without function knowledge;
one may (i) verify it analytically for structured $f$, or 
(ii) approximate it numerically by evaluating $f(v_i(F))$ on a grid in $F$.
\end{remark}

%--------------------------------------------------------
\subsection{A lower bound on $p_t$ on $H_{t-1} \cap \mathcal{L}_t$}

For an integer $r \in \{0, \ldots, d-1\}$, define the event
\begin{equation}\label{eq:C_r-def}
C_r(i) = \{\text{the trial } u_i \text{ inherits at least } d-r \text{ coordinates from } v_i\}.
\end{equation}
Equivalently, $C_r(i)$ means ``at most $r$ coordinates come from the parent $x_i$''.

\begin{lemma}\label{lem:eta-def}
Under Assumption~\ref{ass:crossover}, conditional on $CR_i = c$, the number of mutant 
coordinates inherited into $u_i$ is
$$
1 + \mathrm{Bin}(d-1, c),
$$
so for any $r \in \{0, \ldots, d-1\}$,
$$
\Prob\bigl(C_r(i) \mid CR_i = c\bigr)
= \Prob\!\left(\mathrm{Bin}(d-1, c) \ge d - r - 1\right).
$$
In particular, define the deterministic constant
\begin{equation}\label{eq:eta}
\eta_r(d, c_{\mathrm{cr}}) = \Prob\!\left(\mathrm{Bin}(d-1, c_{\mathrm{cr}}) \ge d - r - 1\right).
\end{equation}
Then on the event $\{CR_i \ge c_{\mathrm{cr}}\}$,
\begin{equation}\label{eq:eta-lower}
\Prob\bigl(C_r(i) \mid \mathcal{F}_{t-1}, CR_i\bigr) \ge \eta_r(d, c_{\mathrm{cr}}).
\end{equation}
\end{lemma}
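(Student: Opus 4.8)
The plan is to reduce the combinatorial event $C_r(i)$ to a tail of a binomial count and then invoke monotonicity of that tail in the success parameter. First I would read the count distribution directly off Assumption~\ref{ass:crossover}. Conditioned on $(\mathcal{F}_{t-1}, CR_i = c)$, the forced coordinate $J_i$ is always taken from the mutant, while each of the remaining $d-1$ coordinates is independently inherited from $v_i$ with probability $c$. The number of inherited coordinates among those $d-1$ depends only on how many succeed, not on which index is forced, so averaging over the uniform choice of $J_i$ leaves this count unchanged. Hence the total number $M$ of mutant coordinates satisfies $M \stackrel{d}{=} 1 + \mathrm{Bin}(d-1, c)$.

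Next I would translate the definition~\eqref{eq:C_r-def}: the event $C_r(i) = \{M \ge d-r\}$, and substituting $M = 1 + \mathrm{Bin}(d-1,c)$ gives $C_r(i) = \{\mathrm{Bin}(d-1,c) \ge d-r-1\}$. Taking conditional probabilities yields the stated identity $\Prob(C_r(i)\mid CR_i = c) = \Prob(\mathrm{Bin}(d-1,c) \ge d-r-1)$, from which $\eta_r(d, c_{\mathrm{cr}})$ is simply the value at $c = c_{\mathrm{cr}}$.

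For the lower bound~\eqref{eq:eta-lower}, the tool is stochastic monotonicity of the binomial family: for fixed $n$ and threshold $k$, the map $p \mapsto \Prob(\mathrm{Bin}(n,p) \ge k)$ is nondecreasing. I would justify this by the usual coupling, writing $\mathrm{Bin}(n,p) = \sum_{j=1}^n \mathbf{1}\{U_j \le p\}$ for common uniforms $U_j$, so that $p \le p'$ forces a pointwise domination of the two counts and hence of their tail events. Applying this with $n = d-1$ and $k = d-r-1$, and using $CR_i \ge c_{\mathrm{cr}}$ on the conditioning event, the first part gives $\Prob(C_r(i)\mid\mathcal{F}_{t-1}, CR_i) = \Prob(\mathrm{Bin}(d-1, CR_i) \ge d-r-1) \ge \Prob(\mathrm{Bin}(d-1, c_{\mathrm{cr}}) \ge d-r-1) = \eta_r(d, c_{\mathrm{cr}})$.

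The routine steps—the exact binomial expression and the passage to conditional expectations—are immediate. The only point deserving care is that the conditioning on $\mathcal{F}_{t-1}$ drops out, i.e.\ that given $CR_i$ the crossover randomness ($J_i$ together with the per-coordinate inheritances) is independent of the past; this is precisely what Assumption~\ref{ass:crossover} builds in. Thus the main, and quite mild, obstacle is simply to state and apply the binomial tail monotonicity cleanly, rather than to establish anything genuinely difficult.
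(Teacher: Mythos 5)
Your proposal is correct and follows essentially the same route as the paper's proof: read off the count $1+\mathrm{Bin}(d-1,c)$ from Assumption~\ref{ass:crossover}, rewrite $C_r(i)$ as the binomial tail event $\{\mathrm{Bin}(d-1,c)\ge d-r-1\}$, and conclude \eqref{eq:eta-lower} from the monotonicity of $c\mapsto\Prob(\mathrm{Bin}(d-1,c)\ge d-r-1)$, which the paper likewise justifies via stochastic dominance (your uniform-threshold coupling is exactly that argument made explicit). Your added remarks---that averaging over the forced index $J_i$ leaves the count's distribution unchanged, and that conditioning on $\mathcal{F}_{t-1}$ drops out by the built-in conditional independence---are correct and merely make explicit what the paper leaves implicit.
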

\begin{proof}
Assumption~\ref{ass:crossover} says: one coordinate is forced from the mutant, and each of the
remaining $d-1$ coordinates is taken from the mutant independently with probability $c$.
Thus the number of mutant coordinates is $1 + \mathrm{Bin}(d-1, c)$.
The event $C_r(i)$ is exactly the event that this number is at least $d-r$,
i.e., $\mathrm{Bin}(d-1, c) \ge d - r - 1$. This gives the first identity.

For the monotonicity bound \eqref{eq:eta-lower}, note that the map
$c \mapsto \Prob(\mathrm{Bin}(d-1, c) \ge d - r - 1)$ is nondecreasing in $c \in [0,1]$,
so on $\{CR_i \ge c_{\mathrm{cr}}\}$, the conditional probability of $C_r(i)$ is at least
$\eta_r(d, c_{\mathrm{cr}})$.
\end{proof}

\begin{remark}\label{rem:eta-bound}
For any $c_{\mathrm{cr}} \in (0,1)$, choosing $r = d - 1 - \lfloor(d-1)c_{\mathrm{cr}}\rfloor$ yields
$$
\eta_r(d, c_{\mathrm{cr}}) \geq \frac{1}{2} \quad \text{uniformly in } d,
$$
since the median of $\mathrm{Bin}(n,p)$ lies in $\{\lfloor np \rfloor, \lceil np \rceil\}$.
For the L-SHADE initialization $c_{\mathrm{cr}} = 1/2$, this gives $r = \lfloor(d-1)/2\rfloor$.
\end{remark}

To use $C_r(i)$, we need an implication of the form (see Lemma~\ref{lem:eta-def} and \eqref{eq:L1}):
$$
v_i\in A_{\varepsilon/2}\ \text{and}\ \|x_i-v_i\|_\infty\le\delta\ \text{and}\ C_r(i)\ \Longrightarrow\ u_i\in A_\varepsilon.
$$

\begin{definition}\label{def:crossover-stable}
Let us fix $\varepsilon > 0$ and an integer $r \in \{0, \dots, d-1\}$. 
We say that $f$ is {\it $(\varepsilon, r, \delta)$-crossover-stable} on $[l,u]^d$ if the following holds: for every $v \in A_{\varepsilon/2}$, every $x \in [l,u]^d$ with $\|x - v\|_\infty \leq \delta$, and every index set $J \subseteq \{1, \dots, d\}$ with $|J| \ge d - r$, the vector $u$ defined by
$$
u_j = \begin{cases} v_j & \text{if } j \in J, \\ x_j & \text{if } j \notin J, \end{cases}
$$
satisfies $u \in A_\varepsilon$.
\end{definition}

Proposition~\ref{prop:Cr-sufficient-new} establishes that every Morse function is $(\varepsilon, r, \delta)$-crossover-stable with appropriate sufficiently small $\varepsilon$ and $\delta\leq\delta_{\max}(\varepsilon,r)$.

The decomposition of $\gamma_{t,i}$ into conditional sub-events follows standard probabilistic arguments for adapted processes; see, e.g., \citep{Kallenberg2021}.

\begin{lemma}[Hazard bound on $H_{t-1} \cap \mathcal{L}_t$]
\label{lem:lshade-pt}
Assume:
\begin{enumerate}
\item Assumptions~\ref{ass:index-sampling}--\ref{ass:crossover} hold.
\item There exists $r \in \{0, \ldots, d-1\}$ such that on the event $\mathcal{L}_t$,
for the witness target index $i$ and witness indices $(b, r_1, r_2)$ in
Definition~\ref{def:Lshade-event},
\begin{equation}\label{eq:stability-implication}
v_i(F; b, r_1, r_2) \in A_{\varepsilon/2}\ \text{and}\ \|x_i-v_i\|_\infty\le\delta \text{ and } C_r(i) \quad \Longrightarrow \quad u_i \in A_\varepsilon.
\end{equation}
\end{enumerate}
Then on $H_{t-1} \cap \mathcal{L}_t$,
\begin{equation}\label{eq:pt-lower-L}
p_t = \Prob(E_t \mid \mathcal{F}_{t-1}) \ge a_t,
\end{equation}
where
\begin{equation}\label{eq:a_t-L}
a_t = \frac{1}{H}   \frac{1}{m}   \frac{1}{s_i^{(1)}(s_i^{(2)} - 1)} 
        (g^{-} \Delta_F)   (q^{-} \eta_r(d, c_{\mathrm{cr}})).
\end{equation}
Here $i$ is the witness target index from $\mathcal{L}_t$.
\end{lemma}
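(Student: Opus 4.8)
The plan is to lower bound $p_t$ on $H_{t-1}\cap\mathcal L_t$ by isolating a single favorable outcome for the witness target index $i$ from Definition~\ref{def:Lshade-event}. By Lemma~\ref{lem:trials-sublevel}, on $H_{t-1}$ the first-hit event is $E_t = H_{t-1}\cap\{\exists j:\ u_j\in A_\varepsilon\}$, so by \eqref{eq:hazard-on-survival} we have, on $H_{t-1}$,
$$
p_t = \Prob\bigl(\exists j:\ u_j\in A_\varepsilon \mid \mathcal F_{t-1}\bigr) \ge \Prob\bigl(u_i\in A_\varepsilon \mid \mathcal F_{t-1}\bigr).
$$
Thus it suffices to produce the single trial $u_i\in A_\varepsilon$ with the claimed probability.

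First I would intersect $\{u_i\in A_\varepsilon\}$ with the specific favorable sampling choices and pass to a lower bound: select the witness slot $k$, $p$-best index $b$, and donor pair $(r_1,r_2)$ from Definition~\ref{def:Lshade-event}, and impose $F_i\in\mathcal I_t(i,b,r_1,r_2)$ together with $\{CR_i\ge c_{\mathrm{cr}}\}\cap C_r(i)$. On this intersection, $F_i\in\mathcal I_t$ gives $v_i\in A_{\varepsilon/2}$ by \eqref{eq:success-F-window}, the crossover event gives $C_r(i)$, and the proximity $\norm{x_i-v_i}_\infty\le\delta$ holds; hence hypothesis~\eqref{eq:stability-implication} forces $u_i\in A_\varepsilon$. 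Therefore the target event contains this intersection, and
$$
\Prob\bigl(u_i\in A_\varepsilon\mid\mathcal F_{t-1}\bigr) \ge \Prob\bigl(b,\,(r_1,r_2),\,K_i=k,\ F_i\in\mathcal I_t,\ CR_i\ge c_{\mathrm{cr}},\ C_r(i)\mid\mathcal F_{t-1}\bigr).
$$

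The core computation is factorizing this conditional probability using the conditional-independence structure of Assumptions~\ref{ass:index-sampling}--\ref{ass:crossover}. The index draws are independent of the slot and parameter draws, so the bound splits off the factors $\Prob(b=b^\star\mid\mathcal F_{t-1})=1/m$ and $\Prob((r_1,r_2)=(r_1^\star,r_2^\star)\mid\mathcal F_{t-1})=1/(s_i^{(1)}(s_i^{(2)}-1))$ from Assumption~\ref{ass:index-sampling}. Conditioning on $K_i=k$ contributes $1/H$ by Assumption~\ref{ass:slot-sampling}(1); given $K_i=k$, Assumption~\ref{ass:slot-sampling}(2) makes $F_i$ and $CR_i$ conditionally independent, and the coordinatewise crossover randomness (hence $C_r(i)$) is independent of $F_i$, so the remaining factor splits as
$$
\Prob\bigl(F_i\in\mathcal I_t\mid\mathcal F_{t-1},K_i=k\bigr)\cdot \Prob\bigl(CR_i\ge c_{\mathrm{cr}},\ C_r(i)\mid\mathcal F_{t-1},K_i=k\bigr).
$$
For the first factor, since $\mathcal I_t\subseteq[F^-,F^+]$ I use the density floor \eqref{eq:L2} and the window length \eqref{eq:L1} to obtain $\int_{\mathcal I_t} g^F_k(F)\,dF\ge g^-\lambda(\mathcal I_t)\ge g^-\Delta_F$. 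For the second, I condition on $CR_i$ and invoke Lemma~\ref{lem:eta-def}: on $\{CR_i\ge c_{\mathrm{cr}}\}$ the conditional probability of $C_r(i)$ is at least $\eta_r(d,c_{\mathrm{cr}})$, so this factor is $\ge \eta_r(d,c_{\mathrm{cr}})\,\Prob(CR_i\ge c_{\mathrm{cr}}\mid\mathcal F_{t-1},K_i=k)\ge \eta_r(d,c_{\mathrm{cr}})\,q^-$ by \eqref{eq:L3}. Multiplying the five factors reproduces exactly $a_t$ in \eqref{eq:a_t-L}.

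I expect the main obstacle to be the clean bookkeeping of the nested conditioning: one must justify that the index selection, the slot draw, the $F$/$CR$ draws, and the coordinatewise crossover decisions factor in the stated order under the conditional-independence clauses, so that no cross terms are lost and each lower bound is applied on the correct conditioning event. A secondary point requiring care is the proximity antecedent $\norm{x_i-v_i}_\infty\le\delta$ in \eqref{eq:stability-implication}: it must hold for every $F\in\mathcal I_t$, which I would secure from the boundary-handling contraction $\norm{x_i-v_i}_\infty\le F\,\norm{\Delta_i}_\infty\le F^+\,\norm{\Delta_i}_\infty$ together with the witness configuration, rather than treating it as automatic.
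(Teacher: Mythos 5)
Your proposal follows essentially the same route as the paper's proof: reduce $E_t$ to the single witness trial via Lemma~\ref{lem:trials-sublevel}, intersect with the favorable event $\{b=b^\star\}\cap\{(r_1,r_2)=(r_1^\star,r_2^\star)\}\cap\{K_i=k^\star\}$, and multiply the same five factors $1/H$, $1/m$, $1/(s_i^{(1)}(s_i^{(2)}-1))$, $g^{-}\Delta_F$, $q^{-}\eta_r(d,c_{\mathrm{cr}})$ under the same conditional-independence bookkeeping. Your closing caveat about the proximity antecedent $\|x_i-v_i\|_\infty\le\delta$ is in fact more careful than the paper, whose Step~5 invokes \eqref{eq:stability-implication} after verifying only $v_i\in A_{\varepsilon/2}$ and $C_r(i)$; note, though, that Definition~\ref{def:Lshade-event} does not bound $\|\Delta_i(b,r_1,r_2)\|_\infty$, so your proposed fix via the boundary-handling contraction requires adding such a bound to the witness event (or absorbing the proximity condition into hypothesis~2, as the paper implicitly does).
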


\begin{proof}
Work on the event $H_{t-1} \cap \mathcal{L}_t$.
Since $\mathcal{L}_t \in \mathcal{F}_{t-1}$, conditioning on $\mathcal{F}_{t-1}$ treats the witness objects
($i$, and the sets $\mathcal{B}$, $S_i^{(1)}$, $S_i^{(2)}$, the window $\mathcal{I}( )$,
and the slot-dependent distributions) as fixed.
\smallskip

{\sl Step 1: Reduce $E_t$ to a single individual's trial.}
By Lemma~\ref{lem:trials-sublevel}, on $H_{t-1}$ we have
$$
E_t = H_{t-1} \cap \{\exists j : u_j \in A_\varepsilon\}.
$$
In particular,
$$
\{u_i \in A_\varepsilon\} \subseteq \{\exists j : u_j \in A_\varepsilon\},
$$
so on $H_{t-1}$ we have $\{u_i \in A_\varepsilon\} \subseteq E_t$.
Taking conditional probabilities given $\mathcal{F}_{t-1}$ yields
\begin{equation}\label{eq:pt-ge-single}
p_t = \Prob(E_t \mid \mathcal{F}_{t-1}) \ge \Prob(u_i \in A_\varepsilon \mid \mathcal{F}_{t-1}).
\end{equation}
\smallskip

{\sl Step 2: Define a concrete success event for target $i$.}
Let $(b^\star, r_1^\star, r_2^\star, k^\star)$ be a witness quadruple guaranteed by $\mathcal{L}_t$,
so that \eqref{eq:L1}--\eqref{eq:L3} hold.
Consider the event $\mathcal{A}$ that, in the construction of the trial for target $i$,
the algorithm selects exactly these discrete choices:
$$
\mathcal{A} = \{b = b^\star\} \cap \{(r_1, r_2) = (r_1^\star, r_2^\star)\} \cap \{K_i = k^\star\}.
$$
By Assumptions~\ref{ass:index-sampling} and \ref{ass:slot-sampling},
\begin{equation}\label{eq:A-prob}
\Prob(\mathcal{A} \mid \mathcal{F}_{t-1}) = \frac{1}{m}   \frac{1}{s_i^{(1)}(s_i^{(2)} - 1)}   \frac{1}{H}.
\end{equation}
\smallskip

{\sl Step 3: Bound the probability of hitting the success-$F$ window.}
On $\mathcal{A}$, the mutant has the fixed affine form $F \mapsto v_i(F; b^\star, r_1^\star, r_2^\star)$.
By the witness condition \eqref{eq:L1}, the set $\mathcal{I}(i, b^\star, r_1^\star, r_2^\star)$ has
Lebesgue measure at least $\Delta_F$.
By \eqref{eq:L2}, the $F$-density under slot $k^\star$ satisfies
$g^F_{k^\star}(F) \ge g^{-}$ for all $F \in [F^{-}, F^{+}]$.
Therefore,
\begin{align}
\Prob\bigl(F_i \in \mathcal{I}(i, b^\star, r_1^\star, r_2^\star) \mid \mathcal{F}_{t-1}, \mathcal{A}\bigr)
&= \int_{\mathcal{I}(i, b^\star, r_1^\star, r_2^\star)} g^F_{k^\star}(F)\, dF
\nonumber\\
&\ge \int_{\mathcal{I}(i, b^\star, r_1^\star, r_2^\star)} g^{-}\, dF
= g^{-} \lambda\bigl(\mathcal{I}(i, b^\star, r_1^\star, r_2^\star)\bigr)
\nonumber\\
&\ge g^{-} \Delta_F.
\label{eq:F-hit-prob}
\end{align}
\smallskip

{\sl Step 4: Bound the crossover probability.}
By \eqref{eq:L3},
$$
\Prob(CR_i \ge c_{\mathrm{cr}} \mid \mathcal{F}_{t-1}, \mathcal{A}) \ge q^{-}.
$$
Condition on $(\mathcal{F}_{t-1}, \mathcal{A}, CR_i)$ and apply Lemma~\ref{lem:eta-def}:
on $\{CR_i \ge c_{\mathrm{cr}}\}$, the conditional probability of $C_r(i)$ is at least
$\eta_r(d, c_{\mathrm{cr}})$.
Thus,
\begin{align}
\Prob\bigl(C_r(i) \cap \{CR_i \ge c_{\mathrm{cr}}\} \mid \mathcal{F}_{t-1}, \mathcal{A}\bigr)
&= \E\!\left[\mathbf{1}_{\{CR_i \ge c_{\mathrm{cr}}\}}
   \Prob(C_r(i) \mid \mathcal{F}_{t-1}, \mathcal{A}, CR_i)
   \;\middle|\; \mathcal{F}_{t-1}, \mathcal{A}\right]
\nonumber\\
&\ge \E\!\left[\mathbf{1}_{\{CR_i \ge c_{\mathrm{cr}}\}} \eta_r(d, c_{\mathrm{cr}})
   \;\middle|\; \mathcal{F}_{t-1}, \mathcal{A}\right]
\nonumber\\
&= \eta_r(d, c_{\mathrm{cr}}) \Prob(CR_i \ge c_{\mathrm{cr}} \mid \mathcal{F}_{t-1}, \mathcal{A})
\nonumber\\
&\ge q^{-} \eta_r(d, c_{\mathrm{cr}}).
\label{eq:Cr-prob}
\end{align}
\smallskip

{\sl Step 5: Combine the bounds.}
Consider the event
$$
\mathcal{S} = \mathcal{A} \cap \{F_i \in \mathcal{I}(i, b^\star, r_1^\star, r_2^\star)\}
              \cap \{CR_i \ge c_{\mathrm{cr}}\} \cap C_r(i).
$$
On $\mathcal{S}$, by the definition of $\mathcal{I}$ we have
$v_i(F_i; b^\star, r_1^\star, r_2^\star) \in A_{\varepsilon/2}$.
Together with $C_r(i)$, the stability implication \eqref{eq:stability-implication} gives
$u_i \in A_\varepsilon$.
Hence $\mathcal{S} \subseteq \{u_i \in A_\varepsilon\}$ and therefore
$$
\Prob(u_i \in A_\varepsilon \mid \mathcal{F}_{t-1}) \ge \Prob(\mathcal{S} \mid \mathcal{F}_{t-1}).
$$

Using the chain rule (conditioning on $\mathcal{A}$) and Assumption~\ref{ass:slot-sampling}
(conditional independence of $F_i$ and $CR_i$ given the slot),
we multiply the bounds \eqref{eq:A-prob}, \eqref{eq:F-hit-prob}, and \eqref{eq:Cr-prob}:
\begin{align*}
\Prob(\mathcal{S} \mid \mathcal{F}_{t-1})
&= \Prob(\mathcal{A} \mid \mathcal{F}_{t-1})
     \Prob\bigl(F_i \in \mathcal{I}( ) \mid \mathcal{F}_{t-1}, \mathcal{A}\bigr)
     \Prob\bigl(C_r(i) \cap \{CR_i \ge c_{\mathrm{cr}}\} \mid \mathcal{F}_{t-1}, \mathcal{A}\bigr)
\\
&\ge \frac{1}{H}   \frac{1}{m}   \frac{1}{s_i^{(1)}(s_i^{(2)} - 1)}
       (g^{-} \Delta_F)   (q^{-} \eta_r(d, c_{\mathrm{cr}})).
\end{align*}
Combining with \eqref{eq:pt-ge-single} yields \eqref{eq:pt-lower-L} with $a_t$ as in \eqref{eq:a_t-L}.
\end{proof}

\begin{remark}[Structure of the bound \eqref{eq:a_t-L}]\label{rem:a_t-structure}
Fix the thresholds $(F^{-}, F^{+}, \Delta_F, c_{\mathrm{cr}}, g^{-}, q^{-})$ and the integer $r$.
The lower bound \eqref{eq:a_t-L} decomposes into three types of factors.

\begin{enumerate}
\item \textbf{Algorithmic/combinatorial factors (computable).}
From the discrete sampling rules:
$$
\frac{1}{H}, \qquad \frac{1}{m}, \qquad \frac{1}{s_i^{(1)}(s_i^{(2)} - 1)}.
$$
For L-SHADE defaults ($H = 6$, $p = 0.11$) at the initial population size $N = N_{\mathrm{init}} = 18d$,
we have $m = \max\{1, \lceil pN \rceil\}$ (e.g., for $d = 10$, $m = \lceil 0.11   180 \rceil = 20$,
so $1/m = 0.05$).

For the two-pool sampling (Assumption~\ref{ass:index-sampling}),
$$
s_i^{(1)} = N - 2, \qquad s_i^{(2)} = N + |\mathcal{A}| - 2.
$$
At archive capacity with $\mathrm{arc\_rate} = 2.6$ and $d = 10$ 
(so $N = 180$ and $|\mathcal{A}| \approx 2.6   N_{\mathrm{init}} = 468$), 
we have $s_i^{(1)} = 178$ and $s_i^{(2)} = 646$, hence
$$
\frac{1}{s_i^{(1)}(s_i^{(2)} - 1)} = \frac{1}{178   645} \approx 8.7 \times 10^{-6}.
$$
Consequently,
$$
\frac{1}{H}   \frac{1}{m}   \frac{1}{s_i^{(1)}(s_i^{(2)} - 1)}
\approx \frac{1}{6}   \frac{1}{20}   8.7 \times 10^{-6}
\approx 7.3 \times 10^{-8}.
$$
Note that due to population-size reduction (LPSR), $m$, $s_i^{(1)}$, and $s_i^{(2)}$ typically decrease over time.

The crossover factor $\eta_r(d, c_{\mathrm{cr}}) = \Prob(\mathrm{Bin}(d-1, c_{\mathrm{cr}}) \ge d - r - 1)$
is a binomial tail probability that can be evaluated numerically.

\item \textbf{State-checkable factors.}
The witness conditions \eqref{eq:L2}--\eqref{eq:L3} depend only on the L-SHADE memory state, so
$$
\inf_{F \in [F^{-}, F^{+}]} g^F_k(F), \qquad
\Prob(CR_i \ge c_{\mathrm{cr}} \mid \mathcal{F}_{t-1}, K_i = k)
$$
are checkable from the algorithm's internal state.

\item \textbf{Problem-dependent factors.}
The quantity $\lambda(\mathcal{I}(i, b, r_1, r_2))$ depends on the objective geometry 
and cannot be computed without function knowledge 
(though it may be bounded analytically for structured $f$).

\item \textbf{Empirical frequency.}
The survival-conditional frequency $\gamma_t = \Prob(\mathcal{L}_t \mid H_{t-1})$ 
reflects the algorithm--problem interaction and must be estimated empirically.
\end{enumerate}
\end{remark}

\subsection{Tail bounds via event frequency}

Define the deterministic survival-only hazard
$$
h_t = \Prob(E_t\mid H_{t-1}),\qquad t\ge 1.
$$
Then the survival events satisfy the exact recursion
$$
\Prob(H_t)
=
\Prob(H_{t-1})\Prob(H_t\mid H_{t-1})
=
\Prob(H_{t-1})\bigl(1-h_t\bigr),
$$
so by iterating,
\begin{equation}\label{eq:exact-survival-product}
\Prob(\tau_{A_\varepsilon}>n)
=
\Prob(E_0^c)\prod_{t=1}^n(1-h_t).
\end{equation}

Now define the \emph{L-SHADE event frequency on survival} by
$$
\gamma_t = \Prob(\mathcal L_t\mid H_{t-1})\in[0,1].
$$

\begin{proposition}
\label{prop:tail-via-L}
Assume the hypotheses of Lemma~\ref{lem:lshade-pt}, and suppose $a_t$ in \eqref{eq:a_t-L} is deterministic.
Then for every $t\ge 1$,
$$
h_t=\Prob(E_t\mid H_{t-1})
=
\E[p_t\mid H_{t-1}]
\ \ge\
a_t\,\Prob(\mathcal L_t\mid H_{t-1})
=
a_t\,\gamma_t,
$$
and consequently, for all $n\ge 1$,
\begin{equation}\label{eq:tail-L}
\Prob(\tau_{A_\varepsilon}>n)
\le
\Prob(E_0^c)\prod_{t=1}^n(1-a_t\gamma_t)
\le
\Prob(E_0^c)\exp\!\left(-\sum_{t=1}^n a_t\gamma_t\right).
\end{equation}
In particular, if $\sum_{t\ge 1} a_t\gamma_t=\infty$, then $\Prob(\tau_{A_\varepsilon}<\infty)=1$.
\end{proposition}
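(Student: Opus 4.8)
The plan is to assemble the statement from three ingredients already in place: the pointwise hazard floor of Lemma~\ref{lem:lshade-pt}, the witness-decomposition of Lemma~\ref{lem:conditional-witness}, and the exact survival product \eqref{eq:exact-survival-product}. The only genuinely new work is checking that the hypotheses of these lemmas line up; after that the bound is a one-line calculation. The structure splits cleanly into a hazard-floor step, a product/exponential step, and a limiting step.

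First I would establish the survival-only hazard inequality $h_t \ge a_t\gamma_t$. By Definition~\ref{def:Lshade-event} and the remark following it, $\mathcal{L}_t \in \mathcal{F}_{t-1}$, and under the standing hypotheses of Lemma~\ref{lem:lshade-pt} we have $p_t \ge a_t$ on $H_{t-1}\cap\mathcal{L}_t$. Since the proposition assumes $a_t$ is deterministic, the three hypotheses of Lemma~\ref{lem:conditional-witness} hold verbatim (with $L_t=\mathcal{L}_t$), and that lemma gives $h_t = \Prob(E_t\mid H_{t-1}) \ge a_t\,\Prob(\mathcal{L}_t\mid H_{t-1}) = a_t\gamma_t$. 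The chain of equalities in the statement, in particular $h_t = \E[p_t\mid H_{t-1}]$, is just the identity already recorded in Section~\ref{sec:probspace}, valid because $H_{t-1}\in\mathcal{F}_{t-1}$.

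Next I would convert the hazard floor into the tail envelope. The key observation is that $h_t$ is a deterministic number, so the exact product identity \eqref{eq:exact-survival-product}, namely $\Prob(\tau_{A_\varepsilon}>n)=\Prob(E_0^c)\prod_{t=1}^n(1-h_t)$, holds with no stationarity assumption. Because $a_t\gamma_t \le h_t \le 1$, every factor $1-a_t\gamma_t$ is nonnegative and dominates $1-h_t$, so replacing each $1-h_t$ by $1-a_t\gamma_t$ only enlarges the product; applying the elementary inequality $1-x\le e^{-x}$ term by term then yields $\prod_{t=1}^n(1-a_t\gamma_t)\le\exp\bigl(-\sum_{t=1}^n a_t\gamma_t\bigr)$, which is exactly \eqref{eq:tail-L}. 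The almost-sure hitting claim follows by monotone passage to the limit: if $\sum_{t\ge 1}a_t\gamma_t=\infty$ the exponential envelope forces $\Prob(\tau_{A_\varepsilon}>n)\to 0$, and since $\{\tau_{A_\varepsilon}>n\}\downarrow\{\tau_{A_\varepsilon}=\infty\}$ we obtain $\Prob(\tau_{A_\varepsilon}=\infty)=0$.

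I do not anticipate a serious obstacle; the only points needing care are (i) verifying $\mathcal{L}_t\in\mathcal{F}_{t-1}$ so that $\gamma_t$ is a well-defined conditional probability given the survival event, and (ii) confirming $a_t\gamma_t\le 1$ so that the substituted product factors remain in $[0,1]$ — both immediate from $a_t\gamma_t\le h_t\le 1$. The one trap to avoid is routing the tail bound through Theorem~\ref{thm:hazard-bounds}(i) with constants $a_t\gamma_t$: that theorem requires $p_t\ge a_t\gamma_t$ on all of $H_{t-1}$, whereas here the per-step floor holds only on the smaller event $H_{t-1}\cap\mathcal{L}_t$. The correct vehicle is therefore the deterministic exact product \eqref{eq:exact-survival-product} combined with the averaged bound $h_t\ge a_t\gamma_t$, not a direct appeal to the per-step theorem.
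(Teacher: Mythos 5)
Your proof is correct and takes essentially the same route as the paper's: you establish the averaged hazard floor $h_t \ge a_t\gamma_t$ by invoking Lemma~\ref{lem:conditional-witness} (the paper re-derives that same tower-property computation inline), then insert it into the exact product identity \eqref{eq:exact-survival-product} and apply $1-x\le e^{-x}$, exactly as the paper does. Your closing caution is also well placed: Theorem~\ref{thm:hazard-bounds}(i) with constants $a_t\gamma_t$ would indeed be inapplicable here, since the pointwise floor $p_t\ge a_t$ holds only on $H_{t-1}\cap\mathcal{L}_t$, not on all of $H_{t-1}$.
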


\begin{proof}
By definition, $p_t=\Prob(E_t\mid\mathcal F_{t-1})$ and $H_{t-1}\in\mathcal F_{t-1}$, so
$$
h_t=\Prob(E_t\mid H_{t-1})
=
\E[\mathbf 1_{E_t}\mid H_{t-1}]
=
\E\!\left[\E[\mathbf 1_{E_t}\mid\mathcal F_{t-1}]\ \middle|\ H_{t-1}\right]
=
\E[p_t\mid H_{t-1}],
$$
which is just the tower property.

On the event $H_{t-1}\cap\mathcal L_t$, Lemma~\ref{lem:lshade-pt} gives $p_t\ge a_t$.
Therefore,
$$
\E[p_t\mid H_{t-1}]
\ge
\E[a_t\,\mathbf 1_{\mathcal L_t}\mid H_{t-1}]
=
a_t\,\Prob(\mathcal L_t\mid H_{t-1})
=
a_t\gamma_t,
$$
where we used that $a_t$ is deterministic.
Insert $h_t\ge a_t\gamma_t$ into the exact product identity \eqref{eq:exact-survival-product}
to get the product bound, and use $1-x\le e^{-x}$ to obtain the exponential bound.
The final claim follows because if $\sum_t a_t\gamma_t=\infty$ then
$\exp(-\sum_{t=1}^n a_t\gamma_t)\to 0$, so $\Prob(\tau_{A_\varepsilon}=\infty)=0$.
\end{proof}

\begin{remark}
The bound \eqref{eq:tail-L} decomposes the survival probability into:
\begin{itemize}
\item a \emph{theoretical factor} $a_t$ (explicit but typically small), and
\item an \emph{empirical frequency} $\gamma_t = \Prob(\mathcal{L}_t \mid H_{t-1})$ measuring how often the witness configuration occurs.
\end{itemize}
Even if $a_t$ is tiny, the cumulative sum $\sum_{t \le n} a_t \gamma_t$ can grow large 
when $\mathcal{L}_t$ occurs frequently, explaining why worst-case bounds are conservative 
yet almost-sure convergence still holds.
\end{remark}

%========================================================
\section{Morse functions}\label{sec:morse-bounds}

Let $D \subset \mathbb{R}^d$ be a bounded domain such that $[l,u]^d \subset D$, and let $f \in C^2(D,\mathbb{R})$. Recall that $f$ is a Morse function on $D$ if all its critical points are nondegenerate, i.e., $\nabla f(x) = 0$ yields $\det\left(\nabla^2 f(x)\right) \neq 0$, $x \in \operatorname{int}(D)$. 

It is well known that a Morse function defined on $[l,u]^d$ has finitely many critical points, and hence finitely many local minima.%, maxima, and saddle points.

Let $x^\star \in \text{\rm Int}\,([l,u]^d)$ be a local minimum of $f$.
By Taylor's theorem, there exists a neighborhood of $x^\star$ such that
\begin{equation}\label{eq:morse_lemma}
f(x^\star + y)
=
f(x^\star)
+ \frac{1}{2}\, y^\top H(x^\star)\, y
+ o(\|y\|^2),
\end{equation}
where $H(x^\star) = \nabla^2 f(x^\star)$ is the Hessian at $x^\star$.
Throughout, for symmetric matrices $A,B$, we write $A \preceq B$ to denote that $B-A$ is positive semidefinite.

Since $x^\star$ is a local minimum and $f$ is Morse, the Hessian $H(x^\star)$ is positive definite.
Let
$$
\mu_0 = \lambda_{\min}\!\bigl(H(x^\star)\bigr) > 0,
\qquad
L_0 = \lambda_{\max}\!\bigl(H(x^\star)\bigr) < \infty.
$$

\begin{remark}\label{rem:morse}
There exist constants $0 < \mu \le L < \infty$ and a radius $r_0>0$ such that
$$
\mu I \preceq \nabla^2 f(x) \preceq L I,
\qquad
\forall x \in B(x^\star,r_0).
$$
This follows from continuity of $\nabla^2 f$ and $H(x^\star)\succ0$; for example one may take $\mu=\mu_0/2$ and $L=2L_0$ for $r_0$ small enough.
\end{remark}

The ratio $\kappa = \frac{L}{\mu} \ge 1$ is the condition number of the Hessian in the neighborhood of the minimum $x^\star$.

The function $f$ has an $L$-Lipschitz continuous gradient and is $\mu$-strongly convex on $B(x^\star,r_0)$.
For $\varepsilon>0$ define the local sublevel set
$$
A_\varepsilon=\{x\in B(x^\star,r_0): f(x)\le f(x^\star)+\varepsilon\}.
$$
If $2\varepsilon\le \mu r_0^2$, then $A_\varepsilon$ is convex and satisfies 
\begin{equation}\label{eq:sublevel-ellipsoid}
B\!\left(x^\star,\sqrt{\tfrac{2\varepsilon}{L}}\right)
\ \subseteq\ 
A_\varepsilon
\ \subseteq\ 
B\!\left(x^\star,\sqrt{\tfrac{2\varepsilon}{\mu}}\right).
\end{equation}
\begin{comment}
\begin{proof}
Since $f$ is $\mu$-strongly convex on $B(x^\star,r_0)$ and $A_\varepsilon \subset B(x^\star,r_0)$, 
the sublevel set $A_\varepsilon$ is convex.

\medskip
\noindent\textbf{Right inclusion.}
By $\mu$-strong convexity and the first-order condition $\nabla f(x^\star)=0$,
$$
f(x) \;\ge\; f(x^\star) + \nabla f(x^\star)^\top(x - x^\star) + \frac{\mu}{2}\|x - x^\star\|^2
\;=\; f(x^\star) + \frac{\mu}{2}\|x - x^\star\|^2
$$
for all $x \in B(x^\star,r_0)$.
If $x \in A_\varepsilon$, then $f(x) \le f(x^\star) + \varepsilon$, so
$$
\frac{\mu}{2}\|x - x^\star\|^2 \;\le\; \varepsilon
\qquad\Longrightarrow\qquad
\|x - x^\star\| \;\le\; \sqrt{\frac{2\varepsilon}{\mu}}.
$$
Hence $A_\varepsilon \subseteq B\bigl(x^\star,\sqrt{2\varepsilon/\mu}\bigr)$.

\medskip
\noindent\textbf{Left inclusion.}
Since $\nabla^2 f(x) \preceq L I$ on $B(x^\star,r_0)$, the gradient is $L$-Lipschitz, 
which implies
$$
f(x) \;\le\; f(x^\star) + \nabla f(x^\star)^\top(x - x^\star) + \frac{L}{2}\|x - x^\star\|^2
\;=\; f(x^\star) + \frac{L}{2}\|x - x^\star\|^2.
$$
If $\|x - x^\star\| \le \sqrt{2\varepsilon/L}$, then
$$
f(x) \;\le\; f(x^\star) + \frac{L}{2} \frac{2\varepsilon}{L} \;=\; f(x^\star) + \varepsilon,
$$
so $x \in A_\varepsilon$.
Hence $B\bigl(x^\star,\sqrt{2\varepsilon/L}\bigr) \subseteq A_\varepsilon$.
\end{proof}
\end{comment}

Throughout this section, we assume that $2\varepsilon\leq \mu r^2_0$.%$\sqrt{2\varepsilon/\mu} \le r_0$.
\begin{remark}[Replacing the Morse assumption by quadratic growth]
The arguments in Section~\ref{sec:morse-bounds} do not rely on global Morse theory
(nondegeneracy of saddles/maxima). It suffices that, for the relevant minimizer set $X^\star$,
there exist constants $\mu,L>0$ and a neighborhood $U$ of $X^\star$ such that for all $x\in U$,
$$
f^\star + \frac{\mu}{2}\,\mathrm{dist}(x,X^\star)^2
\ \le\ f(x)\ \le\
f^\star + \frac{L}{2}\,\mathrm{dist}(x,X^\star)^2.
$$
The lower bound is a uniform quadratic growth condition; for nonsmooth objectives it can be
formulated via tilt stability/metric regularity of the limiting subdifferential
\cite{DrusvyatskiyLewis2013,RockafellarWets1998}.
In the smooth case, these local bounds follow from the second-order sufficient condition
$\nabla f(x^\star)=0$ and $\nabla^2 f(x^\star)\succ 0$ together with continuity of $\nabla^2 f$
\cite{NocedalWright2006}, which is also guaranteed at a nondegenerate Morse minimum
\cite{Milnor1963}.
\end{remark}

\subsection{Some preliminary results}
\begin{lemma}\label{lem:safe-ball}
Let $f$ be a Morse function, and let the number $\varepsilon$ be chosen such that $A_{\varepsilon/2}\subset B(x^\star,r_0)$.
If $x\in A_{\varepsilon/4}$, then
\begin{equation}\label{eq:safe-ball}
B\left(x,\, r_{\mathrm{safe}}\right) \subseteq A_{\varepsilon/2},
\qquad
r_{\mathrm{safe}}
=\left(1-\frac{1}{\sqrt{2}}\right)\sqrt{\frac{\varepsilon}{L}}.
\end{equation}
\end{lemma}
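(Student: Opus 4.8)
The plan is to prove the inclusion pointwise: fix $x\in A_{\varepsilon/4}$ and an arbitrary $y$ with $\|y-x\|\le r_{\mathrm{safe}}$, and show that $f(y)\le f(x^\star)+\varepsilon/2$ together with $y\in B(x^\star,r_0)$, so that $y\in A_{\varepsilon/2}$. The engine is the quadratic upper bound (descent lemma) supplied by $L$-smoothness on $B(x^\star,r_0)$, and the constant $1-1/\sqrt2$ is tuned precisely so that this bound closes with equality. A naive ball-in-ball argument through $x^\star$ (using $B(x^\star,\sqrt{\varepsilon/L})\subseteq A_{\varepsilon/2}$) fails, because $x\in A_{\varepsilon/4}$ only controls $\|x-x^\star\|$ by $\sqrt{\varepsilon/(2\mu)}$, which would force a spurious condition-number factor; hence I route through function values instead.

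First I would record the two facts I need about $x$. From the right-hand ellipsoid inclusion \eqref{eq:sublevel-ellipsoid} applied at level $\varepsilon/4$, membership $x\in A_{\varepsilon/4}$ yields $\|x-x^\star\|\le\sqrt{\varepsilon/(2\mu)}$, which by the standing assumption $2\varepsilon\le\mu r_0^2$ is at most $r_0/2$; and by definition $f(x)-f(x^\star)\le\varepsilon/4$. Next I would bound the gradient at $x$. Since $\nabla f(x^\star)=0$ and $x^\star$ is the global minimizer of $f$ on the convex set $B(x^\star,r_0)$, the gradient step $x^+=x-L^{-1}\nabla f(x)$ satisfies $\|x^+-x^\star\|\le 2\|x-x^\star\|\le r_0$ (using $\|\nabla f(x)\|\le L\|x-x^\star\|$), so it remains in $B(x^\star,r_0)$. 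Applying the descent lemma on the segment $[x,x^+]$ gives $f(x^\star)\le f(x^+)\le f(x)-\tfrac{1}{2L}\|\nabla f(x)\|^2$, and rearranging produces the standard smoothness bound $\|\nabla f(x)\|\le\sqrt{2L\,(f(x)-f(x^\star))}\le\sqrt{L\varepsilon/2}$.

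With these in hand, I would verify that the whole segment $[x,y]$ lies in $B(x^\star,r_0)$, via $\|y-x^\star\|\le\sqrt{\varepsilon/(2\mu)}+r_{\mathrm{safe}}<r_0$, and then apply the descent lemma once more:
\begin{align*}
f(y) &\le f(x)+\nabla f(x)^\top(y-x)+\tfrac{L}{2}\|y-x\|^2\\
&\le f(x^\star)+\tfrac{\varepsilon}{4}+\sqrt{\tfrac{L\varepsilon}{2}}\,r_{\mathrm{safe}}+\tfrac{L}{2}\,r_{\mathrm{safe}}^2.
\end{align*}
Substituting $r_{\mathrm{safe}}=c\sqrt{\varepsilon/L}$ with $c=1-1/\sqrt2$ turns the last two terms into $\varepsilon\bigl(c/\sqrt2+c^2/2\bigr)$, and the algebraic identity $c/\sqrt2+c^2/2=\tfrac14$ (which holds exactly for this value of $c$) collapses the bound to $f(x^\star)+\varepsilon/2$. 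Hence $f(y)\le f(x^\star)+\varepsilon/2$ and $y\in B(x^\star,r_0)$, so $y\in A_{\varepsilon/2}$; since $y$ was arbitrary in $B(x,r_{\mathrm{safe}})$, this gives $B(x,r_{\mathrm{safe}})\subseteq A_{\varepsilon/2}$.

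The main obstacle is not the closing algebra but the careful localization of the classical smoothness inequalities: because the bounds $\mu I\preceq\nabla^2 f\preceq LI$ hold only on $B(x^\star,r_0)$, I must confirm at each step that the auxiliary point $x^+$ (for the gradient estimate) and the full segment $[x,y]$ stay inside this ball, using $2\varepsilon\le\mu r_0^2$ to convert the sublevel radii into fractions of $r_0$. Once this localization is in place, the condition-number-free radius follows exactly because the gradient is controlled through the function gap $\sqrt{2L\,(f(x)-f(x^\star))}$ rather than through $L\|x-x^\star\|$, the latter of which would introduce a spurious $\sqrt{L/\mu}$ factor and destroy the clean constant.
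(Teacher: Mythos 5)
Your proof is correct and follows essentially the same route as the paper's: the descent lemma from $L$-smoothness combined with the function-gap gradient bound $\|\nabla f(x)\|\le\sqrt{2L\,(f(x)-f^\star)}$, closed by the same algebraic identity $c/\sqrt{2}+c^{2}/2=1/4$ for $c=1-1/\sqrt{2}$. Your explicit localization steps (checking that the auxiliary gradient step, the segment $[x,y]$, and $y$ itself stay inside $B(x^\star,r_0)$) are details the paper's proof glosses over, so they strengthen rather than change the argument.
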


\begin{proof}
Let $y\in B(x,r_{\mathrm{safe}})$ and set $t=\|y-x\|$.
Since $x\in A_{\varepsilon/4}\subset B(x^\star,r_0)$, the Hessian bound
$\nabla^2 f \preceq L I$ applies, and Taylor’s theorem yields
$$
f(y)
\le
f(x)+\langle\nabla f(x),y-x\rangle+\frac{L}{2}t^2.
$$

Moreover, since $f$ is $L$-smooth on $B(x^\star,r_0)$ with minimizer $x^\star$,
we have $\|\nabla f(x)\|^2 \le 2L\,(f(x)-f^\star)$ for all $x\in B(x^\star,r_0)$.
Thus, for $x\in A_{\varepsilon/4}$,
$$
\|\nabla f(x)\|\le \sqrt{2L  \frac{\varepsilon}{4}}=\sqrt{\frac{L\varepsilon}{2}}.
$$
\begin{comment}
By $L$-smoothness, for all $x, y \in B(x^\star, r_0)$:
$$
f(y) \le f(x) + \langle \nabla f(x), y - x \rangle + \frac{L}{2}\|y - x\|^2.
$$
The right-hand side is a quadratic in $y$, minimized at 
$y = x - \frac{1}{L}\nabla f(x)$, giving
$$
\inf_y \left\{ f(x) + \langle \nabla f(x), y - x \rangle + \frac{L}{2}\|y - x\|^2 \right\}
= f(x) - \frac{1}{2L}\|\nabla f(x)\|^2.
$$
Since $f(y) \ge f^\star$ for all $y$, we obtain
$$
f^\star \le f(x) - \frac{1}{2L}\|\nabla f(x)\|^2,
$$
which rearranges to $\|\nabla f(x)\|^2 \le 2L(f(x) - f^\star)$.
\end{comment}

Thus,
$$
f(y)-f(x)
\le
t\sqrt{\frac{L\varepsilon}{2}}+\frac{L}{2}t^2.
$$
For $t\le r_{\mathrm{safe}}$, the right-hand side is at most $\varepsilon/4$,
and therefore
$$
f(y)\le f(x)+\frac{\varepsilon}{4}\le f^\star+\frac{\varepsilon}{2}.\qedhere
$$
\end{proof}

\begin{proposition}\label{prop:Cr-sufficient-new}
Let $f$ be a Morse function, and let the number $\varepsilon$ be chosen such that $A_{\varepsilon/2}\subset B(x^\star,r_0)$.
Define $\delta_{\max}(\varepsilon,r)= (\sqrt{2}-1)\sqrt{\frac{\varepsilon}{Lr}}$.

Then $f$ is $(\varepsilon,r,\delta)$-crossover-stable (see Definition \ref{def:crossover-stable}) on $[l,u]^d$ for every $0\le \delta \le \delta_{\max}(\varepsilon,r)$.
\end{proposition}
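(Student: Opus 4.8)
The plan is to reduce the combinatorial crossover-stability condition to a single bound on the Euclidean displacement $\|u-v\|$ and then reuse the descent inequality already exploited in Lemma~\ref{lem:safe-ball}.

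First I would use the structure of the mixed vector $u$ from Definition~\ref{def:crossover-stable}. Because $u_j=v_j$ on $J$ and $u_j=x_j$ on $J^c$, the displacement $u-v$ is supported on $J^c$, which contains at most $r$ indices since $|J|\ge d-r$; on each such coordinate $|u_j-v_j|=|x_j-v_j|\le\|x-v\|_\infty\le\delta$. Hence
$$
\|u-v\|^2=\sum_{j\notin J}(u_j-v_j)^2\le r\,\delta^2,
\qquad t:=\|u-v\|\le\sqrt{r}\,\delta .
$$
This converts the combinatorial hypotheses ($|J|\ge d-r$ and $\|x-v\|_\infty\le\delta$) into the single scalar bound $t\le\sqrt{r}\,\delta$.

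Next I would record the two facts furnished by $v\in A_{\varepsilon/2}\subset B(x^\star,r_0)$: the value bound $f(v)\le f^\star+\varepsilon/2$ and the gradient bound $\|\nabla f(v)\|\le\sqrt{L\varepsilon}$, the latter being the estimate $\|\nabla f(v)\|^2\le 2L(f(v)-f^\star)$ stated in Lemma~\ref{lem:safe-ball}. Applying the Hessian upper bound $\nabla^2 f\preceq L I$ along the segment from $v$ to $u$ via Taylor's theorem gives
$$
f(u)\le f(v)+\langle\nabla f(v),u-v\rangle+\tfrac{L}{2}\|u-v\|^2
\le f^\star+\tfrac{\varepsilon}{2}+\sqrt{L\varepsilon}\,t+\tfrac{L}{2}t^2 .
$$
The heart of the argument is then a scalar calculation: writing $t=c\sqrt{\varepsilon/L}$, the increment $\sqrt{L\varepsilon}\,t+\tfrac{L}{2}t^2$ equals $\varepsilon\bigl(c+c^2/2\bigr)$, which is at most $\varepsilon/2$ precisely when $c^2+2c-1\le 0$, i.e.\ $c\le\sqrt{2}-1$. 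Since $t\le\sqrt{r}\,\delta$ and $\delta\le\delta_{\max}(\varepsilon,r)=(\sqrt{2}-1)\sqrt{\varepsilon/(Lr)}$, we obtain $t\le(\sqrt{2}-1)\sqrt{\varepsilon/L}$, so the increment is $\le\varepsilon/2$, whence $f(u)\le f^\star+\varepsilon$ and $u\in A_\varepsilon$.

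The only step requiring care—the main (if mild) obstacle—is certifying that the whole segment $[v,u]$ lies inside $B(x^\star,r_0)$, which is what legitimizes the Hessian bound in the Taylor estimate above. I would settle this by the triangle inequality: the ellipsoid inclusion \eqref{eq:sublevel-ellipsoid} gives $\|v-x^\star\|\le\sqrt{\varepsilon/\mu}$, while $\|u-v\|\le(\sqrt{2}-1)\sqrt{\varepsilon/L}\le(\sqrt{2}-1)\sqrt{\varepsilon/\mu}$ (using $L\ge\mu$), so $\|u-x^\star\|\le\sqrt{2}\,\sqrt{\varepsilon/\mu}\le r_0$ under the standing assumption $2\varepsilon\le\mu r_0^2$; convexity of the ball then places the entire segment inside $B(x^\star,r_0)$, and the argument closes.
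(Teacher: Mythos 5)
Your proof is correct and takes essentially the same route as the paper's: bound $\|u-v\|\le\sqrt{r}\,\delta$, use $f(v)\le f^\star+\varepsilon/2$ together with the gradient estimate $\|\nabla f(v)\|\le\sqrt{L\varepsilon}$, apply the $L$-smooth Taylor bound, and reduce to the same quadratic inequality ($c+c^2/2\le 1/2$ iff $c\le\sqrt{2}-1$), with the identical triangle-inequality verification that $\|u-x^\star\|\le\sqrt{2\varepsilon/\mu}\le r_0$ under the standing assumption $2\varepsilon\le\mu r_0^2$. Your explicit observation that convexity of the ball places the entire segment $[v,u]$ inside $B(x^\star,r_0)$, which is what licenses the Hessian bound in the Taylor step, is a small tightening that the paper leaves implicit.
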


\begin{proof}
Since $u$ differs from $v$ in at most $r$ coordinates with $|u_j - v_j| \le \delta$, 
we have $\|u - v\| \le \sqrt{r}\,\delta$.

For $v \in A_{\varepsilon/2} \subset B(x^\star, r_0)$, strong convexity gives 
$\|v - x^\star\| \le \sqrt{\varepsilon/\mu}$.
For $\delta \le \delta_{\max}(\varepsilon,r)$,
$$
\|u - x^\star\| \le \|u - v\| + \|v - x^\star\|
\le (\sqrt{2}-1)\sqrt{\frac{\varepsilon}{L}} + \sqrt{\frac{\varepsilon}{\mu}}
\le \sqrt{\frac{2\varepsilon}{\mu}} \le r_0,
$$
so $u \in B(x^\star, r_0)$.

Since $f$ is convex and $L$-smooth on $B(x^\star, r_0)$, the standard gradient bound $\|\nabla f(v)\|^2 \le 2L(f(v) - f^\star)$ holds, giving  $\|\nabla f(v)\| \le \sqrt{L\varepsilon}$.
\begin{comment}

By $L$-smoothness, for all $x, y \in B(x^\star, r_0)$:
$$
f(y) \le f(x) + \langle \nabla f(x), y - x \rangle + \frac{L}{2}\|y - x\|^2.
$$
Minimizing the right-hand side over $y$, the minimum is achieved at 
$y = x - \frac{1}{L}\nabla f(x)$, giving
$$
\inf_{y} \left\{ f(x) + \langle \nabla f(x), y - x \rangle + \frac{L}{2}\|y - x\|^2 \right\}
= f(x) - \frac{1}{2L}\|\nabla f(x)\|^2.
$$
By convexity, $f(y) \ge f^\star$ for all $y$, so
$$
f^\star \le f(x) - \frac{1}{2L}\|\nabla f(x)\|^2.
$$
Rearranging: $\|\nabla f(x)\|^2 \le 2L(f(x) - f^\star)$.

\end{comment}
Taylor expansion yields
$$
f(u) \le f(v) + \sqrt{L\varepsilon} \sqrt{r}\,\delta + \frac{L}{2}r\delta^2.
$$
Since $f(v) \le f^\star + \varepsilon/2$, the condition $f(u) \le f^\star + \varepsilon$ reduces to
$$
\sqrt{L\varepsilon}  \sqrt{r}\,\delta + \frac{Lr\delta^2}{2} \le \frac{\varepsilon}{2}.
$$
Substituting $t = \sqrt{Lr/\varepsilon}\,\delta$ gives $t + t^2/2 \le 1/2$, 
which holds for $t \le \sqrt{2} - 1$.
\end{proof}

Before stating the next result, we isolate a special geometric case that
captures the core mechanism behind successful donor steps.
We first analyze the situation $i=b$, for which the current-to-$p$best/1
mutation reduces to a pure donor ray emanating from $x_b$.

Proposition~\ref{prop:success-F-ib} is only relevant once the run has produced a $p$-best donor in the deeper sublevel set $A_{\varepsilon/4}$.
Empirically, this is not a rare event in the stabilized exploitation phase: in Supplementary Material~SM--II.2 we show (CEC2017, $D=10$ and $D=30$) that
when the best first enters $A_{\varepsilon/4}$, a large fraction of the population
already lies in the surrounding basin sublevel set.
Hence the $p$-best pool
contains the deep point and the probability of selecting it as $x_b$ is bounded away from zero for the population sizes encountered by L-SHADE.

\begin{proposition}\label{prop:success-F-ib}
Let $f$ be a Morse function and assume $\varepsilon>0$ is such that
$A_{\varepsilon/2}\subset B(x^\star,r_0)$.
Fix indices $b\in\mathcal P$ and $(r_1,r_2)\in S_{t,b}^{(1)}\times S_{t,b}^{(2)}$ with $r_2\neq r_1$,
and suppose $x_b\in A_{\varepsilon/4}$.
Define the (boundary unrepaired) mutant ray
$$
\tilde{v}_b(F)=x_b+F\,(x_{r_1}-x_{r_2}),\qquad F\in[0,1].
$$
Fix any $0<F^-<F^+\le 1$ and any $\Delta_F\in(0,F^+-F^-]$.
If
\begin{equation}\label{eq:Delta-bound}
\|x_{r_1}-x_{r_2}\| \le \frac{r_{\mathrm{safe}}}{F^-+\Delta_F},
\end{equation}
then
$$
[F^-,\,F^-+\Delta_F]\subseteq \mathcal I_t(b,b,r_1,r_2),\quad  \text{i.e.}\quad 
\lambda\!\bigl(\mathcal I_t(b,b,r_1,r_2)\bigr)\ge \Delta_F.
$$
\end{proposition}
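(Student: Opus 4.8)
The plan is to reduce the claim to a one-dimensional displacement estimate fed into the safe-ball Lemma~\ref{lem:safe-ball}. First I would exploit the special case $i=b$: in the current-to-$p$best/1 difference vector $\Delta_b(b,r_1,r_2)=(x_b-x_b)+(x_{r_1}-x_{r_2})$ the first bracket vanishes, so the unrepaired mutant collapses to the pure donor ray $\tilde v_b(F)=x_b+F\,(x_{r_1}-x_{r_2})$ based at $x_b$. The displacement from the base point is therefore linear in $F$, namely $\|\tilde v_b(F)-x_b\|=F\,\|x_{r_1}-x_{r_2}\|$, which makes the whole target interval easy to control by its right endpoint.

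Next I would invoke Lemma~\ref{lem:safe-ball}: since $x_b\in A_{\varepsilon/4}$, the ball $B(x_b,r_{\mathrm{safe}})$ with $r_{\mathrm{safe}}=(1-1/\sqrt2)\sqrt{\varepsilon/L}$ is contained in $A_{\varepsilon/2}$. It then suffices to certify that every $F$ in $[F^-,F^-+\Delta_F]$ keeps $\tilde v_b(F)$ inside this safe ball. By monotonicity in $F$ we have $\|\tilde v_b(F)-x_b\|\le(F^-+\Delta_F)\,\|x_{r_1}-x_{r_2}\|$, and substituting the hypothesis \eqref{eq:Delta-bound} gives exactly $\le r_{\mathrm{safe}}$. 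Hence $\tilde v_b(F)\in B(x_b,r_{\mathrm{safe}})\subseteq A_{\varepsilon/2}$ for the entire interval. Note that equality in \eqref{eq:Delta-bound} at the right endpoint forces $\|\tilde v_b(F)-x_b\|=r_{\mathrm{safe}}$, so it is the closed-ball version of Lemma~\ref{lem:safe-ball} (covered by its proof, which treats $t\le r_{\mathrm{safe}}$) that is precisely what is required.

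The one point needing care is that $\mathcal I_t$ in \eqref{eq:success-F-window} is defined through the \emph{repaired} mutant $v_b=\mathsf{BH}(\tilde v_b,x_b)$ rather than the raw ray. I would resolve this by observing that $A_{\varepsilon/2}\subset B(x^\star,r_0)\subseteq[l,u]^d$, the latter inclusion holding for the small radius $r_0$ of Remark~\ref{rem:morse} (possibly after shrinking it), since $x^\star$ is interior. Because $\tilde v_b(F)\in A_{\varepsilon/2}$ already lies in $[l,u]^d$, no coordinate violates the search bounds, the midpoint-repair operator $\mathsf{BH}$ acts as the identity, and therefore $v_b(F;b,r_1,r_2)=\tilde v_b(F)\in A_{\varepsilon/2}$.

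Finally, the constraint $\Delta_F\le F^+-F^-$ guarantees $[F^-,F^-+\Delta_F]\subseteq[F^-,F^+]$, so each such $F$ satisfies both membership conditions in the definition of $\mathcal I_t(b,b,r_1,r_2)$, yielding the interval inclusion and hence $\lambda\!\bigl(\mathcal I_t(b,b,r_1,r_2)\bigr)\ge\Delta_F$. The main (indeed the only) obstacle is the bookkeeping around boundary repair; once the in-bounds reduction is in place, the geometric core is an immediate consequence of the linearity of $\tilde v_b$ in $F$ together with the safe-ball lemma.
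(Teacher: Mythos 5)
Your proof is correct and its geometric core coincides with the paper's: the paper likewise observes that $\|\tilde v_b(F)-x_b\|=F\,\|x_{r_1}-x_{r_2}\|\le(F^-+\Delta_F)\,\|x_{r_1}-x_{r_2}\|\le r_{\mathrm{safe}}$ for all $F\in[F^-,F^-+\Delta_F]$ and then applies Lemma~\ref{lem:safe-ball} to conclude $\tilde v_b(F)\in A_{\varepsilon/2}$. The one place you genuinely diverge is the boundary-repair bookkeeping, which you rightly flag since $\mathcal I_t$ in \eqref{eq:success-F-window} is defined through the repaired mutant. You resolve it by arguing that $\tilde v_b(F)\in A_{\varepsilon/2}\subset B(x^\star,r_0)\subseteq[l,u]^d$, so $\mathsf{BH}$ acts as the identity; this works but requires the extra geometric condition $B(x^\star,r_0)\subseteq[l,u]^d$ (hence your ``possibly after shrinking $r_0$'' caveat, which in turn interacts with the standing constraint $2\varepsilon\le\mu r_0^2$). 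The paper instead handles this in Remark~\ref{rem:boundary-repair} via non-expansiveness of midpoint repair with respect to the parent, $\|\mathsf{BH}(\tilde v,x)-x\|\le\|\tilde v-x\|$: even if the raw ray exits $[l,u]^d$, the repaired mutant stays within $B(x_b,r_{\mathrm{safe}})$ and Lemma~\ref{lem:safe-ball} applies unchanged. The paper's route is thus slightly more robust --- it needs no relation between $B(x^\star,r_0)$ and the search box --- while yours buys the stronger conclusion that repair is literally inactive on the certified $F$-window, at the cost of one additional (mild, interior-point) assumption. Your remark about the equality case at the right endpoint needing the closed-ball form of Lemma~\ref{lem:safe-ball} is accurate and is indeed covered by that lemma's proof, which allows $t\le r_{\mathrm{safe}}$.
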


\begin{proof}
Let $F\in[F^-,F^-+\Delta_F]$. Then
$$
\|\tilde{v}_b(F)-x_b\|
=
F\,\|x_{r_1}-x_{r_2}\|
\le
(F^-+\Delta_F)\,\frac{r_{\mathrm{safe}}}{F^-+\Delta_F}
=
r_{\mathrm{safe}}.
$$
Hence $\tilde{v}_b(F)\in B(x_b,r_{\mathrm{safe}})$.
Since $x_b\in A_{\varepsilon/4}$, Lemma~\ref{lem:safe-ball} yields
$B(x_b,r_{\mathrm{safe}})\subseteq A_{\varepsilon/2}$, so $\tilde{v}_b(F)\in A_{\varepsilon/2}$.
Therefore $F\in\mathcal I_t(b,b,r_1,r_2)$ for every $F\in[F^-,F^-+\Delta_F]$,
which proves the inclusion. The measure bound follows immediately.
\end{proof}

\begin{lemma}
\label{lem:strong-convex-success}
Assume that $f$ is $\mu$--strongly convex on the line segment
$\{x_b + \tau (v_b(F_0)-x_b): \tau\in[0,1]\}$.
Suppose that
$$
x_b \in A_{\varepsilon}
\qquad\text{and}\qquad
v_b(F_0)\in A_{\varepsilon/4}
$$
for some $F_0\in(0,1]$.
Define
$$
v_b(F) = x_b + \frac{F}{F_0}\bigl(v_b(F_0)-x_b\bigr),
\qquad
c = \frac{\mu\|v_b(F_0)-x_b\|^2}{8\varepsilon}.
$$

Then the success--$F$ interval $\mathcal I_t(b,b,r_1,r_2)$ satisfies
$$
\lambda(\mathcal I_t)\ \ge\ \frac{F_0}{3}.
$$

Moreover, if $\theta_-(c)\in(0,1)$ is the smaller root of $16c\theta^2-(3+16c)\theta +2 =0$, then
$$
\lambda(\mathcal I_t)\ \ge\ F_0\,(1-\theta_-(c)),\qquad 
\lim_{c\downarrow 0}\theta_-(c)=2/3.
$$
\end{lemma}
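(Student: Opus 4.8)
The plan is to collapse the multivariate success condition to a one-dimensional strong-convexity estimate along the donor ray. Write $w = v_b(F_0)-x_b$ and $g(\theta) = f(x_b + \theta w)$ for $\theta \in [0,1]$. Since the stated parametrization gives $v_b(F) = x_b + (F/F_0)\,w$, the membership $v_b(F)\in A_{\varepsilon/2}$ is exactly $g(F/F_0)\le f^\star + \varepsilon/2$. Thus it suffices to locate an interval of scalars $\theta = F/F_0$ on which $g(\theta)\le f^\star + \varepsilon/2$ and then rescale the length by $F_0$. Because the strong-convexity bound yields only an upper estimate on $g$, the $\theta$-set I produce is a \emph{subset} of the true success set, so its rescaled length is a valid lower bound on $\lambda(\mathcal I_t)$, which is the correct direction.

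The first substantive step is to record that the hypothesis makes $g$ an $m$-strongly convex scalar function on $[0,1]$ with $m = \mu\|w\|^2 = 8\varepsilon c$, since $g''(\theta) = w^\top \nabla^2 f(\cdot)\,w \ge \mu\|w\|^2$ on the segment. The endpoint assumptions read $g(0)=f(x_b)\le f^\star+\varepsilon$ and $g(1)=f(v_b(F_0))\le f^\star+\varepsilon/4$. I then invoke the strong-convexity chord inequality with endpoints $0$ and $1$,
$$
g(\theta)\ \le\ \theta\,g(1) + (1-\theta)\,g(0) - \tfrac{m}{2}\,\theta(1-\theta),
$$
and substitute the endpoint bounds to get $g(\theta)\le f^\star + \varepsilon\bigl(1-\tfrac{3\theta}{4}\bigr) - 4\varepsilon c\,\theta(1-\theta)$. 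Imposing the target $g(\theta)\le f^\star+\varepsilon/2$ and clearing the common factor $\varepsilon$ reduces, by elementary algebra, precisely to the quadratic inequality $16c\,\theta^2 - (3+16c)\,\theta + 2 \le 0$.

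The concluding step is a sign analysis of $q(\theta) = 16c\,\theta^2 - (3+16c)\,\theta + 2$. From $q(0)=2>0$ and $q(1)=-1<0$ we read off that the smaller root satisfies $\theta_-(c)\in(0,1)$ while the larger root exceeds $1$; hence $q\le 0$ on the entire interval $[\theta_-(c),1]$, which is the certified success range in $\theta$. Rescaling by $F_0$ gives $[\theta_-(c)F_0,\,F_0]\subseteq \mathcal I_t$ and therefore $\lambda(\mathcal I_t)\ge F_0\bigl(1-\theta_-(c)\bigr)$. For the clean bound I evaluate $q(2/3) = -\tfrac{32c}{9} < 0$, which forces $\theta_-(c)<2/3$ and thus $1-\theta_-(c)>1/3$, yielding $\lambda(\mathcal I_t)\ge F_0/3$. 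Finally, as $c\downarrow 0$ the quadratic degenerates to the linear equation $2-3\theta=0$, so by continuity of the roots $\theta_-(c)\to 2/3$.

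All computations are elementary, and I do not expect a genuine obstacle. The only points needing care are getting the \emph{direction} and \emph{scalar constant} of the chord inequality right (an upper bound equal to the chord minus a quadratic gap, with constant $m=\mu\|w\|^2$ rather than $\mu$), and confirming that the working window $[F^-,F^+]$ contains $[\theta_-(c)F_0,\,F_0]$ so that the length estimate transfers to $\mathcal I_t$ as defined; beyond this bookkeeping the argument is a direct strong-convexity interpolation followed by a quadratic root count.
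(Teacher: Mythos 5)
Your proof is correct and follows essentially the same route as the paper: the strong-convexity chord inequality along the segment $[x_b, v_b(F_0)]$ with the endpoint bounds reduces membership in $A_{\varepsilon/2}$ to the quadratic inequality $16c\,\theta^2-(3+16c)\,\theta+2\le 0$, from which both length bounds follow. Your sign checks $q(1)=-1<0$ (certifying $\theta_+(c)>1$) and $q(2/3)=-32c/9<0$ are slightly more explicit than the paper, which obtains the $F_0/3$ bound by simply dropping the nonnegative term $4c\,\theta(1-\theta)$, but the argument is the same in substance.
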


\begin{proof}
Let $z=v_b(F_0)-x_b$ and write
$$
v_b(\theta F_0) = (1-\theta)x_b + \theta v_b(F_0),
\qquad \theta\in[0,1].
$$

By strong convexity of $f$ on the segment $[x_b,v_b(F_0)]$,
$$
f(v_b(\theta F_0))
\le
(1-\theta)f(x_b) + \theta f(v_b(F_0))
- \frac{\mu}{2}\theta(1-\theta)\|z\|^2.
$$

Using $f(x_b)\le f^\star+\varepsilon$ and
$f(v_b(F_0))\le f^\star+\varepsilon/4$, we obtain
$$
f(v_b(\theta F_0)) - f^\star
\le
\varepsilon\Bigl(1-\frac{3}{4}\theta\Bigr)
- 4\varepsilon c\,\theta(1-\theta).
$$

A sufficient condition for $v_b(\theta F_0)\in A_{\varepsilon/2}$ is therefore
\begin{equation}
\label{eq:theta-condition}
\frac{3}{4}\theta + 4c\,\theta(1-\theta)\ \ge\ \frac12.
\end{equation}

Since $\theta(1-\theta)\ge 0$ for all $\theta\in[0,1]$, the left-hand side of
\eqref{eq:theta-condition} is bounded below by $\frac34\theta$.
Hence \eqref{eq:theta-condition} holds whenever
$
\frac34\theta \ge \frac12
\quad\Longleftrightarrow\quad
\theta \ge \frac23.
$
Thus all $\theta\in[2/3,1]$ yield $v_b(\theta F_0)\in A_{\varepsilon/2}$, and
$$
\lambda(\mathcal I_t)\ \ge\ (1-2/3)F_0 = \frac{F_0}{3}.
$$

Rewriting \eqref{eq:theta-condition} gives the quadratic inequality
$$
16c\,\theta^2 - (3+16c)\theta + 2 \ \le\ 0.
$$
For $c>0$ this holds for $\theta\in[\theta_-(c),\theta_+(c)]$, where
$\theta_-(c)<\theta_+(c)$ are the two roots.
Since $\theta\in[0,1]$, it suffices to take $\theta\ge\theta_-(c)$, which yields
$$
\lambda(\mathcal I_t)\ \ge\ F_0\,(1-\theta_-(c)).\qedhere
$$
\end{proof}

\begin{remark}\label{rem:successF-chain}
Proposition~\ref{prop:success-F-ib} and Lemma~\ref{lem:strong-convex-success} are
complementary rather than alternative tools for certifying a positive
success--$F$ interval length:
$$
\text{(donor concentration)}
\ \Longrightarrow\ 
\text{Prop.~\ref{prop:success-F-ib}}:\ \lambda(\mathcal I_t)\ge \Delta_F>0
\ \Longrightarrow\ 
\text{Lemma~\ref{lem:strong-convex-success} (deep hit)}:\ 
\lambda(\mathcal I_t)\ge \frac{F_0}{3}\ \gg\ \Delta_F .
$$
%Here Proposition~\ref{prop:success-F-ib} is an \emph{activation} result: it guarantees
%that (L1) holds (i.e.\ $\lambda(\mathcal I_t)$ is nonzero) once donor differences are
%sufficiently small.  Lemma~\ref{lem:strong-convex-success} is a \emph{strengthening}:
%in a locally strongly convex basin, the existence of a single deep hit
%$v_b(F_0)\in A_{\varepsilon/4}$ implies that a macroscopic interval of $F$ values
%around $F_0$ remains successful, yielding a much larger lower bound on
%$\lambda(\mathcal I_t)$.
\end{remark}
\begin{remark}\label{rem:boundary-repair}
The midpoint repair operator $\mathsf{BH}$ defined in
\eqref{eq:boundary_handling} is non-expansive with respect to the parent:
for all $x\in[l,u]^d$ and all $\tilde v\in\mathbb R^d$,
$$
\|\mathsf{BH}(\tilde v,x)-x\| \le \|\tilde v-x\|.
$$
Consequently, if $\|\tilde v_b(F)-x_b\|\le r_{\mathrm{safe}}$, then the
repaired mutant
$$
v_b(F)=\mathsf{BH}\bigl(\tilde v_b(F),x_b\bigr)
$$
also satisfies $\|v_b(F)-x_b\|\le r_{\mathrm{safe}}$ and lies in $[l,u]^d$.
Hence, by Lemma~\ref{lem:safe-ball}, $v_b(F)\in A_{\varepsilon/2}$, and the
conclusion of Proposition~\ref{prop:success-F-ib} remains valid under
midpoint repair.
\end{remark}

\begin{comment}
\begin{remark}\label{rem:boundary-repair}
Consider the midpoint repair operator $\mathsf{BH}$ defined in
\eqref{eq:boundary_handling}. For any $x\in[l,u]^d$, any $v\in\mathbb R^d$,
and any coordinate $j\in\{1,\ldots,d\}$, we have
$$
\bigl|\mathsf{BH}(v,x)_j - x_j\bigr|
\le
|v_j-x_j|.
$$
Indeed,
$$
\mathsf{BH}(v,x)_j - x_j
=
\begin{cases}
\displaystyle \frac{l_j-x_j}{2}, & v_j<l_j,\$$2mm]
\displaystyle \frac{u_j-x_j}{2}, & v_j>u_j,\$$2mm]
v_j-x_j, & \text{otherwise},
\end{cases}
$$
and since $v_j<l_j\le x_j$ or $v_j>u_j\ge x_j$ in the first two cases, it follows
that
$$
\bigl|\mathsf{BH}(v,x)_j - x_j\bigr|
\le
|v_j-x_j|.
$$
Summing over coordinates yields the contraction property
$$
\|\mathsf{BH}(v,x)-x\| \le \|v-x\|.
$$

Consequently, if for some $r_{\mathrm{safe}}>0$ a mutant satisfies
$\|\tilde v_b(F)-x_b\|\le r_{\mathrm{safe}}$, then the repaired mutant
$$
v_b^{\mathrm{rep}}(F)
=
\mathsf{BH}\bigl(\tilde v_b(F),x_b\bigr)
$$
also satisfies $\|v_b^{\mathrm{rep}}(F)-x_b\|\le r_{\mathrm{safe}}$ and lies
in $[l,u]^d$.
Therefore, by Lemma~\ref{lem:safe-ball}, we have
$v_b^{\mathrm{rep}}(F)\in A_{\varepsilon/2}$, and the conclusion of
Proposition~\ref{prop:success-F-ib} remains valid under midpoint repair.
\end{remark}
\end{comment}

Define the set of \emph{$\Delta_F$-good pairs} for the case $i=b$:
$$
\mathcal R_{t,\Delta_F}(b) = \bigl\{(r_1,r_2)\in S_{t,b}^{(1)}\times S_{t,b}^{(2)}:
\lambda(\mathcal I_t(b,b,r_1,r_2))\ge\Delta_F\bigr\}.
$$
By Proposition~\ref{prop:success-F-ib}, any pair with $\|x_{r_1}-x_{r_2}\|\le r_{\mathrm{safe}}/(F^-+\Delta_F)$ 
belongs to $\mathcal R_{t,\Delta_F}(b)$.

\begin{assumption}\label{ass:concentration}
There exist subsets $C_1\subseteq S_{t,b}^{(1)}$ and $C_2\subseteq S_{t,b}^{(2)}$ 
with $|C_1|\ge\beta_1 s_{t,b}^{(1)}$ and $|C_2|\ge\beta_2 s_{t,b}^{(2)}$, such that
$$
\max_{r\in C_1\cup C_2}\|x_r-x_b\| \le \frac{r_{\mathrm{safe}}}{2(F^-+\Delta_F)}.
$$
\end{assumption}

\begin{proposition}\label{prop:good-pairs-correct}
Under Assumption~\ref{ass:concentration},  the fraction of $\Delta_F$-good donor pairs satisfies
\begin{equation}\label{eq:c-pair}
c_{\mathrm{pair}}(t)
= \frac{|\mathcal R_{t,\Delta_F}(b)|}{s_{t,b}^{(1)}(s_{t,b}^{(2)}-1)}
\;\ge\;
\beta_1\beta_2\left(1-\frac{1}{\beta_2 s_{t,b}^{(2)}}\right).
\end{equation}
In particular,
$$
c_{\mathrm{pair}}(t)\ge \beta_1\beta_2\,(1+o(1))
\quad \text{as } s_{t,b}^{(2)}\to\infty.
$$
\end{proposition}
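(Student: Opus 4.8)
The plan is to combine the deterministic sufficient condition of Proposition~\ref{prop:success-F-ib} with a triangle-inequality argument and a direct count of admissible donor pairs. First I would recall from Proposition~\ref{prop:success-F-ib} that a pair $(r_1,r_2)$ lies in $\mathcal R_{t,\Delta_F}(b)$ whenever $\|x_{r_1}-x_{r_2}\|\le r_{\mathrm{safe}}/(F^-+\Delta_F)$; it therefore suffices to produce a large family of pairs meeting this distance constraint and to count it.

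Next I would exploit Assumption~\ref{ass:concentration}: for any $r_1\in C_1$ and $r_2\in C_2$, both $x_{r_1}$ and $x_{r_2}$ lie within $r_{\mathrm{safe}}/(2(F^-+\Delta_F))$ of $x_b$, so the triangle inequality gives
$$
\|x_{r_1}-x_{r_2}\|\le\|x_{r_1}-x_b\|+\|x_b-x_{r_2}\|\le\frac{r_{\mathrm{safe}}}{F^-+\Delta_F}.
$$
Hence every admissible pair drawn from $C_1\times C_2$ (with $r_2\neq r_1$, as required by the two-pool sampling) is $\Delta_F$-good, so $\mathcal R_{t,\Delta_F}(b)$ contains all such pairs.

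The crux is then a clean count. For each fixed $r_1\in C_1$ the number of admissible partners $r_2\in C_2\setminus\{r_1\}$ is at least $|C_2|-1$ (the extremal case being $r_1\in C_2$), giving $|\mathcal R_{t,\Delta_F}(b)|\ge|C_1|(|C_2|-1)$. Dividing by the total number $s_{t,b}^{(1)}(s_{t,b}^{(2)}-1)$ of admissible pairs and substituting $|C_1|\ge\beta_1 s_{t,b}^{(1)}$ and $|C_2|\ge\beta_2 s_{t,b}^{(2)}$, I obtain
$$
c_{\mathrm{pair}}(t)\ge\frac{\beta_1 s_{t,b}^{(1)}(\beta_2 s_{t,b}^{(2)}-1)}{s_{t,b}^{(1)}(s_{t,b}^{(2)}-1)}=\frac{\beta_1(\beta_2 s_{t,b}^{(2)}-1)}{s_{t,b}^{(2)}-1}.
$$
Since $s_{t,b}^{(2)}-1\le s_{t,b}^{(2)}$, the right-hand side is at least $\beta_1(\beta_2 s_{t,b}^{(2)}-1)/s_{t,b}^{(2)}=\beta_1\beta_2(1-1/(\beta_2 s_{t,b}^{(2)}))$, which is exactly \eqref{eq:c-pair}; the asymptotic statement then follows by letting $s_{t,b}^{(2)}\to\infty$.

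I expect the main point requiring care---rather than a genuine obstacle---to be the bookkeeping of the $r_2\neq r_1$ exclusion enforced by two-pool sampling: it is precisely the resulting $-1$ in $|C_2|-1$ that produces the correction factor $(1-1/(\beta_2 s_{t,b}^{(2)}))$. One must also note that the product substitution and the final denominator step are valid when $\beta_2 s_{t,b}^{(2)}\ge1$, while if $\beta_2 s_{t,b}^{(2)}<1$ the asserted bound is nonpositive and holds trivially.
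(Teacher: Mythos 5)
Your proposal is correct and follows essentially the same route as the paper's proof: triangle inequality under Assumption~\ref{ass:concentration}, membership in $\mathcal R_{t,\Delta_F}(b)$ via Proposition~\ref{prop:success-F-ib}, the count $|\mathcal R_{t,\Delta_F}(b)|\ge|C_1|\,(|C_2|-1)$, and division by $s_{t,b}^{(1)}(s_{t,b}^{(2)}-1)$. If anything, you are slightly more careful than the paper, since you make explicit the final step $s_{t,b}^{(2)}-1\le s_{t,b}^{(2)}$ needed to reach the stated form of \eqref{eq:c-pair} and you note the trivial case $\beta_2 s_{t,b}^{(2)}<1$, both of which the paper leaves implicit.
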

\begin{proof}
Let $r_1\in C_1$ and $r_2\in C_2$ with $r_2\neq r_1$.
By the triangle inequality,
$$
\|x_{r_1}-x_{r_2}\|
\le \|x_{r_1}-x_b\| + \|x_{r_2}-x_b\|
\le 2  \frac{r_{\mathrm{safe}}}{2(F^-+\Delta_F)}
= \frac{r_{\mathrm{safe}}}{F^-+\Delta_F}.
$$
By Proposition~\ref{prop:success-F-ib}, such a pair belongs to
$\mathcal R_{t,\Delta_F}(b)$.

For each $r_1\in C_1$, there are at least $|C_2|-1$ admissible choices of $r_2\in C_2$
(the exclusion $r_2=r_1$ can remove at most one element).
Hence,
$$
|\mathcal R_{t,\Delta_F}(b)|
\;\ge\;
|C_1|\,(|C_2|-1)
\;\ge\;
\beta_1 s_{t,b}^{(1)}\bigl(\beta_2 s_{t,b}^{(2)}-1\bigr).
$$
Dividing by $s_{t,b}^{(1)}(s_{t,b}^{(2)}-1)$ yields \eqref{eq:c-pair}.
\end{proof}

%\begin{remark}
%The factor $c_{\mathrm{pair}}(t)\ge\beta_1\beta_2$ replaces $1/(s_{t,b}^{(1)}(s_{t,b}^{(2)}%-1))\approx 10^{-5}$ 
%in the single-witness bound. Under concentration with $\beta_1\beta_2\in[10^{-2},10^{-1}]$, 
%this yields an improvement of $10^3$--$10^4$.
%\end{remark}

\subsection{Tightened hazard bound}

\begin{theorem}\label{thm:morse-hazard}
Let $f$ be a Morse function, and assume $\varepsilon>0$ is such that
$A_{\varepsilon/2}\subset B(x^\star,r_0)$.
Fix $0<F^-<F^+\le 1$ and $\Delta_F\in(0,F^+-F^-]$.

Suppose at generation $t$:
\begin{enumerate}
\item[(C1)] There exists a $p$-best index $b$ with $x_b\in A_{\varepsilon/4}$.
\item[(C2)] Assumption~\ref{ass:concentration} holds for this $b$ with parameters $\beta_1,\beta_2>0$.
\item[(C3)] There exists a memory slot $k$ such that
$\inf_{F\in[F^-,F^+]} g_k^F(F)\ge g^-$ and
$\Prob(CR_b\ge c_{\mathrm{cr}}\mid \mathcal F_{t-1},K_b=k)\ge q^-$.
\end{enumerate}
Let $r\in\{0,\dots,d-1\}$ and let $\eta_r$ be as in \eqref{eq:eta}.
Then
\begin{equation}\label{eq:a-tilde-morse}
\Prob(u_{t,b}\in A_\varepsilon\mid\mathcal F_{t-1})
\ \ge\
\tilde a_t(b)
= \frac{c_{\mathrm{pair}}(t)}{H} (g^-\Delta_F) (q^-\eta_r),
\end{equation}
where $c_{\mathrm{pair}}(t)$ is defined in \eqref{eq:c-pair}.
%and satisfies $c_{\mathrm{pair}}(t)\ge \beta_1\beta_2(1+o(1))$.
\end{theorem}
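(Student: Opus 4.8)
The plan is to read Theorem~\ref{thm:morse-hazard} as the Morse-geometry refinement of Lemma~\ref{lem:lshade-pt}: instead of certifying one witness donor pair (which produced the tiny factor $1/(s_i^{(1)}(s_i^{(2)}-1))$), I would use the quadratic growth near $x^\star$ to certify a positive \emph{fraction} $c_{\mathrm{pair}}(t)$ of donor pairs at once. By Lemma~\ref{lem:trials-sublevel}, on $H_{t-1}$ it suffices to bound $\Prob(u_{t,b}\in A_\varepsilon\mid\mathcal F_{t-1})$ from below, since any trial entering $A_\varepsilon$ is accepted and triggers $E_t$. I would exhibit an explicit favorable sub-event contained in $\{u_{t,b}\in A_\varepsilon\}$ whose conditional probability is at least the product in \eqref{eq:a-tilde-morse}.

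First I would fix the favorable configuration for the target index $b$. Reducing the current-to-$p$best/1 mutation to the pure donor ray $\tilde v_b(F)=x_b+F(x_{r_1}-x_{r_2})$ of Proposition~\ref{prop:success-F-ib} requires $b$ to be drawn as its own $p$-best index; I would then intersect with the slot event $\{K_b=k\}$ for the slot $k$ from (C3), a $\Delta_F$-good pair $(r_1,r_2)\in\mathcal R_{t,\Delta_F}(b)$, the window event $\{F_b\in\mathcal I_t(b,b,r_1,r_2)\}$, and the crossover events $\{CR_b\ge c_{\mathrm{cr}}\}\cap C_r(b)$. Assumption~\ref{ass:slot-sampling} makes $F_b$ and $CR_b$ conditionally independent of each other and of the discrete index draws given the slot, so the conditional probability of this intersection factors into a slot factor $1/H$, the donor-pair probability, the $F$-window probability, and the crossover probability.

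The geometric heart is to show the favorable event lies inside $\{u_{t,b}\in A_\varepsilon\}$. For a $\Delta_F$-good pair, Proposition~\ref{prop:success-F-ib} (with Remark~\ref{rem:boundary-repair} to absorb midpoint repair) gives $v_b(F)\in A_{\varepsilon/2}$ with displacement $\norm{v_b-x_b}\le r_{\mathrm{safe}}$ for every $F$ in the window; then $(\varepsilon,r,\delta)$-crossover-stability (Proposition~\ref{prop:Cr-sufficient-new}) together with $C_r(b)$ and the implication \eqref{eq:stability-implication} forces $u_{t,b}\in A_\varepsilon$. The step I expect to be the main obstacle is reconciling the two radii: $r_{\mathrm{safe}}$ only guarantees $v_b\in A_{\varepsilon/2}$, whereas crossover-stability needs displacement at most $\delta_{\max}(\varepsilon,r)=(\sqrt2-1)\sqrt{\varepsilon/(Lr)}$, and $r_{\mathrm{safe}}/\delta_{\max}(\varepsilon,r)=\sqrt{r/2}$. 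Thus the chain is automatic only for $r\le 2$; for the larger $r$ that keeps $\eta_r$ bounded below (Remark~\ref{rem:eta-bound}) I would strengthen the concentration radius in Assumption~\ref{ass:concentration} so that $\norm{v_b-x_b}\le\min\{r_{\mathrm{safe}},\delta_{\max}(\varepsilon,r)\}$, which simultaneously secures sublevel membership and crossover-stability.

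Finally I would assemble the factors. Summing the single-pair probabilities $1/(s_{t,b}^{(1)}(s_{t,b}^{(2)}-1))$ over the disjoint events indexed by $\mathcal R_{t,\Delta_F}(b)$ collapses to $c_{\mathrm{pair}}(t)$ by Proposition~\ref{prop:good-pairs-correct}; since each good pair's window contains $[F^-,F^-+\Delta_F]\subseteq[F^-,F^+]$, the density floor (C3) gives an $F$-window probability at least $g^-\Delta_F$ uniformly in the pair; and Lemma~\ref{lem:eta-def} with $\Prob(CR_b\ge c_{\mathrm{cr}}\mid\mathcal F_{t-1},K_b=k)\ge q^-$ gives a crossover factor at least $q^-\eta_r$. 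Multiplying the slot, donor-pair, window, and crossover factors yields $\frac{c_{\mathrm{pair}}(t)}{H}(g^-\Delta_F)(q^-\eta_r)=\tilde a_t(b)$. I note that charging the $p$-best draw explicitly would contribute an additional uniform factor $1/m$ from Assumption~\ref{ass:index-sampling}, so the stated $\tilde a_t(b)$ is the bound in the pure-donor-ray regime where $b$ is its own $p$-best, with that factor folded into the prefactor. The genuine work is therefore the bookkeeping that turns the union over good pairs into $c_{\mathrm{pair}}(t)$ while keeping the $F$ and crossover bounds uniform across pairs, together with the radius reconciliation above.
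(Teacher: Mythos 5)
Your proposal follows the paper's proof essentially step for step: the same factorization into a slot factor $1/H$, the good-pair probability $c_{\mathrm{pair}}(t)$ from Proposition~\ref{prop:good-pairs-correct}, the $F$-window factor $g^-\Delta_F$ via (C3), and the crossover factor $q^-\eta_r$ via Lemma~\ref{lem:eta-def}, glued together by conditional independence from Assumption~\ref{ass:slot-sampling} and the geometric chain Proposition~\ref{prop:success-F-ib} $\to$ Remark~\ref{rem:boundary-repair} $\to$ Proposition~\ref{prop:Cr-sufficient-new}.

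What deserves comment is that the two ``obstacles'' you flag are not artifacts of your reading --- they are genuine looseness in the paper's own proof, and your handling of both is sound. First, the radius reconciliation: the paper's Step~3 invokes Proposition~\ref{prop:Cr-sufficient-new} on $\{v_b(F_b)\in A_{\varepsilon/2}\}\cap C_r(b)$ without ever verifying the displacement hypothesis $\norm{x_b-v_b}_\infty\le\delta_{\max}(\varepsilon,r)$ that crossover stability requires (and which Lemma~\ref{lem:lshade-pt} states explicitly in \eqref{eq:stability-implication}); your computation $r_{\mathrm{safe}}/\delta_{\max}(\varepsilon,r)=\sqrt{r/2}$ is correct, so the chain is automatic only for $r\le 2$, and your strengthened concentration radius is a valid patch --- it also quietly fixes the related issue that generic members of $\mathcal R_{t,\Delta_F}(b)$ carry no displacement bound at all, so the favorable set should really be the concentration-certified pairs that Proposition~\ref{prop:good-pairs-correct} actually counts. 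There is, however, a cleaner fix you missed: since each coordinate of the trial equals either $(v_b)_j$ or $(x_b)_j$, one has $\norm{u_{t,b}-x_b}^2=\sum_{j\in J}\bigl((v_b)_j-(x_b)_j\bigr)^2\le\norm{v_b-x_b}^2\le r_{\mathrm{safe}}^2$, so $u_{t,b}\in B(x_b,r_{\mathrm{safe}})\subseteq A_{\varepsilon/2}$ directly by Lemma~\ref{lem:safe-ball} and (C1), bypassing Proposition~\ref{prop:Cr-sufficient-new} and the $r$-dependence entirely in this $i=b$ regime (the factor $q^-\eta_r$ then survives only as a conservative remnant). Second, the $1/m$ factor: you are right that reducing the mutation to the pure donor ray $\tilde v_b(F)=x_b+F(x_{r_1}-x_{r_2})$ requires the $p$-best draw to select $b$ itself, which under Assumption~\ref{ass:index-sampling}(1) costs a factor $1/m$ that appears neither in the paper's proof nor in the stated constant \eqref{eq:a-tilde-morse} --- compare Proposition~\ref{prop:gamma0-combinatorial}, which does charge the analogous factor $1/\lceil pN_{t-1}\rceil$. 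Your reading of the theorem as conditional on that selection matches what the paper's argument actually establishes; as stated unconditionally, $\tilde a_t(b)$ should carry the extra $1/m$.
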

The bound \eqref{eq:a-tilde-morse} characterizes \emph{exploitation reliability}, not exploration success. 
It guarantees that once the population has concentrated in $A_{\varepsilon/4}$, trial vectors generated 
via current-to-$p$best/1 with adapted memory parameters remain in $A_\varepsilon$ with probability at 
least $\tilde{a}_t$. The gap between $\varepsilon/4$ and $\varepsilon$ provides tolerance for the 
stochastic perturbation inherent in mutation.

\begin{proof}
Work on $\mathcal F_{t-1}$ and fix the case $i=b$ from (C1).
Condition on the event $\{K_b=k\}$ where $k$ is the slot from (C3).
This contributes a factor $1/H$.
\smallskip

\emph{Step 1.}
Let $(r_1,r_2)$ be drawn by the two-pool rule for target $b$.
By Proposition~\ref{prop:good-pairs-correct},
$$
\Prob\bigl((r_1,r_2)\in \mathcal R_{t,\Delta_F}(b)\mid \mathcal F_{t-1}\bigr)
=
c_{\mathrm{pair}}(t).
$$
On the event $\{(r_1,r_2)\in \mathcal R_{t,\Delta_F}(b)\}$ we have
$\lambda(\mathcal I_t(b,b,r_1,r_2))\ge \Delta_F$ by definition.
\smallskip

\emph{Step 2.}
By (C3), on $\{K_b=k\}$ the density satisfies $g_k^F(F)\ge g^-$ on $[F^-,F^+]$, hence
$$
\Prob\bigl(F_b\in \mathcal I_t(b,b,r_1,r_2)\mid \mathcal F_{t-1},K_b=k,(r_1,r_2)\bigr)
\ge g^-\lambda(\mathcal I_t(b,b,r_1,r_2))
\ge g^-\Delta_F.
$$
If $F_b\in \mathcal I_t(b,b,r_1,r_2)$, then by definition
$v_b(F_b)\in A_{\varepsilon/2}$ (With midpoint repair, the same inclusion holds by Remark~\ref{rem:boundary-repair}.)
\smallskip

\emph{Step 3.}
By (C3) and Lemma~\ref{lem:eta-def},
$$
\Prob\bigl(C_r(b)\mid \mathcal F_{t-1},K_b=k\bigr)
\ge
\Prob(CR_b\ge c_{\mathrm{cr}}\mid \mathcal F_{t-1},K_b=k)  \eta_r
\ge q^-\eta_r.
$$
By Proposition~\ref{prop:Cr-sufficient-new}, on the event
$\{v_b(F_b)\in A_{\varepsilon/2}\}\cap C_r(b)$ we have $u_{t,b}\in A_\varepsilon$.

\smallskip

Using conditional independence of $(F_b,CR_b)$ given the slot (Assumption~\ref{ass:slot-sampling}),
and combining Steps 1--3 with $\Prob(K_b=k\mid \mathcal F_{t-1})=1/H$, we obtain
$$
\Prob(u_{t,b}\in A_\varepsilon\mid \mathcal F_{t-1})
\ge
\frac{1}{H}  c_{\mathrm{pair}}(t) (g^-\Delta_F) (q^-\eta_r),
$$
which is \eqref{eq:a-tilde-morse}.
\end{proof}

\begin{remark}\label{rel:Morse_end}
For Morse functions, once the population concentrates near a local minimizer,
local strong convexity implies that a sufficiently large interval of scaling factors $F$
produces mutants in $A_{\varepsilon/2}$ (Propositions~\ref{prop:success-F-ib},
Lemmas~\ref{lem:strong-convex-success}).
Moreover, donor concentration implies that a positive fraction
$c_{\mathrm{pair}}(t)$ of donor pairs are geometrically ``good''
(Proposition~\ref{prop:good-pairs-correct}).

As a result, the per-individual hazard bound no longer contains the catastrophic factor
$1/(s_i^{(1)}(s_i^{(2)}-1))$ and depends instead on the empirical concentration level
$c_{\mathrm{pair}}(t)$.
\end{remark}

%========================================================
\section{Witness regime}\label{sec:witness-two-phase}

The results of the previous sections sugest a qualitative change in the behavior of L-SHADE once the population becomes sufficiently concentrated near a local minimizer. Empirically, and as formalized by Remark~\ref{rel:Morse_end}, the evolution of the algorithm can be naturally separated into two phases.

In the initial phase, the population may be widely dispersed. Donor differences are uncontrolled, the success -- $F$ values may be negligible, and the individual success hazard can be suppressed by combinatorial factors. In this regime, no uniform tail bound on the hitting time can be expected.

The second phase begins once the algorithm enters what we call the {\it witness regime}. In this regime, a non-negligible fraction of individuals is located within a neighborhood of a representative point $x_b$ in a favorable basin. These individuals act as {\it witnesses} to the local geometry of the basin. In fact the strong convexity ensures the existence of a success--$F$ interval. In the same time the donor concentration implies that a positive fraction $c_{\mathrm{pair}}(t)$ of donor pairs are geometrically admissible.

Throughout this subsection, fix tolerances $0<\varepsilon_{\mathrm{in}}<\varepsilon_{\mathrm{out}}$: the target tolerance and the {\it witness (second) regime} tolerance. For each local minimizer $x_j$, $j = 1,\dots, K$, define the local sublevel sets
$$
A_{\varepsilon}^{(j)}
=\left\{x\in[l,u]^d:\ f(x)\le f(x_j^\star)+\varepsilon\right\}\cap B(x_j^\star,r_0),
\qquad
A_{\varepsilon}^{\mathrm{loc}}=\bigcup_{j=1}^K A_{\varepsilon}^{(j)}.
$$
The local hitting time and survival event are
$$
\tau_{\varepsilon_{\mathrm{in}}}^{\mathrm{loc}}
=\inf\left\{t\ge0:\ (\exists i)\ x_i^{(t)}\in A_{\varepsilon_{\mathrm{in}}}^{\mathrm{loc}}\right\},
\qquad
H_{t}=\left\{\tau_{\varepsilon_{\mathrm{in}}}^{\mathrm{loc}}>t\right\}.
$$

%--------------------------------------------------------
\subsection{The witness-stable regime $G_t$ and the stabilization time $T_{\mathrm{wit}}$}

Fix L-SHADE parameters: a $p$-best fraction $p\in(0,1]$ (so $pN_t$ individuals form the $p$-best set at time $t$); bounds $0<F^-<F^+\le 1$ and a crossover threshold $c_{\mathrm{cr}}\in(0,1]$; a memory size $H\in\N$.
Let $N_t=|P^{(t)}|$ and $A_t=|\mathcal A^{(t)}|$ denote the population and archive sizes.
We also fix a donor-concentration radius $r_{\mathrm{conc}}>0$ (e.g., $r_{\mathrm{conc}}=r_{\mathrm{safe}}/(2(F^-+\Delta_F))$ as in Proposition \ref{prop:success-F-ib} and a minimum cluster size $m\ge 4$ (so that, besides a $p$-best vector, there are at least two distinct donors).

\begin{definition}%[Good memory at time $t$]
\label{def:good-memory}
At generation $t$, let $g_{t,k}^F$ denote the density of the Cauchy distribution 
$\mathcal{C}(M_F^{(t)}[k],\, 0.1)$ for memory slot $k\in\{1,\dots,H\}$, and let
$$
q_{t,k}=\Prob\left(CR\ge c_{\mathrm{cr}}\mid M_{CR}^{(t)}[k]\right).
$$
We say that {\it memory is good at time $t$} if there exists a slot $k$ such that
\begin{equation}\label{eq:good-memory-cond}
\inf_{F\in[F^-,F^+]} g_{t,k}^F(F)\ \ge\ g^- \quad\text{and}\quad q_{t,k}\ \ge\ q^-,
\end{equation}
for some fixed constants $g^->0$ and $q^->0$.
\end{definition}

\begin{definition}\label{def:Gt}
Let $x_{\mathrm{best}}^{(t)}\in P^{(t)}$ be a best individual at time $t$
(i.e., any minimizer of $f$ over $P^{(t)}$).
We say that the witness-stable regime $G_t$ holds if there exist
an index $j\in\{1,\dots,K\}$ and a subset (cluster) $\mathcal C_t\subseteq P^{(t)}$ such that:
\begin{enumerate}
\item[(G1)] Local basin and strong-convexity region: $\mathcal C_t\subseteq B(x_j^\star,r_0)$.
\item[(G2)] Cluster size: $|\mathcal C_t|\ge m$ and $x_{\mathrm{best}}^{(t)}\in \mathcal C_t$.
\item[(G3)] Cluster concentration: $\mathrm{diam}(\mathcal C_t)\le r_{\mathrm{conc}}$.
\item[(G4)] Outer-sublevel progress: $x_{\mathrm{best}}^{(t)}\in A_{\varepsilon_{\mathrm{out}}}^{(j)}$.
\item[($G_5$)] Nondegenerate parameter memory: Memory is good at time $t$ in the sense of Definition~\ref{def:good-memory}.
\end{enumerate}
\end{definition}

\begin{definition}\label{def:Twit}
Define the \emph{witness stabilization time}
$$
T_{\mathrm{wit}}
=\inf\Big\{t\ge 0:\ \forall s\ge t,\ G_s\ \text{holds}\Big\}\in\N\cup\{\infty\}.
$$
(If the set is empty, we set $T_{\mathrm{wit}}=\infty$.)
\end{definition}

%\begin{remark}%The random time $T_{\mathrm{wit}}$ is not assumed to be a stopping time. %Its role is purely logical: on $\{T_{\mathrm{wit}}<\infty\}$ we have $G_t$ for al%l $t\ge T_{\mathrm{wit}}$, %so uniform post-$T_{\mathrm{wit}}$ bounds do not require any additional persisten%ce assumption.%\end{remark}

%--------------------------------------------------------
\subsection{Witness probability}

We now lower bound the probability of the \emph{witness event} at generation $t$,
conditional on being in the witness-stable regime at time $t-1$.

\begin{definition}\label{def:Lt}
Fix $t\ge1$ and assume $G_{t-1}$ holds with cluster $\mathcal C_{t-1}$ and best individual
$b=\mathrm{best}(t-1)\in\mathcal C_{t-1}$.
Choose any $\mathcal F_{t-1}$-measurable index $i=i(t-1)\in\mathcal C_{t-1}\setminus\{b\}$
(e.g., the smallest index in $\mathcal C_{t-1}\setminus\{b\}$).
Let $\mathcal L_t$ be the event that, in the mutation/crossover step producing the trial for $i$:
\begin{enumerate}
\item[(L1)] the $p$-best choice selects $b$ (i.e., $x_{p\text{-best}}=x_b^{(t-1)}$),
\item[(L2)] the donors $r_1,r_2$ are chosen as \emph{distinct} indices in
$\mathcal C_{t-1}\setminus\{b,i\}$ (so $\|x_{r_1}^{(t-1)}-x_{r_2}^{(t-1)}\|\le r_{\mathrm{conc}}$),
\item[(L3)] the sampled parameters satisfy $F\in[F^-,F^+]$ and $CR\ge c_{\mathrm{cr}}$.
\end{enumerate}
\end{definition}

\begin{proposition} \label{prop:gamma0-combinatorial}
Let $\mathcal L_t$ be as in Definition~\ref{def:Lt}.
If $G_{t-1}$ holds, then
\begin{equation}\label{eq:gamma0-pointwise}
\Prob(\mathcal L_t\mid \mathcal F_{t-1})
\ \ge\ 
\gamma_0(t-1),
\qquad \text{on }G_{t-1},
\end{equation}
where
\begin{equation}\label{eq:gamma0-formula}
\gamma_0(t-1) = \frac{1}{H}\,g^-(F^+-F^-)\,q^-
\frac{1}{\lceil pN_{t-1}\rceil}
\frac{m-2}{N_{t-1}-2}
\frac{m-3}{N_{t-1}+A_{t-1}-3}
\end{equation}
In particular, if $N_{t}\le N_{\max}$ and $A_t\le A_{\max}$ for all $t$,
then on $G_{t-1}$ we have the \emph{time-uniform} bound
$$
\Prob(\mathcal L_t\mid \mathcal F_{t-1})\ \ge\ \gamma_0,
\qquad
\gamma_0=\gamma_0^{\min}
=
\frac{1}{H}\,g^-(F^+-F^-)\,q^- 
\frac{1}{\lceil pN_{\max}\rceil} 
\frac{m-2}{N_{\max}-2} 
\frac{m-3}{N_{\max}+A_{\max}-3}
\ >0.
$$
\end{proposition}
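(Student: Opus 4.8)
The plan is to expand the witness event $\mathcal L_t = \text{(L1)}\cap\text{(L2)}\cap\text{(L3)}$ of Definition~\ref{def:Lt} and to bound its conditional probability by the chain rule, treating the three stages — $p$-best selection (L1), donor selection (L2), and parameter sampling (L3) — one at a time. Working on $G_{t-1}$, every object entering the bound ($\mathcal C_{t-1}$, $b$, $i$, the good memory slot, and the sizes $N_{t-1},A_{t-1}$) is $\mathcal F_{t-1}$-measurable, so conditioning on $\mathcal F_{t-1}$ fixes them. I would write
$$
\Prob(\mathcal L_t\mid\mathcal F_{t-1})
= \Prob(\text{L1}\mid\mathcal F_{t-1})\,
\Prob(\text{L2}\mid\mathcal F_{t-1},\text{L1})\,
\Prob(\text{L3}\mid\mathcal F_{t-1},\text{L1}\cap\text{L2}),
$$
and then invoke the conditional independence of the parameter draws $(K_i,F,CR)$ from the index draws $(b,r_1,r_2)$ (built into the product-space construction and Assumptions~\ref{ass:index-sampling}--\ref{ass:slot-sampling}) to replace the last factor by $\Prob(\text{L3}\mid\mathcal F_{t-1})$.

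For L1 I would note that $b=x_{\mathrm{best}}^{(t-1)}$ is the population-best, hence lies in the $p$-best set $\mathcal B$, so uniform sampling (Assumption~\ref{ass:index-sampling}(1)) gives $\Prob(\text{L1}\mid\mathcal F_{t-1})=1/|\mathcal B|=1/\lceil pN_{t-1}\rceil$. For L2, conditioning on L1 fixes the pools $S_i^{(1)}=\mathcal P\setminus\{i,b\}$ and $S_i^{(2)}=(\mathcal P\cup\mathcal A)\setminus\{i,b\}$; since $b,i\in\mathcal C_{t-1}$ (by G2 and the choice of $i$) and $\mathcal C_{t-1}\subseteq\mathcal P$, the cluster supplies at least $|\mathcal C_{t-1}|-2\ge m-2$ admissible first donors inside $S_i^{(1)}$, and, for any such $r_1$, at least $m-3$ admissible second donors $r_2\in\mathcal C_{t-1}\setminus\{b,i,r_1\}\subseteq S_i^{(2)}\setminus\{r_1\}$. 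The two-pool uniform rule (Assumption~\ref{ass:index-sampling}(2)) then yields $\Prob(\text{L2}\mid\mathcal F_{t-1},\text{L1})\ge \frac{m-2}{N_{t-1}-2}\cdot\frac{m-3}{N_{t-1}+A_{t-1}-3}$, both numerators being positive because $m\ge4$. For L3, good memory (G5, i.e.\ Definition~\ref{def:good-memory}) provides a slot $k$; selecting it has probability $1/H$ (Assumption~\ref{ass:slot-sampling}(1)), and conditional on $K_i=k$ the independence of $F$ and $CR$ together with $\int_{F^-}^{F^+} g_{t-1,k}^F(F)\,dF\ge g^-(F^+-F^-)$ and $q_{t-1,k}\ge q^-$ gives $\Prob(\text{L3}\mid\mathcal F_{t-1})\ge \tfrac1H g^-(F^+-F^-)q^-$. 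Multiplying the three bounds produces exactly $\gamma_0(t-1)$ of \eqref{eq:gamma0-formula}, establishing \eqref{eq:gamma0-pointwise}.

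For the time-uniform statement I would observe that $\gamma_0(t-1)$ is the product of the constant $\tfrac1H g^-(F^+-F^-)q^-$ with the three factors $\tfrac{1}{\lceil pN\rceil}$, $\tfrac{m-2}{N-2}$, $\tfrac{m-3}{N+A-3}$, each nonincreasing in $N$ and in $A$. Hence substituting $N_{t-1}\le N_{\max}$ and $A_{t-1}\le A_{\max}$ can only decrease $\gamma_0(t-1)$, giving $\gamma_0(t-1)\ge\gamma_0^{\min}$; positivity of $\gamma_0^{\min}$ is immediate since $m\ge4$ and all denominators are positive.

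The main obstacle is the factorization in the first paragraph: one must argue cleanly that conditioning on L1 is precisely what fixes the otherwise $b$-dependent donor pools — so that the L2 estimate becomes a literal sampling-without-replacement count inside deterministic sets — while the parameter block $(K_i,F,CR)$ is genuinely independent of the index block $(b,r_1,r_2)$ given $\mathcal F_{t-1}$, which is the only place the product-space independence of the primitive draws is used. Everything downstream is elementary counting and a monotonicity check.
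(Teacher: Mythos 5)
Your proposal is correct and follows essentially the same route as the paper's proof: the same three-stage decomposition ($p$-best selection giving $1/\lceil pN_{t-1}\rceil$, the two-pool donor counts $\tfrac{m-2}{N_{t-1}-2}\cdot\tfrac{m-3}{N_{t-1}+A_{t-1}-3}$ from the cluster $\mathcal C_{t-1}$, and the good-slot parameter bound $\tfrac1H g^-(F^+-F^-)q^-$), multiplied together and made time-uniform by monotonicity in $N$ and $A$. If anything, you are slightly more explicit than the paper about the chain-rule factorization and the conditional independence of the parameter block $(K_i,F,CR)$ from the index draws given $\mathcal F_{t-1}$, which the paper compresses into ``multiplying the independent conditional lower bounds.''
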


\begin{proof}
On $G_{t-1}$, the best individual $b$ is among the top $\lceil pN_{t-1}\rceil$ by definition,
so the $p$-best selection chooses $b$ with probability at least $1/\lceil pN_{t-1}\rceil$.

Since $|\mathcal C_{t-1}|\ge m$ and $i\neq b$, the set $\mathcal C_{t-1}\setminus\{b,i\}$
has size at least $m-2$. Therefore, under uniform sampling of $r_1$ from $P^{(t-1)}\setminus\{i,b\}$
(which has size $N_{t-1}-2$), we have
$$
\Prob(r_1\in \mathcal C_{t-1}\setminus\{b,i\}\mid \mathcal F_{t-1})
=\frac{m-2}{N_{t-1}-2}.
$$
Conditioning on such a choice of $r_1$, the set $\mathcal C_{t-1}\setminus\{b,i,r_1\}$ has size at least $m-3$,
and $r_2$ is sampled uniformly from a set of size $N_{t-1}+A_{t-1}-3$, hence
$$
\Prob(r_2\in \mathcal C_{t-1}\setminus\{b,i,r_1\}\mid \mathcal F_{t-1},r_1\in \mathcal C_{t-1}\setminus\{b,i\})
\ge \frac{m-3}{N_{t-1}+A_{t-1}-3}.
$$
Finally, on $G_{t-1}$ there exists a memory slot satisfying \eqref{eq:good-memory-cond}.
Choosing slots uniformly gives probability at least $1/H$ to pick such a slot, and then
$$
\Prob(F\in[F^-,F^+]\mid \text{good slot})\ge \int_{F^-}^{F^+} g^-\,dF=g^-(F^+-F^-),
\qquad
\Prob(CR\ge c_{\mathrm{cr}}\mid \text{good slot})\ge q^-.
$$
Multiplying the independent conditional lower bounds yields \eqref{eq:gamma0-formula}.
\end{proof}

%--------------------------------------------------------
\subsection{A local improvement lemma}

The next lemma shows that the descent obtained by moving toward a better reference point is preserved under sufficiently small perturbations.

\begin{lemma}\label{lem:accepted-improvement-noise}
Fix a local minimum $x^\star$ and assume $f$ is $\mu$-strongly convex and $L$-smooth on $B(x^\star,r_0)$.

Let $x,z,x_{r_1},x_{r_2}\in B(x^\star,r_0)$ satisfy $f(z)\le f(x)$.
Fix $F\in(0,1)$ and define the perturbed mutant by
$$
y=(1-F)x+Fz,
\qquad
v=y+e,
\qquad
e=F(x_{r_1}-x_{r_2}).
$$
Assume additionally that $v\in B(x^\star,r_0)$ (e.g., enforced by a ``safe radius'' choice).
Then
\begin{equation}\label{eq:noisy-descent}
f(v)
\ \le\
f(x)
-\frac{\mu}{2}F(1-F)\|x-z\|^2
\;+\; Lr_0\|e\|+\frac{L}{2}\|e\|^2.
\end{equation}
In particular, if $\|x_{r_1}-x_{r_2}\|\le r_{\mathrm{conc}}$ and
\begin{equation}\label{eq:noise-dominated}
Lr_0(F^+ r_{\mathrm{conc}})+\frac{L}{2}(F^+ r_{\mathrm{conc}})^2
\ \le\
\frac{\mu}{4}F^-(1-F^+)\|x-z\|^2,
\end{equation}
then $f(v)\le f(x)-c\|x-z\|^2$ with $c=\frac{\mu}{4}F^-(1-F^+)>0$; hence, under the selection,
the mutant (or any trial whose objective value is no larger than $f(v)$) is accepted over $x$.
\end{lemma}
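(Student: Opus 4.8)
The plan is to decompose the bound \eqref{eq:noisy-descent} into a noiseless strong-convexity descent along the segment $[x,z]$ and a first-order smoothness correction for the additive perturbation $e$, handling the two mechanisms in sequence. First I would control the unperturbed interpolant $y=(1-F)x+Fz$. Since $B(x^\star,r_0)$ is convex and $x,z$ both lie in it, the segment $[x,z]$ stays inside the strong-convexity region, so $\mu$-strong convexity gives $f(y)\le (1-F)f(x)+Ff(z)-\tfrac{\mu}{2}F(1-F)\|x-z\|^2$. Combining this with the hypothesis $f(z)\le f(x)$ collapses the convex combination $(1-F)f(x)+Ff(z)$ to $f(x)$, yielding $f(y)\le f(x)-\tfrac{\mu}{2}F(1-F)\|x-z\|^2$.

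Next I would pass from $y$ to $v=y+e$ via the $L$-smoothness descent inequality $f(v)\le f(y)+\langle\nabla f(y),e\rangle+\tfrac{L}{2}\|e\|^2$, which is legitimate because $y$ and $v$ both lie in $B(x^\star,r_0)$ (the former on the segment $[x,z]$, the latter by the standing assumption $v\in B(x^\star,r_0)$) and the ball is convex, so the connecting segment stays in the smoothness region. The only nontrivial ingredient is the cross term $\langle\nabla f(y),e\rangle$. Here I would use that $x^\star$ is a local minimizer, hence a stationary point with $\nabla f(x^\star)=0$, and that $L$-smoothness gives $\|\nabla f(y)\|=\|\nabla f(y)-\nabla f(x^\star)\|\le L\|y-x^\star\|\le Lr_0$; Cauchy--Schwarz then produces $\langle\nabla f(y),e\rangle\le Lr_0\|e\|$. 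Adding the three contributions gives \eqref{eq:noisy-descent} directly.

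For the ``in particular'' clause I would restrict to $F\in[F^-,F^+]$ and substitute $\|e\|=F\|x_{r_1}-x_{r_2}\|\le F^+r_{\mathrm{conc}}$. Two monotonicity observations then finish the argument: the correction $Lr_0\|e\|+\tfrac{L}{2}\|e\|^2$ is increasing in $\|e\|$, so it is dominated by $Lr_0(F^+r_{\mathrm{conc}})+\tfrac{L}{2}(F^+r_{\mathrm{conc}})^2$; and $F(1-F)\ge F^-(1-F^+)$ because $F\ge F^-$ and $1-F\ge 1-F^+$ with both factors nonnegative (using $F^+<1$). Applying the threshold \eqref{eq:noise-dominated} to absorb the correction into half of the descent term gives $f(v)\le f(x)-\tfrac{\mu}{4}F^-(1-F^+)\|x-z\|^2=f(x)-c\|x-z\|^2$ with $c>0$; since $f(v)<f(x)$ whenever $x\neq z$, any trial $u$ with $f(u)\le f(v)$ is accepted over $x$ under the selection rule.

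I expect the main obstacle to be the cross-term estimate rather than the two convexity bounds, since it is the only place where the basin geometry enters: the term $Lr_0\|e\|$ is precisely the price of not knowing $\nabla f(y)$ exactly, and it is what forces the smallness condition \eqref{eq:noise-dominated}. A secondary point requiring care is confirming that $y$ and $v$, together with the segments $[x,z]$ and $[y,v]$, remain in $B(x^\star,r_0)$ so that both inequalities apply; $y$ is automatic from convexity of the ball, while $v$ is supplied by hypothesis (in practice enforced by the safe-radius choice).
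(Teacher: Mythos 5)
Your proof is correct and follows essentially the same route as the paper's: strong convexity along the segment $[x,z]$ combined with $f(z)\le f(x)$ to get the noiseless descent at $y$, then the $L$-smoothness inequality from $y$ to $v$ with the cross term controlled via $\|\nabla f(y)\|=\|\nabla f(y)-\nabla f(x^\star)\|\le L\|y-x^\star\|\le Lr_0$, and finally the monotonicity substitutions $\|e\|\le F^+ r_{\mathrm{conc}}$ and $F(1-F)\ge F^-(1-F^+)$ together with \eqref{eq:noise-dominated} to absorb the noise into half the descent term. Your explicit attention to the segments $[x,z]$ and $[y,v]$ staying in $B(x^\star,r_0)$, and to the implicit restriction $F\in[F^-,F^+]$ in the second clause, is slightly more careful than the paper's write-up but does not change the argument.
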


\begin{proof}
By $\mu$-strong convexity applied to the convex combination $y=(1-F)x+Fz$,
$$
f(y)\le (1-F)f(x)+Ff(z)-\frac{\mu}{2}F(1-F)\|x-z\|^2
\le f(x)-\frac{\mu}{2}F(1-F)\|x-z\|^2,
$$
since $f(z)\le f(x)$.
By $L$-smoothness,
$$
f(v)=f(y+e)\le f(y)+\langle \nabla f(y),e\rangle+\frac{L}{2}\|e\|^2
\le f(y)+\|\nabla f(y)\|\|e\|+\frac{L}{2}\|e\|^2.
$$
Finally, because $y\in B(x^\star,r_0)$ and $\nabla f(x^\star)=0$, $L$-smoothness implies
$\|\nabla f(y)\|\le L\|y-x^\star\|\le Lr_0$.
Combining the last three displays yields \eqref{eq:noisy-descent}.
If $\|x_{r_1}-x_{r_2}\|\le r_{\mathrm{conc}}$, then $\|e\|\le F^+ r_{\mathrm{conc}}$,
and \eqref{eq:noise-dominated} guarantees that the noise terms are dominated by half of the descent term,
yielding $f(v)\le f(x)-c\|x-z\|^2$ with $c=\frac{\mu}{4}F^-(1-F^+)$.
\end{proof}

%--------------------------------------------------------
\subsection{Two-phase theorem with gap-free post-$T_{\mathrm{wit}}$ tails}

We now combine (i) a hazard lower bound conditional on the witness event $\mathcal L_t$
(from the earlier hazard analysis), with (ii) the uniform witness probability on $G_{t-1}$,
to obtain a clean two-phase result.

\begin{assumption}[One-step success hazard on $\mathcal L_t$]\label{ass:hazard-on-Lt}
Let $E_t$ denote the event ``success occurs at generation $t$'',
i.e., the newly generated trials at generation $t$ contain at least one point in
$A_{\varepsilon_{\mathrm{in}}}^{\mathrm{loc}}$.
Assume there exists a constant $a_{\min}>0$ such that for all $t\ge 1$,
$$
\Prob(E_t\mid \mathcal F_{t-1},\mathcal L_t)\ \ge\ a_{\min}
\quad\text{on }H_{t-1}.
$$
(This is exactly the type of statement established by the hazard theorem of the previous sections.)
\end{assumption}

\begin{theorem}[Two-phase convergence from the stabilized witness regime]\label{thm:two-phase-Twit}
Assume Assumption~\ref{ass:hazard-on-Lt} and let $\gamma_0>0$ be the uniform constant from
Proposition~\ref{prop:gamma0-combinatorial}.
Let $T_{\mathrm{wit}}$ be the stabilization time from Definition~\ref{def:Twit}, and set
$\tau=\tau_{\varepsilon_{\mathrm{in}}}^{\mathrm{loc}}$.
Then:
\begin{enumerate}
\item[\textbf{(i)}] (\textbf{Conditional almost-sure success})
$$
\Prob(\tau<\infty\mid T_{\mathrm{wit}}<\infty)=1.
$$
\item[\textbf{(ii)}] (\textbf{Post-$T_{\mathrm{wit}}$ geometric tail})
Conditionally on $T_{\mathrm{wit}}=t_0<\infty$, for all $n>t_0$,
$$
\Prob(\tau>n\mid T_{\mathrm{wit}}=t_0)
\ \le\
\Prob(E_0^c)\exp\!\big(-a_{\min}\gamma_0\,(n-t_0)\big),
$$
where $E_0=\{\exists i:\ x_i^{(0)}\in A_{\varepsilon_{\mathrm{in}}}^{\mathrm{loc}}\}$.
\item[\textbf{(iii)}] (\textbf{Unconditional two-phase bound})
For all $n\ge0$,
$$
\Prob(\tau>n)
\ \le\
\Prob(T_{\mathrm{wit}}>n)
\;+\;
\Prob(E_0^c)\,\E\!\left[\exp\!\big(-a_{\min}\gamma_0\,(n-T_{\mathrm{wit}})^+\big)\right].
$$
\end{enumerate}
\end{theorem}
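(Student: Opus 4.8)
The plan is to reduce all three claims to a single per-generation hazard floor that is active exactly on the witness-stable regime, and then to propagate it through the survival recursion while coping with the fact that $T_{\mathrm{wit}}$ is not a stopping time. First I would establish that on $H_{t-1}\cap G_{t-1}$ one has $p_t\ge a_{\min}\gamma_0$. Since $p_t=\Prob(E_t\mid\mathcal F_{t-1})\ge\Prob(E_t\cap\mathcal L_t\mid\mathcal F_{t-1})$, I would condition further on the discrete index and parameter draws that define $\mathcal L_t$ in Definition~\ref{def:Lt}, apply Assumption~\ref{ass:hazard-on-Lt} on $\mathcal L_t\cap H_{t-1}$ to extract the factor $a_{\min}$, and then use $\E[\mathbf 1_{\mathcal L_t}\mid\mathcal F_{t-1}]=\Prob(\mathcal L_t\mid\mathcal F_{t-1})\ge\gamma_0$ from Proposition~\ref{prop:gamma0-combinatorial} (valid on $G_{t-1}$). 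This yields $p_t\ge a_{\min}\Prob(\mathcal L_t\mid\mathcal F_{t-1})\ge a_{\min}\gamma_0$ and is the two-phase analogue of Lemma~\ref{lem:conditional-witness} and Proposition~\ref{prop:tail-via-L}.

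The technical heart, and the step I expect to be the main obstacle, is that $G_{t-1}\in\mathcal F_{t-1}$ is adapted but $T_{\mathrm{wit}}$ is not a stopping time: deciding $\{T_{\mathrm{wit}}\le t_0\}=\bigcap_{s\ge t_0}G_s$ requires the entire future, so one cannot pull $\mathbf 1_{\{T_{\mathrm{wit}}=t_0\}}$ through a conditional expectation given $\mathcal F_{t-1}$, and conditioning on the exact value $\{T_{\mathrm{wit}}=t_0\}$ would inject future information into the one-step hazard. I would circumvent this by a finite-truncation device: for fixed $t_0\le n$ set $\Gamma_s=\bigcap_{u=t_0}^{s}G_u\in\mathcal F_s$, a decreasing adapted family with $\{T_{\mathrm{wit}}\le t_0\}\subseteq\Gamma_s$ for every $s\in[t_0,n]$. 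Writing $Z_t=\mathbf 1_{\{\tau>t\}}$ and using the one-step identity \eqref{eq:one-step}, on $\Gamma_{t-1}\subseteq G_{t-1}$ the floor gives $\E[Z_t\mathbf 1_{\Gamma_{t-1}}\mid\mathcal F_{t-1}]=\mathbf 1_{\Gamma_{t-1}}\mathbf 1_{H_{t-1}}(1-p_t)\le(1-a_{\min}\gamma_0)Z_{t-1}\mathbf 1_{\Gamma_{t-1}}$. Taking expectations and chaining through $\Gamma_{t-1}\subseteq\Gamma_{t-2}$, the numbers $c_t:=\E[Z_t\mathbf 1_{\Gamma_{t-1}}]$ obey $c_t\le(1-a_{\min}\gamma_0)c_{t-1}$ with base $c_{t_0+1}\le(1-a_{\min}\gamma_0)\Prob(\tau>t_0)\le(1-a_{\min}\gamma_0)\Prob(E_0^c)$. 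Since $\{T_{\mathrm{wit}}\le t_0\}\subseteq\Gamma_{n-1}$ and $Z_n\ge 0$, this delivers the key adapted joint estimate $\Prob(\tau>n,\,T_{\mathrm{wit}}\le t_0)\le\Prob(E_0^c)(1-a_{\min}\gamma_0)^{n-t_0}$.

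From this estimate the three conclusions follow by elementary measure-theoretic bookkeeping. For (i), fixing finite $t_0$ and letting $n\to\infty$ gives $\Prob(\tau=\infty,\,T_{\mathrm{wit}}\le t_0)=0$; summing over $t_0\in\N$ yields $\Prob(\tau=\infty,\,T_{\mathrm{wit}}<\infty)=0$, i.e.\ $\Prob(\tau<\infty\mid T_{\mathrm{wit}}<\infty)=1$. For (ii), I would use $\{T_{\mathrm{wit}}=t_0\}\subseteq\{T_{\mathrm{wit}}\le t_0\}$, divide the joint estimate by $\Prob(T_{\mathrm{wit}}=t_0)$, and apply $1-x\le e^{-x}$; here the $\Prob(E_0^c)$ prefactor is read as the survival-at-stabilization constant carried over from the envelopes of Theorem~\ref{thm:hazard-bounds} and Proposition~\ref{prop:tail-via-L}. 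For (iii), I would apply the law of total probability, splitting over $\{T_{\mathrm{wit}}>n\}$, on which the exponent $(n-T_{\mathrm{wit}})^+$ vanishes so the crude bound $\Prob(T_{\mathrm{wit}}>n)$ suffices, and over $\{T_{\mathrm{wit}}=t_0\}$ with $t_0\le n$, controlled by (ii); reassembling the sum $\sum_{t_0\le n}e^{-a_{\min}\gamma_0(n-t_0)}\Prob(T_{\mathrm{wit}}=t_0)$ into $\E[\exp(-a_{\min}\gamma_0(n-T_{\mathrm{wit}})^+)]$ gives the stated unconditional two-phase bound.
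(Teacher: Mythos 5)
Your per-step floor $p_t\ge a_{\min}\gamma_0$ on $H_{t-1}\cap G_{t-1}$ matches the paper's first step, and your truncation device---replacing the future-measurable event $\{T_{\mathrm{wit}}\le t_0\}$ by the adapted decreasing family $\Gamma_s=\bigcap_{u=t_0}^{s}G_u\in\mathcal F_s$ and chaining $c_t=\E[Z_t\mathbf 1_{\Gamma_{t-1}}]\le(1-a_{\min}\gamma_0)\,c_{t-1}$---is sound and in fact \emph{more} careful than the paper's own argument, which simply ``iterates the standard survival recursion from time $t_0+1$ onward on $\{T_{\mathrm{wit}}=t_0\}$'', i.e.\ silently assumes that the one-step hazard bound survives conditioning on a future-measurable event. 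Your joint estimate $\Prob(\tau>n,\,T_{\mathrm{wit}}\le t_0)\le\Prob(E_0^c)(1-a_{\min}\gamma_0)^{n-t_0}$ is rigorous, and your derivation of (i) from it (let $n\to\infty$ for fixed $t_0$, then take a countable union over $t_0$) is complete.

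The genuine gap is in (ii), and it propagates to (iii). Dividing your joint estimate by $\Prob(T_{\mathrm{wit}}=t_0)$ yields only $\Prob(\tau>n\mid T_{\mathrm{wit}}=t_0)\le\Prob(E_0^c)\,e^{-a_{\min}\gamma_0(n-t_0)}/\Prob(T_{\mathrm{wit}}=t_0)$, which is weaker than the stated (ii) by the uncontrolled factor $1/\Prob(T_{\mathrm{wit}}=t_0)$; your remark that the prefactor can be ``read as the survival-at-stabilization constant'' does not remove it. Consequently, what your joint bound supplies per atom is $\Prob(\tau>n,\,T_{\mathrm{wit}}=t_0)\le\Prob(E_0^c)\,e^{-a_{\min}\gamma_0(n-t_0)}$ \emph{without} the weight $\Prob(T_{\mathrm{wit}}=t_0)$, so your reassembly for (iii) gives $\sum_{t_0\le n}\Prob(E_0^c)\,e^{-a_{\min}\gamma_0(n-t_0)}\le\Prob(E_0^c)/(1-e^{-a_{\min}\gamma_0})$, a constant that does not decay in $n$ and is not the claimed $\Prob(E_0^c)\,\E[\exp(-a_{\min}\gamma_0(n-T_{\mathrm{wit}})^+)]$; the weights reappear in your final display only because you implicitly invoked the stated (ii), which you have not proved. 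To obtain (ii)--(iii) in the stated form one needs the hazard floor to hold under the \emph{enlarged} conditioning $\Prob(E_t\mid\mathcal F_{t-1},\,T_{\mathrm{wit}}=t_0)$---which is exactly what the paper's proof assumes implicitly when it iterates ``on $\{T_{\mathrm{wit}}=t_0\}$''---or some compatibility condition decoupling the future regime indicators $\{G_s\}_{s\ge n}$ from survival up to time $n$. In short: you correctly diagnosed the non-stopping-time obstruction that the paper's one-line iteration elides, and your joint bound rigorously proves (i), but your route does not deliver (ii) or (iii) as stated, and the honest conclusion of your method is the weighted-free joint estimate, not the theorem's conditional geometric tail.
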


\begin{proof}
Fix $t\ge1$. By Assumption~\ref{ass:hazard-on-Lt},
$$
\Prob(E_t\mid \mathcal F_{t-1})
\ge
\Prob(E_t\cap \mathcal L_t\mid \mathcal F_{t-1})
=
\Prob(\mathcal L_t\mid \mathcal F_{t-1})\Prob(E_t\mid \mathcal F_{t-1},\mathcal L_t)
\ge
a_{\min}\Prob(\mathcal L_t\mid \mathcal F_{t-1}),
$$
on $H_{t-1}$.
On the event $\{T_{\mathrm{wit}}\le t-1\}$ we have $G_{t-1}$ by definition of stabilization,
and Proposition~\ref{prop:gamma0-combinatorial} yields
$\Prob(\mathcal L_t\mid \mathcal F_{t-1})\ge \gamma_0$.
Hence, on $H_{t-1}\cap\{T_{\mathrm{wit}}\le t-1\}$,
$$
\Prob(E_t\mid \mathcal F_{t-1})\ \ge\ a_{\min}\gamma_0.
$$
Iterating the standard survival recursion
$\Prob(\tau>n)=\Prob(E_0^c)\prod_{t=1}^n \Prob(E_t^c\mid H_{t-1})$
from time $t_0+1$ onward on $\{T_{\mathrm{wit}}=t_0\}$ gives
$$
\Prob(\tau>n\mid T_{\mathrm{wit}}=t_0)
\le
\Prob(E_0^c)\prod_{t=t_0+1}^n (1-a_{\min}\gamma_0)
\le
\Prob(E_0^c)\exp\!\big(-a_{\min}\gamma_0\,(n-t_0)\big),
$$
proving (ii). Letting $n\to\infty$ yields (i).
For (iii), decompose
$$
\Prob(\tau>n)\le \Prob(T_{\mathrm{wit}}>n) + \E\!\left[\Prob(\tau>n\mid T_{\mathrm{wit}})\mathbf 1_{\{T_{\mathrm{wit}}\le n\}}\right],
$$
and apply (ii) inside the expectation.
\end{proof}

\section{Empirical Validation}\label{sec:empirical}

We validate the theoretical bounds on the CEC2017 benchmark suite at dimension $d=10$
using 51 independent runs per function with budget $10000d$ evaluations.
Full methodological details and extended results are provided in the supplementary materials:
Supplement~A (Morse bound validation) and Supplement~B (Kaplan--Meier survival analysis).

\subsection{Morse Bound Validation}

We validate the per-generation Morse bound (Theorem~\ref{thm:morse-hazard}) in the aggregate
over conditioned samples satisfying (C1)--(C3). Table~\ref{tab:morse-summary} reports
the empirical success rate $\hat p$ versus the mean theoretical bound $\bar a^{\mathrm{sum}}$
for representative functions.

\begin{table}[htbp]
\centering
\caption{Morse bound validation on CEC2017 ($d=10$, sum-of-products mode).
All functions satisfy $\hat p \ge \bar a^{\mathrm{sum}}$ (bound holds).}
\label{tab:morse-summary}
\begin{tabular}{@{}lccccc@{}}
\toprule
\textbf{Function} & $\varepsilon$ & \textbf{Cond.\ samples} & $\hat p$ & $\bar a^{\mathrm{sum}}$ & \textbf{Ratio} \\
\midrule
F1 (Shifted Sphere)    & 10 & 1071 & 0.632 & 0.072 & 8.8 \\
F4 (Shifted Rosenbrock) & 30 & 1071 & 0.944 & 0.069 & 13.7 \\
F11 (Shifted Hybrid)   & 10 & 395  & 0.691 & 0.001 & 599 \\
F22 (Composition)      & 30 & 42   & 0.643 & 0.053 & 12.1 \\
\bottomrule
\end{tabular}
\end{table}

The bound is validated across all tested functions with ratios ranging from $8.8$ (F1, tightest)
to $599$ (F11, most conservative). The conservatism arises primarily from worst-case estimates
of the concentration factor $c_{\mathrm{pair}}(t)$; see Supplement~A for decomposition analysis.

\subsection{Kaplan--Meier Survival Analysis}

We apply Kaplan--Meier estimation to characterize the distribution of first-hitting times
$\tau_{A_\varepsilon}$ across the benchmark suite. Table~\ref{tab:km-summary} summarizes
the survival analysis for representative functions at $\varepsilon = 10$.

\begin{table}[htbp]
\centering
\caption{Kaplan--Meier survival analysis on CEC2017 ($d=10$, $\varepsilon=10$).
$\hat S(n)$: survival probability at budget; $\bar\tau$: mean hitting time (among successes);
$\mathcal{C}$: clustering coefficient ($<1$ indicates temporal clustering).}
\label{tab:km-summary}
\begin{tabular}{@{}lcccccc@{}}
\toprule
\textbf{Function} & \textbf{Hits} & $\hat S(n_{\max})$ & $\bar\tau$ & $\sigma_\tau$ & $\mathcal{C}$ & \textbf{Regime} \\
\midrule
F1  & 51/51 & 0.00 & 186 & 16  & 0.15 & Clustered \\
F5  & 51/51 & 0.00 & 241 & 89  & 0.42 & Near-geometric \\
F11 & 51/51 & 0.00 & 312 & 178 & 0.08 & Strongly clustered \\
F22 & 2/51  & 0.96 & 892 & 241 & 0.004 & Intractable \\
\bottomrule
\end{tabular}
\end{table}

The analysis reveals three distinct empirical regimes:
(i) \emph{clustered success} (F1, F11), where hitting times concentrate in narrow bursts
($\mathcal{C} < 0.2$);
(ii) \emph{near-geometric tails} (F5), where a constant-hazard model is approximately valid
($\mathcal{C} \approx 0.4$); and
(iii) \emph{intractable cases} (F22), with survival probability $>0.9$ at the budget horizon.

\subsection{Failure Mode Identification}

The diagnostic framework distinguishes two qualitatively different failure modes
(Table~\ref{tab:failure-summary}):

\begin{table}[htbp]
\centering
\caption{Failure mode comparison at $\varepsilon = 1$ ($d=10$).
$\hat\gamma$: witness frequency; L3: CR-tail condition satisfaction rate.}
\label{tab:failure-summary}
\begin{tabular}{@{}lccccl@{}}
\toprule
\textbf{Function} & \textbf{Success} & $\hat\gamma$ & \textbf{L3 rate} & \textbf{Failure Mode} \\
\midrule
F1  & 100\% & 0.994 & 1.00 & --- \\
F11 & 73\%  & 0.161 & 0.20 & Exploitation (L3 collapse) \\
F22 & 4\%   & 0.977 & 1.00 & Exploration ($T_{\mathrm{wit}} = \infty$) \\
\bottomrule
\end{tabular}
\end{table}

\emph{Exploitation failure} (F11): The witness frequency degrades due to CR memory accumulating
low values during stagnation ($\hat\gamma \approx 0.16$), indicating the algorithm has found
the correct basin but cannot efficiently exploit it.

\emph{Exploration failure} (F22): The witness conditions are satisfied ($\hat\gamma \approx 1$)
but the algorithm fails to find the global basin within the budget. The theoretical bound is
vacuously true but uninformative since $T_{\mathrm{wit}} = \infty$ for most runs.

These results confirm that the hazard-rate framework provides meaningful, if conservative,
bounds on L-SHADE convergence, with conditions (L2)--(L3) serving as practical diagnostics
for algorithm performance.

%==================
\section{Conclusion}
We presented a hazard-based framework for analyzing first-hitting times in Differential Evolution,
with a particular focus on adaptive population-based algorithms such as L-SHADE.
By expressing survival probabilities as products of conditional first-hit hazards,
we obtained distribution-free identities and tail bounds that hold without independence
or stationarity assumptions.

For L-SHADE with current-to-$p$best/1 mutation, we introduced an explicit algorithmic
witness event under which the conditional hazard admits a deterministic lower bound
depending only on sampling rules and population structure.
This yields a natural decomposition of the survival hazard into a theoretical constant
and an empirical event frequency, clarifying why constant-hazard bounds are necessarily
conservative when success occurs in short clusters rather than at a homogeneous rate.

The Kaplan--Meier analysis on the CEC2017 benchmark suite confirmed this picture empirically.
Across functions and evaluation budgets, we observed three distinct regimes:
(i) strongly clustered success, where hitting times concentrate in narrow bursts;
(ii) approximately geometric tails, where a constant-hazard model is accurate; and
(iii) cases with no observed hits within the evaluation horizon.
In most successful cases, the tightest valid geometric envelope rate was one to two
orders of magnitude smaller than the empirical post-hit hazard, demonstrating that
worst-case tail bounds substantially underestimate typical performance.

The results suggest that the practical behavior of L-SHADE is governed by rare
configuration-dependent transitions rather than by steady per-generation progress.
From a theoretical standpoint, the hazard viewpoint provides a unifying language
that bridges rigorous tail bounds and empirical survival analysis.
From a practical standpoint, it offers diagnostic tools for distinguishing between
clustered, geometric, and heavy-tailed regimes.

Several directions for future work remain open.
First, sharper bounds could be obtained by characterizing the frequency of witness
events along typical trajectories.
Second, extending the analysis to other mutation strategies and adaptive DE variants
may reveal which design choices promote clustered versus geometric behavior.
Finally, combining the hazard framework with explicit global exploration mechanisms
may yield provable almost-sure success guarantees while preserving practical efficiency.

\section{Acknowledgments}

We thank the Discoverer Petascale Supercomputer at Sofia Tech Park for access to high-performance compute resources. We used also high-performance compute resources from the project BG-RRP-2.004-0002.

This study is funded by the European Union–NextGenerationEU through the National Recovery and Resilience Plan of the Republic of Bulgaria, project BG-RRP-2.004-0002, ``BiOrgaMCT''. 

\appendix
\section{A concrete product-space construction}\label{app:probspace}

This appendix provides a fully rigorous construction of a probability space that supports
adaptive population-based algorithms such as L-SHADE, where the distribution of the random
choices at generation $t$ may depend on the past state.

Let
$$
\Omega = [0,1]^{\mathbb{N}},\qquad
\mathcal{F} = \mathcal{B}([0,1])^{\otimes\mathbb{N}},
$$
where $\mathcal{B}([0,1])$ is the Borel $\sigma$-algebra, and $\mathcal{B}([0,1])^{\otimes\mathbb{N}}$
is the product $\sigma$-algebra generated by cylinder sets. Let $\Prob$ be the (countable) product
measure $\lambda^{\otimes\mathbb{N}}$, where $\lambda$ is Lebesgue measure on $[0,1]$.
Existence and uniqueness of $\Prob$ follow from the Kolmogorov extension theorem / existence of
countable product measures on standard Borel spaces \cite{Billingsley1995,Kallenberg2021}.

Denote the coordinate maps by
$$
U_n(\omega) = \omega_n,\qquad n\ge 1.
$$
Then $(U_n)_{n\ge 1}$ are i.i.d.\ $\mathrm{Unif}(0,1)$ under $\Prob$.

An adaptive DE method uses, at each generation $t\ge 0$, a finite collection of random choices:
index selections, continuous draws (for parameters and crossover), forced-dimension indices, etc.
Let $R_t$ denote the aggregate random input consumed at generation $t$.
Rather than postulating a fixed product distribution for $(R_t)$, we realize each $R_t$ as a
\emph{measurable function} of (i) the past algorithmic state and (ii) a fresh finite block of uniforms.

Formally, fix a partition of the uniform stream into finite blocks:
choose integers $1 = n_0 < n_1 < n_2 <  s$ and define the $t$-th block by
$$
U^{(t)} = (U_{n_t},U_{n_t+1},\dots,U_{n_{t+1}-1})\in[0,1]^{m_t},
\quad m_t=n_{t+1}-n_t<\infty.
$$
(Any choice with $m_t<\infty$ suffices; one may take $m_t$ large enough to cover all random
choices in generation $t$.)

Let $S_t$ be the full algorithmic state after generation $t$ (population, archive, memories, etc.).
We assume the algorithm admits a measurable update of the form
\begin{equation}\label{eq:state-recursion}
S_t = \Phi_t(S_{t-1}, U^{(t)}),\qquad t\ge 1,
\end{equation}
with a measurable initialization $S_0=\Phi_0(U^{(0)})$.
Measurability holds for standard DE components because:
(i) discrete index sampling can be implemented by thresholding uniforms,
(ii) continuous sampling from one-dimensional distributions can be implemented via inverse-CDF
or acceptance--rejection maps, and
(iii) mutation/crossover/boundary handling/selection are deterministic Borel maps on Euclidean space.
More generally, any measurable stochastic kernel on a standard Borel space can be realized by a
measurable function of a uniform variable (sometimes called a randomization/transfer lemma);
see \cite{Kallenberg2021,Dudley2002}.

Define the post-generation filtration by
$$
\mathcal{F}_t = \sigma(S_0,\dots,S_t),\qquad t\ge 0.
$$
From the recursion \eqref{eq:state-recursion}, $(S_t)$ is adapted to $(\mathcal{F}_t)$.
In particular, for any measurable target set $A\subset D$, the events
$$
\{(\exists i)\,x_i^{(t)}\in A\}
\quad\text{and}\quad
E_t=\Big\{(\forall s\le t-1)(\forall i)\,x_i^{(s)}\notin A\ \text{and}\ (\exists i)\,x_i^{(t)}\in A\Big\}
$$
are $\mathcal{F}_t$-measurable, as they depend only on $(S_0,\dots,S_t)$.
Therefore the hitting time $\tau_A=\inf\{t\ge 0:E_t\text{ occurs}\}$ is a stopping time
with respect to $(\mathcal{F}_t)$.

If one prefers to specify the algorithm directly via conditional distributions of $R_t$
given the past, one may work on the path space
$\Omega'=\Xi_0\times\Xi_1\times\Xi_2\times s$ with $\sigma$-algebra
$\mathcal{F}'=\mathcal{E}_0\otimes\mathcal{E}_1\otimes s$,
and define a probability measure by specifying an initial distribution on $\Xi_0$
together with a sequence of stochastic kernels for $R_t$ given the past.
Existence and uniqueness of the induced measure on $\Omega'$ then follow from the
Ionescu--Tulcea extension theorem \cite{IonescuTulcea1949,Kallenberg2021}.
This formulation is equivalent in spirit to the universal-uniform construction above, but
more directly reflects adaptive conditional sampling.

\nocite{*}

\bibliographystyle{plainnat}
\bibliography{references}

\end{document}